\documentclass[preprint,12pt,a4paper]{elsarticle}

\usepackage{amsmath,amssymb,amsthm}
\usepackage{booktabs,graphicx,array,enumitem,multirow,rotating,longtable,adjustbox,float}
\usepackage[ruled,vlined,linesnumbered]{algorithm2e}
\usepackage[table]{xcolor}
\usepackage{microtype}
\usepackage{pdflscape}
\usepackage{pifont}
\usepackage{dsfont}
\usepackage{bbm}
\usepackage{url}
\usepackage[hidelinks]{hyperref}
\biboptions{numbers,sort&compress}

\usepackage{titlesec}
\renewcommand{\thesection}{\arabic{section}}
\renewcommand{\thesubsection}{\thesection.\arabic{subsection}}
\renewcommand{\thesubsubsection}{\thesubsection.\arabic{subsubsection}}
\titleformat{\section}[hang]{\normalfont\large\bfseries}{\thesection.}{0.55em}{}
\titleformat{\subsection}[hang]{\normalfont\normalsize\bfseries}{\thesubsection.}{0.55em}{}
\titleformat{\subsubsection}[hang]{\normalfont\normalsize\itshape}{\thesubsubsection.}{0.55em}{}
\titlespacing*{\section}{0pt}{2.0ex plus 0.8ex minus 0.2ex}{0.9ex plus 0.2ex}
\titlespacing*{\subsection}{0pt}{1.6ex plus 0.6ex minus 0.2ex}{0.7ex plus 0.2ex}
\titlespacing*{\subsubsection}{0pt}{1.2ex plus 0.5ex minus 0.2ex}{0.5ex plus 0.2ex}

\makeatletter
\renewcommand{\figurename}{Figure}
\renewcommand{\tablename}{Table}
\renewcommand{\thetable}{\arabic{table}}
\def\fnum@figure{\figurename\nobreakspace\thefigure}
\def\fnum@table{\tablename\nobreakspace\thetable}
\makeatother

\newcommand{\statpara}[1]{\vspace{0.5mm}\noindent #1\par\vspace{0.5mm}}

\graphicspath{{Figures/}{./}}

\newcommand{\cmark}{\ding{51}}
\newcommand{\xmark}{\ding{55}}
\newcommand{\pmark}{--}

\newcommand{\R}{\mathbb{R}}
\newcommand{\N}{\mathbb{N}}

\newcommand{\Pset}{\mathcal{P}}

\newcommand{\Acal}{\mathcal{A}}
\newcommand{\argmin}{\operatorname{argmin}}

\DeclareMathOperator{\clip}{clip}

\newcommand{\abs}[1]{\lvert #1 \rvert}
\newcommand{\given}{\mid}

\newcommand{\Xcal}{\mathcal{X}}

\newtheorem{assumption}{Assumption}

\newtheorem{proposition}{Proposition}
\newtheorem{theorem}{Theorem}
\newtheorem{remark}{Remark}

\renewcommand{\arraystretch}{1.08}
\begin{document}

\begin{frontmatter}

\title{Coronavirus Optimization Algorithm: A Success-History Adaptive Evolutionary Framework with Archive-Assisted Search and Stagnation Recovery for Global Optimization}

\author[inst1]{Hari Mohan Pandey}
\ead{profharimohanpandey@gmail.com}
\affiliation[inst1]{Department of Computing & Engineering, Bournemouth University, UK}

\begin{abstract}
Population-based metaheuristics remain widely used for black-box global optimization, yet the rapid growth of metaphor-based optimizers has raised concerns about weak mathematical grounding, limited operator transparency, and insufficient validation against strong baselines. This paper proposes the Coronavirus Optimization Algorithm (COA), a severe acute respiratory syndrome coronavirus 2 (SARS-CoV-2)-inspired adaptive evolutionary optimizer for box-constrained continuous global optimization. COA does not model epidemiological infection or transmission dynamics; instead, selected coronavirus-related mechanisms are translated into explicit computational operators through a formal biological-to-optimization mapping. Spike--receptor binding is represented through elite-guided attraction, viral replication through trial-vector generation, antigenic drift through adaptive variation, immune evasion through stagnation recovery, and viral-load dynamics through population scheduling. The executable algorithm integrates opposition-based initialization, Differential Evolution (DE)/current-to-pbest/1 mutation, binomial crossover, an external archive, success-history adaptation of the mutation scale factor and crossover rate, population-size reduction, and opposition-based partial restart. The proposed method is evaluated on 29 Congress on Evolutionary Computation (CEC) 2017 benchmark functions at dimensions 10, 30, and 50 against 15 competitive optimizers, including established evolutionary, swarm-intelligence, and recent metaphor-based algorithms. The consolidated results show that COA obtains the best overall average Friedman rank at all tested dimensions, with ranks of 2.79, 2.86, and 2.53 for 10D, 30D, and 50D problems, respectively. Its strongest performance is observed on composition functions, where it achieves the best mean objective value across all composition benchmarks at 30D and 50D. Category-wise analysis also reveals that COA is not uniformly dominant, with stronger competing behaviour from Grey Wolf Optimizer (GWO) on hybrid functions. These findings demonstrate that COA is a competitive, compact, and transparent adaptive evolutionary optimizer, particularly effective for complex composition landscapes, while also identifying its limitations and future validation requirements for higher-dimensional search.
\end{abstract}

\begin{keyword}
Coronavirus Optimization Algorithm \sep Differential Evolution \sep success-history adaptation \sep global optimization \sep metaheuristics \sep CEC benchmark \sep swarm intelligence
\end{keyword}

\end{frontmatter}

\section{Introduction}
\label{sec:intro}

Continuous global optimization is a core problem in evolutionary computation, computational intelligence, engineering design, machine learning, control, and scientific computing. This paper considers the box-constrained minimization problem
\begin{equation}
    x^* \in \argmin_{x\in\Omega} f(x), 
    \qquad 
    \Omega = \prod_{j=1}^{D}[lb_j,ub_j] \subset \R^{D},
    \label{eq:global_opt}
\end{equation}
where $f:\Omega\to\R$ is a black-box objective function and $D$ is the problem dimension. In many real applications, $f$ may be nonlinear, multimodal, nonconvex, noisy, discontinuous, or expensive to evaluate. These properties make derivative-based and deterministic global methods difficult to apply, particularly when the search space becomes large or irregular.

Population-based metaheuristics address this setting by updating a set of candidate solutions using only objective-function evaluations. Differential Evolution (DE), Particle Swarm Optimization (PSO), Covariance Matrix Adaptation Evolution Strategy (CMA-ES), and adaptive DE variants such as Joint Adaptive Differential Evolution (JADE), Success-History based Adaptive Differential Evolution (SHADE), and Linear Success-History based Adaptive Differential Evolution with modified CMA-ES (LSHADE-SPACMA) have established strong foundations for derivative-free numerical optimization~\cite{DE,PSO,CMAES,JADE,SHADE,LSHADE,LSHADE-SPACMA}. More recent swarm and metaphor-based methods, including Grey Wolf Optimizer (GWO), Harris Hawks Optimization (HHO), Whale Optimization Algorithm (WOA), Salp Swarm Algorithm (SSA), Aquila Optimizer (AO), Runge--Kutta optimizer (RUN), Rime Optimizer (RIME), Dwarf Mongoose Optimization algorithm (DMO), and Crested Porcupine Optimizer (CPO), further expand the design space of population movement, attraction, exploration, and exploitation mechanisms~\cite{GWO,HHO,WOA,SSA,AO,RUN,RIME,DMO,CPO}. However, the performance of a new optimizer should be justified by its computational operators, ablation evidence, statistical behaviour, and reproducibility, rather than by metaphor alone.

This paper proposes the \textit{Coronavirus Optimization Algorithm} (COA), a severe acute respiratory syndrome coronavirus 2 (SARS-CoV-2)-inspired success-history adaptive evolutionary optimizer for problem~\eqref{eq:global_opt}. COA does not model infection, transmission, immunity, or public-health dynamics. Instead, selected SARS-CoV-2-related concepts are used as an organizing abstraction for optimization behaviour. Spike--receptor binding is translated into elite-guided attraction, viral replication into trial-vector generation, antigenic drift into adaptive parameter variation, immune evasion into opposition-based stagnation recovery, and viral-load dynamics into population-size reduction. The resulting executable algorithm combines opposition-based initialization, DE/current-to-pbest/1 mutation, binomial crossover, an external archive, success-history adaptation of the mutation factor $F$ and crossover rate $CR$, scheduled population reduction, and opposition-based partial restart.

The experimental study evaluates COA on 29 Congress on Evolutionary Computation (CEC) 2017 benchmark functions at $D=10$, $D=30$, and $D=50$ using 30 independent runs per configuration. The results show that COA achieves the lowest average Friedman rank across all tested dimensions, with $\bar{R}_{\text{COA}}(10)=2.79$, $\bar{R}_{\text{COA}}(30)=2.86$, and $\bar{R}_{\text{COA}}(50)=2.53$. The strongest and most consistent gains occur on composition functions, where COA benefits from the interaction of elite guidance, archive-assisted diversity, adaptive parameter control, and restart behaviour. The results are more mixed on hybrid functions, where GWO remains a strong competitor. Therefore, the paper does not claim universal superiority; instead, it presents COA as a competitive adaptive evolutionary optimizer with clear strengths, measurable limitations, and reproducible empirical evidence.

The main contributions are summarized as follows:
\begin{enumerate}[leftmargin=*]
    \item COA is introduced as a mathematically specified adaptive evolutionary optimizer for box-constrained global optimization.
    \item A SARS-CoV-2-inspired mapping is defined between biological concepts and concrete search operators, including elite-guided attraction, replication, adaptive variation, stagnation recovery, and population scheduling.
    \item The method integrates established evolutionary mechanisms, including current-to-pbest mutation, archive-assisted diversity, success-history adaptation, opposition-based restart, and population-size reduction.
    \item Formal properties and detailed proofs are included as main-paper sections, including feasibility preservation, best-so-far monotonicity, population-size validity, finite termination, and evaluation complexity.
    \item A consolidated evaluation against 15 baseline optimizers is reported on 29 CEC 2017 functions at $D\in\{10,30,50\}$ with 30 independent runs, using ranks, per-function results, convergence evidence, win counts, heatmaps, and category-wise analysis.
\end{enumerate}

The remainder of the paper is organized as follows. Section~2 reviews related work; Section~3 presents the SARS-CoV-2-inspired mapping; Section~4 presents the proposed COA algorithm together with implementation details and complexity analysis; Section~5 provides the mathematical foundation, formal properties, and proofs; Section~6 reports the experimental setup, results, and analysis; and Section~7 concludes the paper. 

\section{Related Work}
\label{sec:related}

Continuous black-box optimization has been extensively studied through evolutionary, swarm-intelligence, covariance-based, and recent metaphor-inspired methods. Classical DE generates new candidate solutions from scaled vector differences and remains a strong baseline because of its simplicity and robustness~\cite{DE}. Later adaptive DE variants, including JADE, SHADE, and LSHADE-SPACMA, improved this framework through current-to-pbest mutation, external archives, success-history parameter memories, population-size reduction, and covariance-related hybridization~\cite{JADE,SHADE,LSHADE,LSHADE-SPACMA}. COA follows this adaptive evolutionary line: its novelty lies in combining elite-guided mutation, archive-assisted diversity, success-history adaptation, opposition-based restart, and population scheduling within one compact search procedure.

Swarm and metaphor-based optimizers provide complementary search behaviours. PSO uses social learning, while GWO, HHO, WOA, and SSA model hierarchy, pursuit, encircling, and chain-based movement~\cite{PSO,GWO,HHO,WOA,SSA}. More recent methods such as AO, RUN, RIME, DMO, and CPO introduce additional physical, numerical, or animal-inspired update rules~\cite{AO,RUN,RIME,DMO,CPO}. These methods are useful comparators because they represent different exploration--exploitation biases. The present results show that AO is the closest overall competitor to COA, while GWO is particularly strong on hybrid functions.

Covariance-based search, represented by CMA-ES, adapts the shape of the sampling distribution and is often effective on ill-conditioned landscapes~\cite{CMAES}. COA does not perform explicit covariance learning; instead, it relies on scalar adaptation of $F$ and $CR$, archive diversity, and restart mechanisms. This design keeps the algorithm simple and computationally compact, but it also explains why strongly coupled hybrid functions remain challenging. The benchmarking literature recommends that such claims be supported by function-level, category-wise, and statistical evidence rather than aggregate rank alone~\cite{DERRAC,GARCIA}.

Table~\ref{tab:positioning} summarizes the operator-level positioning of COA. The method integrates adaptive parameter control, archive usage, restart, and population scheduling, but not covariance learning. This positioning clarifies that COA should be understood as an adaptive evolutionary optimizer with a coronavirus-inspired explanatory abstraction, rather than as a metaphor-only method.

\begin{table}[!htbp]
\centering
\caption{Operator-level positioning of COA.}
\label{tab:positioning}
\footnotesize
\renewcommand{\arraystretch}{1.12}
\setlength{\tabcolsep}{3.0pt}
\begin{adjustbox}{max width=\columnwidth}
\begin{tabular}{p{2.9cm}ccccc}
\toprule
\textbf{Optimizer family} 
& \textbf{Adapt.} 
& \textbf{Archive} 
& \textbf{Restart} 
& \textbf{Pop. sched.} 
& \textbf{Cov. learn.} \\
\midrule
Classical DE~\cite{DE} 
& \xmark & \xmark & \xmark & \xmark & \xmark \\
Adaptive DE~\cite{JADE,SHADE,LSHADE-SPACMA} 
& \cmark & \cmark & \xmark & \cmark & \pmark \\
Swarm intelligence~\cite{PSO,GWO,HHO,WOA,SSA} 
& \pmark & \xmark & \xmark & \xmark & \xmark \\
Covariance-based search~\cite{CMAES} 
& \cmark & \xmark & \xmark & \xmark & \cmark \\
Recent metaphor-based methods~\cite{AO,RUN,RIME,DMO,CPO} 
& \pmark & \xmark & \pmark & \pmark & \xmark \\
\textbf{COA} 
& \cmark & \cmark & \cmark & \cmark & \xmark \\
\bottomrule
\end{tabular}
\end{adjustbox}

\vspace{1mm}
\begin{minipage}{\columnwidth}
\footnotesize
\textit{Note:} Adapt. = adaptive parameter control; Pop. sched. = population-size scheduling; Cov. learn. = covariance or landscape-geometry learning; \cmark~= supported; \xmark~= not supported; \pmark~= partial or indirect support.
\end{minipage}
\end{table}

\statpara{Table~\ref{tab:positioning} positions COA against five optimizer families using five operator criteria. COA is the only listed method that simultaneously includes adaptive control, archive-assisted diversity, restart, and population scheduling, while it deliberately avoids covariance learning. This supports the classification of COA as an adaptive evolutionary optimizer rather than a purely metaphor-driven method.}

\section{COA Inspired by SARS-CoV-2}
\label{sec:covid_inspiration}

COA is inspired by selected behavioural stages associated with SARS-CoV-2, but it is not a virological, clinical, or epidemiological model. The inspiration is used as a structured computational metaphor for global optimization. A candidate solution is treated as a possible viral variant in the search space, the objective function is the selection pressure, and the optimization process imitates a sequence of entry, replication, variation, evasion, and population reduction. The purpose of this section is to make the inspiration explicit and to connect each SARS-CoV-2-inspired concept to the mathematical operator used in COA.

Let the feasible search space be
\begin{equation}
    \Omega = \prod_{j=1}^{D}[lb_j,ub_j],
\end{equation}
and let each candidate solution be represented by
\begin{equation}
    x_i=(x_{i,1},x_{i,2},\ldots,x_{i,D})\in\Omega.
\end{equation}
For a minimization problem, a lower objective value corresponds to a more successful candidate. Hence, the evolutionary pressure in COA is expressed through the comparison of objective values rather than through a biological infection process.

\begin{table}[!htbp]
\centering
\caption{SARS-CoV-2-inspired concepts and their roles in COA.}
\label{tab:mapping}
\footnotesize
\renewcommand{\arraystretch}{1.12}
\setlength{\tabcolsep}{3.0pt}
\begin{adjustbox}{max width=\columnwidth}
\begin{tabular}{p{2.35cm}p{2.25cm}p{3.05cm}}
\toprule
\textbf{Concept} & \textbf{Search role} & \textbf{COA implementation} \\
\midrule
Spike--receptor binding & Attraction toward promising regions & Elite-guided current-to-pbest mutation~\cite{JADE} \\
Viral replication & Candidate generation & Mutation with binomial crossover~\cite{DE} \\
Antigenic drift & Controlled variation & Success-history adaptation of $F$ and $CR$~\cite{SHADE} \\
Immune evasion & Stagnation escape & Opposition-based partial restart~\cite{OPPOSITION,RAHNAMAYAN} \\
Viral-load dynamics & Exploration--exploitation transition & Population-size reduction~\cite{LSHADE} \\
\bottomrule
\end{tabular}
\end{adjustbox}
\end{table}

\statpara{Table~\ref{tab:mapping} provides a one-to-one translation from SARS-CoV-2-inspired concepts to executable optimization operators. The mapping clarifies that each metaphorical component has a concrete algorithmic role: attraction, trial generation, parameter adaptation, stagnation recovery, or population scheduling.}

Table~\ref{tab:mapping} summarizes the complete metaphor-to-mathematics translation. The key point is that the proposed optimizer does not rely on metaphor alone. Each biological idea is converted into an explicit operator that either changes the candidate solution, controls the population, adapts a parameter, or recovers diversity.

\subsection{Biological mechanisms of SARS-CoV-2 and transformation into metaheuristic operators}
\label{subsec:biology_to_metaheuristic}

Let $\mathcal{B} = \{b_1,b_2,b_3,b_4,b_5\}$ denote the set of SARS-CoV-2 biological mechanisms and $\mathcal{O} = \{o_1,o_2,o_3,o_4,o_5\}$ the corresponding optimization operators. Define the transformation mapping $\varphi: \mathcal{B} \to \mathcal{O}$ as a bijection that translates each biological mechanism into an explicit computational operator. Table~\ref{tab:mapping} gives the complete mapping. SARS-CoV-2, the betacoronavirus responsible for the coronavirus disease 2019 (COVID-19) pandemic, exhibits a structured infection cycle with five main phases: (i) host-cell attachment via spike--angiotensin-converting enzyme 2 (ACE2) receptor binding ($b_1$), (ii) genome release and replication via RNA-dependent RNA polymerase ($b_2$), (iii) mutation-driven antigenic variation ($b_3$), (iv) immune evasion through epitope change ($b_4$), and (v) viral load dynamics over the infection timeline ($b_5$). Each $b_i$ maps to $o_i = \varphi(b_i)$, where $\varphi$ is defined as follows.

\noindent\textbf{1. Spike--receptor binding $\mapsto$ elite-guided attraction ($\varphi(b_1) = o_1$).}
The spike protein's receptor-binding domain recognizes and attaches to the angiotensin-converting enzyme 2 (ACE2) receptor on the host cell surface~\cite{SARSCoV2Structure}. In optimization, this selectivity maps to a mutation operator that attracts $x_i$ toward an elite solution $x_{pbest}$. Formally,
\begin{equation}
    o_1(x_i \given \Pset_t, \Acal_t) = x_i + F_i(x_{pbest} - x_i) + F_i(x_{r_1} - \hat{x}_{r_2}),
    \label{eq:phi_spike}
\end{equation}
where $x_{pbest}$ is sampled uniformly from the top $p$-fraction of $\Pset_t$, $x_{r_1}\sim U(\Pset_t)$, $\hat{x}_{r_2}\sim U(\Pset_t\cup\Acal_t)$, and $F_i$ is the mutation factor.

\noindent\textbf{2. Replication $\mapsto$ trial-vector generation ($\varphi(b_2) = o_2$).}
The replication--transcription complex produces new genomic RNA~\cite{SARSCoV2Replication}. In COA, replication corresponds to generating a trial vector $u_i$ via binomial crossover of $v_i = o_1(x_i)$ with parent $x_i$:

\begin{equation}
\begin{aligned}
o_2(v_i,x_i) &= u_i,\\[-1mm]
u_{i,j} &=
\begin{cases}
v_{i,j}, & r_j \leq CR_i \ \lor\ j=j_{\mathrm{rand}},\\
x_{i,j}, & \text{otherwise}.
\end{cases}
\end{aligned}
\label{eq:phi_replication}
\end{equation}

with selection pressure $S(u_i, x_i) = \argmin\{f(\Pi_\Omega(u_i)), f(x_i)\}$ enforcing greedy retention.

\noindent\textbf{3. Antigenic drift $\mapsto$ adaptive parameter control ($\varphi(b_3) = o_3$).}
RNA-dependent RNA polymerase lacks proofreading, driving antigenic drift~\cite{SARSCoV2Mutation}. For optimization, this maps to success-history adaptation:

\begin{equation}
\begin{aligned}
o_3:\quad
M_{F,k} &\leftarrow 
\frac{\sum_{F_s\in S_F} F_s^2}
     {\sum_{F_s\in S_F} F_s},\\[-1mm]
M_{CR,k} &\leftarrow 
\frac{1}{|S_{CR}|}\sum_{CR_s\in S_{CR}} CR_s .
\end{aligned}
\label{eq:phi_drift}
\end{equation}

where $S_F, S_{CR}$ are the sets of $F_i, CR_i$ values that produced successful offspring in the current generation.

\noindent\textbf{4. Immune evasion $\mapsto$ stagnation recovery ($\varphi(b_4) = o_4$).}
SARS-CoV-2 evades immune pressure through epitope variation~\cite{SARSCoV2Immune}. In COA, stagnation recovery applies opposition-based mapping to the weakest $k$ individuals:
\begin{equation}
    o_4(x_i) = x'_i,\quad x'_{i,j} = \begin{cases}
        lb_j + ub_j - x_{i,j}, & \text{if } \rho_j < 0.5,\\
        x_{i,j}, & \text{otherwise},
    \end{cases}
    \label{eq:phi_evasion}
\end{equation}
for $\rho_j\sim U(0,1)$, applied only when $\Delta f_{\text{best}} < \epsilon_{\text{stag}}$ over $\tau_{\text{stag}}$ evaluations.

\noindent\textbf{5. Viral-load dynamics $\mapsto$ population scheduling ($\varphi(b_5) = o_5$).}
Viral load follows a rise--peak--decline trajectory~\cite{SARSCoV2ViralLoad}. This maps to nonlinear population-size reduction:

\begin{equation}
\begin{aligned}
o_5(NP_0,NP_{\min},t)
&=\Bigg\lfloor NP_0-(NP_0-NP_{\min}) \\
&\quad \times
\left(\frac{FES_t}{MAX\_FES}\right)^{0.7}
\Bigg\rfloor .
\end{aligned}
\label{eq:phi_viralload}
\end{equation}

which preserves early diversity and concentrates later evaluations.

The transformation from biology to algorithm is not literal; SARS-CoV-2 infection dynamics are far more complex than any computational abstraction. However, the biological narrative provides an intuitive organizational framework for five interconnected optimization mechanisms that together form COA. The following subsections describe each mechanism in mathematical detail.

\subsection{Viral-load dynamics: population-size reduction}
In COVID-19 SARS-CoV-2, viral load changes across infection stages. COA abstracts this idea as a change in the number of active candidate solutions over the search process. A larger population is retained during the early stage to improve global exploration, while a smaller population is used later to concentrate the function-evaluation budget around promising regions.

\begin{figure}[!htbp]
\centering
\includegraphics[width=\columnwidth]{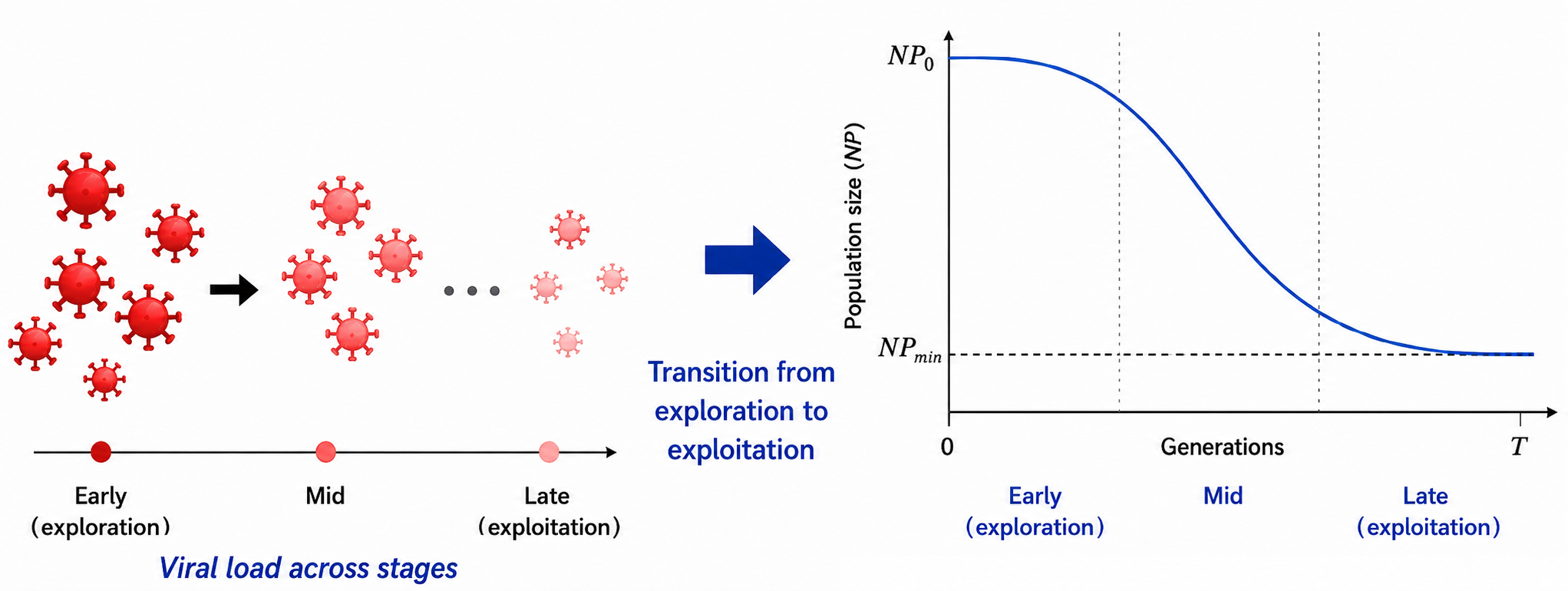}
\caption{Viral-load dynamics as population-size reduction in COA. The left part illustrates the conceptual decrease in viral load across early, middle, and late stages. The right part gives the optimization role: the active population size is gradually reduced from $NP_0$ to $NP_{min}$ as the number of function evaluations increases. This reduction controls the shift from broad exploration to focused exploitation.}
\label{fig:viral_load}
\end{figure}

\statpara{Figure~\ref{fig:viral_load} has no empirical sample statistics; its purpose is to show the mathematical scheduling idea behind COA. Population size decreases from $NP_0$ to $NP_{min}$ as the evaluation counter increases, which turns broad early exploration into more focused late exploitation.}

The mathematical representation of this mechanism is
\begin{equation}
    NP(t)=\left\lfloor NP_0-(NP_0-NP_{min})\left(\frac{FES_t}{MAX\_FES}\right)^{0.7}\right\rfloor,
    \label{eq:covid_population_reduction}
\end{equation}
where $NP(t)$ is the active population size at generation or evaluation stage $t$, $NP_0$ is the initial population size, $NP_{min}$ is the minimum population size, $FES_t$ is the number of function evaluations already consumed, and $MAX\_FES$ is the evaluation budget. The exponent $0.7$ makes the reduction nonlinear: the algorithm preserves diversity early and becomes increasingly exploitative as the run progresses. In the context of the experimental results, this mechanism is important because COA keeps enough candidates to explore multimodal and composition landscapes at the beginning, while later dedicating more evaluations to refining fewer competitive individuals. The nonlinear reduction schedule follows the same principle used in adaptive population-sizing DE frameworks~\cite{LSHADE}.

Figure~\ref{fig:viral_load} illustrates the biological motivation and optimization role of the population-size reduction strategy used in COA. In the biological analogy, the early stage is represented by a high viral load, where many viral particles coexist and spread widely. This corresponds to the exploration phase of the algorithm, in which a large initial population size $NP_0$ is maintained to sample diverse regions of the search space and reduce the risk of premature convergence. As the process progresses to the middle stage, the viral load gradually decreases, reflecting a controlled reduction in the number of active candidate solutions. This stage balances exploration and exploitation by preserving sufficient diversity while increasingly directing the search toward promising areas. In the late stage, only a small number of viral particles remain, representing the exploitation phase. The population size approaches the minimum value $NP_{min}$, allowing the algorithm to concentrate computational effort on local refinement around high-quality solutions.

The right-hand plot formalizes this transition as a population-size reduction schedule. The active population is initially close to $NP_0$, then decreases progressively with the number of generations or function evaluations, and finally stabilizes near $NP_{min}$. This dynamic reduction supports the main search philosophy of COA: broad global exploration is encouraged at the beginning, while focused exploitation becomes dominant near the end of the optimization process. Therefore, the viral-load metaphor provides an intuitive explanation for how COA adaptively controls search diversity, convergence pressure, and computational resource allocation across different optimization stages.

\subsection{Spike--receptor binding: elite-guided attraction}
SARS-CoV-2 enters host cells through spike--receptor binding. COA abstracts this biological entry mechanism as attraction toward high-quality regions of the search space. A current solution $x_i$ is pulled toward an elite solution $x_{pbest}$ while still receiving a diversity-preserving differential perturbation from another population member and the archive.

\begin{figure}[!htbp]
\centering
\includegraphics[width=\columnwidth]{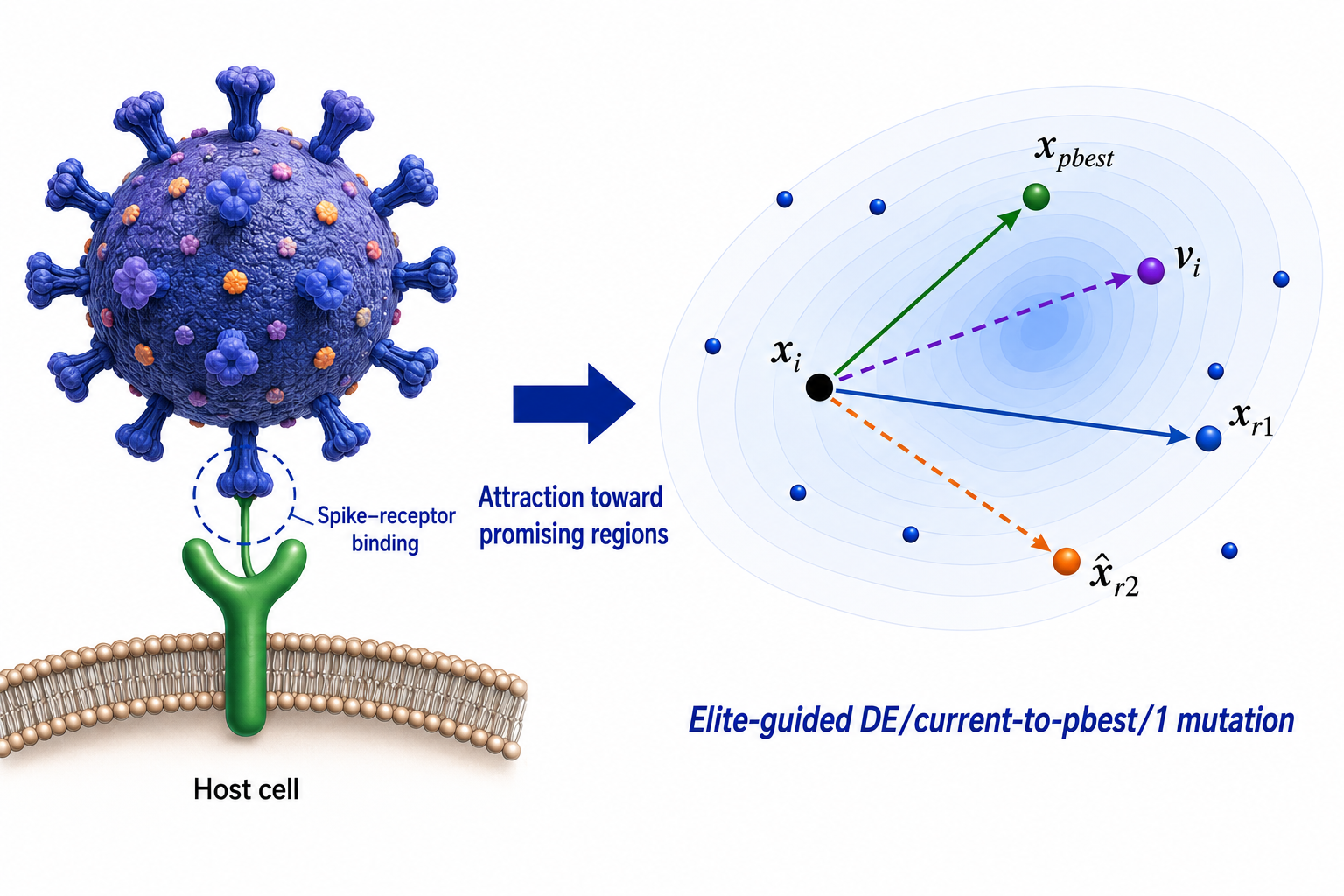}
\caption{Spike--receptor binding as elite-guided attraction in COA. The biological motif of spike--receptor binding is translated into the search movement from the current solution $x_i$ toward an elite solution $x_{pbest}$. The additional difference vector $x_{r_1}-\hat{x}_{r_2}$ prevents purely greedy movement and helps maintain exploratory mobility.}
\label{fig:spike_binding}
\end{figure}

\statpara{Figure~\ref{fig:spike_binding} illustrates the vector statistics used in the mutation step: the current vector, an elite $p$-best vector, and a difference vector sampled from the population/archive. The construction balances exploitation toward high-quality solutions with exploratory displacement from randomly sampled candidates.}

The corresponding mutation operator is represented as
\begin{equation}
    v_i=x_i+F_i(x_{pbest}-x_i)+F_i(x_{r_1}-\hat{x}_{r_2}),
    \label{eq:covid_spike_mutation}
\end{equation}
where $v_i$ is the mutant vector, $F_i$ is the mutation factor, $x_{pbest}$ is sampled from the top-ranked population subset, $x_{r_1}$ is a randomly selected population member, and $\hat{x}_{r_2}$ is sampled from either the population or the external archive. The term $F_i(x_{pbest}-x_i)$ represents receptor-like attraction toward a promising region. The term $F_i(x_{r_1}-\hat{x}_{r_2})$ introduces directional variation using current and historical search information. This combination is central to COA because it avoids two extremes: random wandering without guidance and premature collapse around a single elite candidate. The mutation form inherits the elite-guided search direction from JADE's current-to-pbest design~\cite{JADE}, extended with archive-assisted diversity.

Figure~\ref{fig:spike_binding} explains how COA converts the biological idea of spike--receptor binding into an elite-guided search mechanism. The virus binding to a host receptor represents the attraction of a candidate solution $x_i$ toward a promising elite solution $x_{pbest}$. At the same time, the difference term $x_{r_1}-\hat{x}_{r_2}$ introduces directional variation, avoiding overly greedy convergence and preserving search diversity. This mechanism allows COA to move toward high-quality regions while retaining enough exploratory flexibility.

\subsection{Viral replication: trial-vector generation}
SARS-CoV-2 replication produces new viral copies. In COA, replication is interpreted as the generation of new candidate solutions. The mutant vector created by Eq.~\eqref{eq:covid_spike_mutation} is not automatically accepted. Instead, it is combined with the parent solution through binomial crossover to create a trial vector.

\begin{figure}[!htbp]
\centering
\includegraphics[width=\columnwidth]{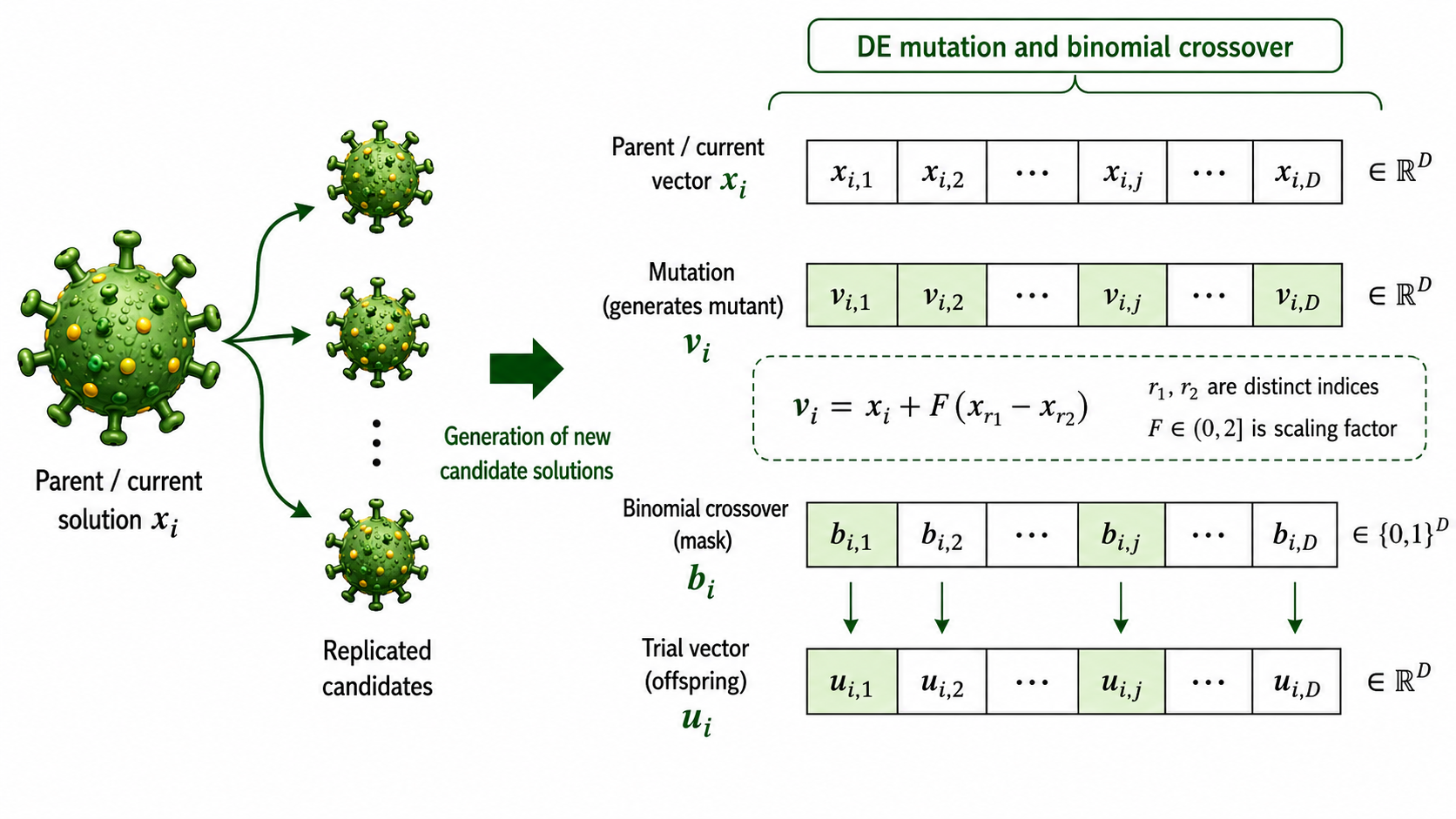}
\caption{Viral replication as mutation and binomial crossover. The figure describes how COA generates a new candidate solution. First, mutation creates a mutant vector $v_i$. Then, binomial crossover forms the trial vector $u_i$ by selecting some dimensions from the mutant vector and the remaining dimensions from the parent vector $x_i$.}
\label{fig:viral_replication}
\end{figure}

\statpara{Figure~\ref{fig:viral_replication} explains how a single parent generates a trial vector. The statistical role of crossover is dimension-wise mixing controlled by $CR$, while the mutation magnitude is controlled by $F$; both parameters are later adapted from successful trials.}

For each dimension $j$, the crossover operation is
\begin{equation}
 u_{i,j}=\begin{cases}
 v_{i,j}, & \text{if } rand_j\leq CR_i \text{ or } j=j_{rand},\\
 x_{i,j}, & \text{otherwise},
 \end{cases}
 \label{eq:covid_replication_crossover}
\end{equation}
where $CR_i$ is the crossover rate and $j_{rand}$ guarantees that at least one dimension is inherited from the mutant vector. The trial vector is then repaired to remain inside the feasible domain and evaluated by the objective function. Selection is greedy:
\begin{equation}
 x_{i,t+1}=\begin{cases}
 \Pi_{\Omega}(u_i), & \text{if } f(\Pi_{\Omega}(u_i))<f(x_{i,t}),\\
 x_{i,t}, & \text{otherwise}.
 \end{cases}
 \label{eq:covid_replication_selection}
\end{equation}
Thus, replication in COA means producing and testing offspring under objective-function selection. This mechanism directly supports the main empirical finding of the paper: COA is strong when repeated generation, evaluation, and selective retention of trial vectors can exploit multiple promising basins, especially on composition functions.

Figure~\ref{fig:viral_replication} shows how COA models viral replication as a candidate-generation process. The parent solution $x_i$ first produces a mutant vector $v_i$ through differential mutation, introducing variation into the search. Binomial crossover then combines selected components of $v_i$ with components of the parent vector to form the trial solution $u_i$. This mechanism enables COA to generate diverse offspring while preserving useful information from the current solution.

\subsection{Antigenic drift: adaptive parameter control}
SARS-CoV-2 variants change over time through mutation and antigenic drift. COA uses this idea to represent adaptive search behaviour. Rather than fixing the mutation factor $F$ and crossover rate $CR$, the algorithm learns useful parameter values from successful trials and stores them in success-history memories.

\begin{figure}[!htbp]
\centering
\includegraphics[width=\columnwidth]{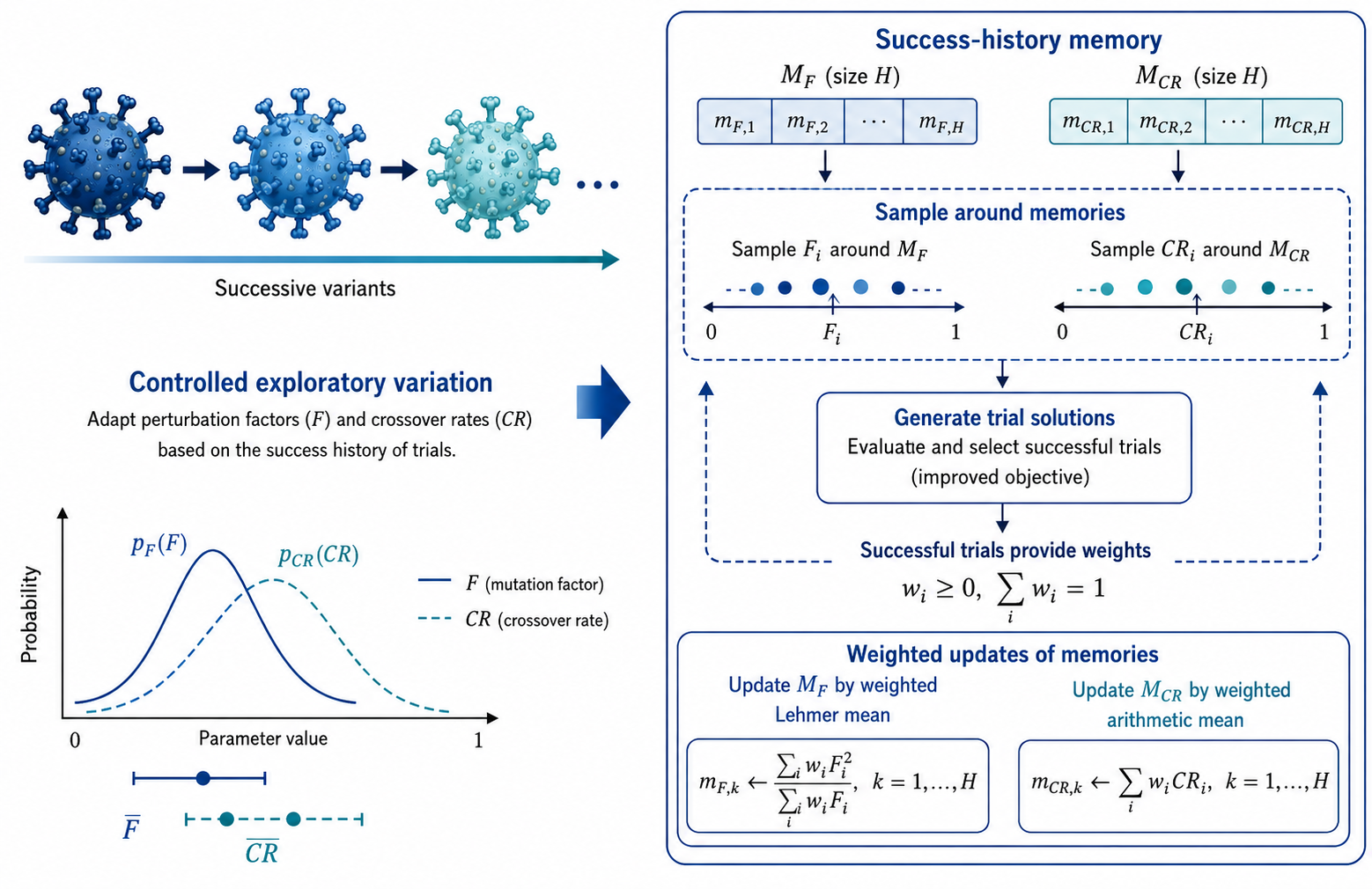}
\caption{Antigenic drift as success-history adaptation of $F$ and $CR$. The figure shows the adaptive loop used in COA. Successful trials contribute values to the memories $M_F$ and $M_{CR}$, and future mutation and crossover parameters are sampled around these memories. This creates controlled exploratory variation rather than static or manually tuned parameter behaviour.}
\label{fig:antigenic_drift}
\end{figure}

\statpara{Figure~\ref{fig:antigenic_drift} summarizes the feedback loop for $F$ and $CR$. Successful parameter samples update the memories $M_F$ and $M_{CR}$, so the distribution used in future generations shifts toward values that recently produced objective improvement.}

Let $M_F$ and $M_{CR}$ be memory arrays of length $H$. For a selected memory index $k$, COA samples
\begin{align}
    F_i &= \operatorname{clip}\left(M_F(k)+0.1\mathcal{N}(0,1),0.1,0.9\right),
    \label{eq:covid_F_sampling}\\
    CR_i &= \operatorname{clip}\left(M_{CR}(k)+0.1\mathcal{N}(0,1),0,1\right).
    \label{eq:covid_CR_sampling}
\end{align}
If $S_F$ and $S_{CR}$ are the successful mutation and crossover values collected in the current generation, the memories are updated as
\begin{align}
    M_F(k) &\leftarrow \frac{\sum_{F_s\in S_F} F_s^2}{\sum_{F_s\in S_F} F_s},
    \label{eq:covid_MF_update}\\
    M_{CR}(k) &\leftarrow \frac{1}{|S_{CR}|}\sum_{CR_s\in S_{CR}} CR_s.
    \label{eq:covid_MCR_update}
\end{align}
The $M_F$ update is a Lehmer-mean style update that gives stronger influence to larger successful mutation factors, while the $M_{CR}$ update captures the average crossover behaviour that produced improvements. The Lehmer-mean weighting for $M_F$ follows the SHADE parameter-adaptation framework~\cite{SHADE}, and the overall adaptation loop reduces sensitivity to manually chosen initial $F$ and $CR$ values compared to static-parameter DE~\cite{BREst}. This parameter adaptation is the mathematical counterpart of antigenic drift: the population changes its exploratory pattern based on successful experience.

Figure~\ref{fig:antigenic_drift} illustrates how COA uses antigenic drift as a metaphor for adaptive parameter control. Successful trial solutions update the memories $M_F$ and $M_{CR}$, which store effective mutation factors and crossover rates from previous generations. New values of $F$ and $CR$ are then sampled around these memories to guide future candidate generation. This feedback loop enables COA to maintain controlled variation, adapt its search behaviour over time, and avoid relying on fixed parameter settings.

\subsection{Immune evasion: opposition-based partial restart}
SARS-CoV-2 can persist by evading immune pressure. In COA, immune evasion is interpreted as recovery from search stagnation. When the algorithm does not improve for a defined interval, it does not restart the whole population. Instead, it identifies weak individuals and partially replaces them using opposition-based mapping.

\begin{figure}[!htbp]
\centering
\includegraphics[width=\columnwidth]{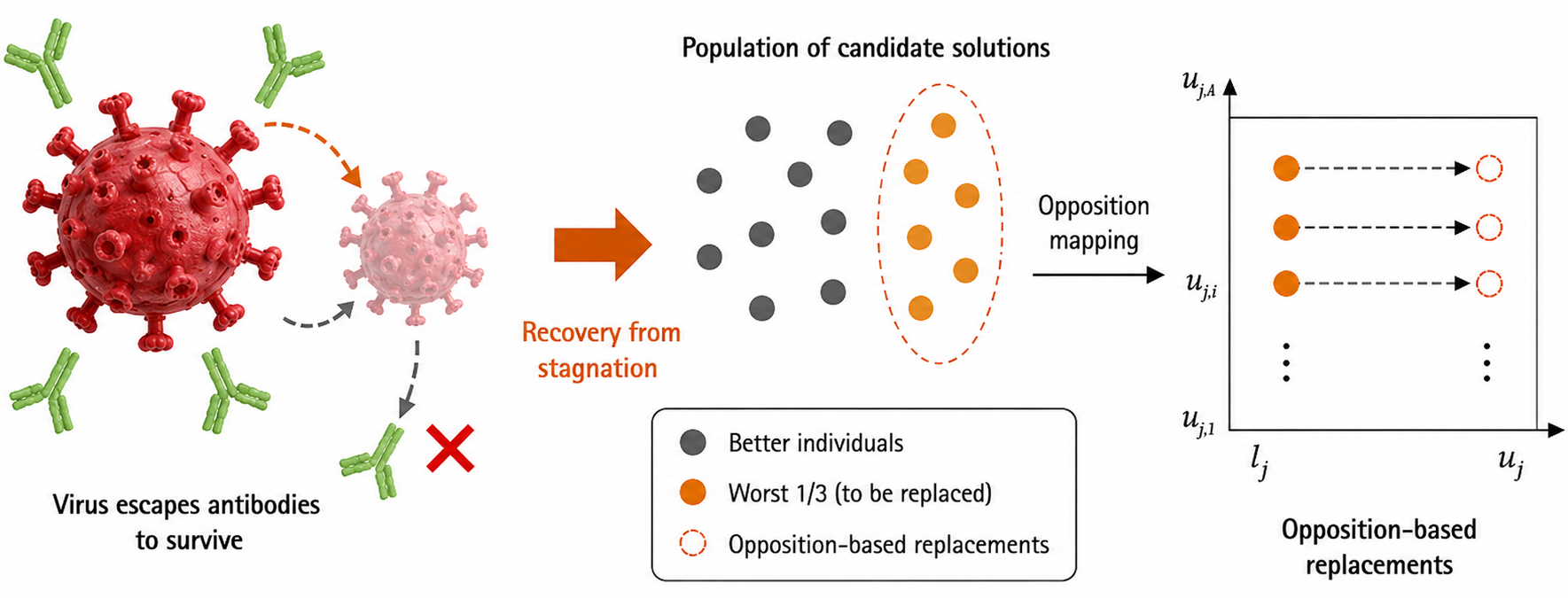}
\caption{Immune evasion as opposition-based partial restart. The figure shows how COA replaces the weakest part of the population when stagnation occurs. Poor candidates are mapped to opposition-based alternatives inside the bounded domain, while stronger candidates and the best-so-far solution are preserved.}
\label{fig:immune_evasion}
\end{figure}

\statpara{Figure~\ref{fig:immune_evasion} describes the stagnation-recovery mechanism. When improvement stalls, COA targets the weakest portion of the population and replaces it with opposition-based candidates, which increases diversity without discarding the best-so-far solution.}

For a weak individual $x_i$, the opposite coordinate is
\begin{equation}
    x'_{i,j}=lb_j+ub_j-x_{i,j}, \qquad j=1,\ldots,D.
    \label{eq:covid_opposition_restart}
\end{equation}
In the implemented partial restart, only a subset of dimensions may be replaced, which can be written as
\begin{equation}
    x^{new}_{i,j}=\begin{cases}
    x'_{i,j}, & \text{if } \rho_j<0.5,\\
    x_{i,j}, & \text{otherwise},
    \end{cases}
    \label{eq:covid_partial_restart}
\end{equation}
where $\rho_j\sim U(0,1)$. This preserves some information from the existing individual while injecting a directionally different candidate into the search. The best-so-far solution and the success-history memories are not discarded. This explains why the restart can improve robustness without destroying the accumulated search knowledge that is needed for later exploitation. The opposition-based restart mechanism draws on the principle that opposite points in the bounded domain can provide useful diversity when the population is stagnating~\cite{OPPOSITION,RAHNAMAYAN}.

Figure~\ref{fig:immune_evasion} illustrates COA’s stagnation-recovery mechanism inspired by immune evasion. When the search begins to stagnate, the weakest fraction of the population is replaced using opposition-based candidates generated within the problem bounds. This partial restart reintroduces diversity without discarding strong solutions or the best-so-far individual, helping the algorithm escape local optima while preserving convergence progress.

Overall, the SARS-CoV-2 inspiration gives COA a coherent design narrative: viral-load dynamics controls population pressure, spike--receptor binding gives elite-guided attraction, replication generates trial vectors, antigenic drift adapts control parameters, and immune evasion restores diversity after stagnation. The experimental results should therefore be interpreted as evidence for this operator combination, not as evidence for biological fidelity.

\section{Proposed COA Algorithm}
\label{sec:method}

\subsection{Optimization problem}
COA solves the bounded black-box minimization problem~\eqref{eq:global_opt}, restated here for completeness:
\begin{equation}
    \min_{x\in\Omega} f(x), \qquad \Omega=\prod_{j=1}^{D}[lb_j,ub_j] \subset \R^{D},
    \label{eq:bounded_problem}
\end{equation}
where $f:\R^{D}\to\R$ is the objective function, $D\in\N$ is the dimension, and $lb_j, ub_j\in\R$ with $lb_j < ub_j$ define the feasible interval for each decision variable $x_j$. The method requires only that $f$ be evaluable at any $x\in\Omega$; it does not require $\nabla f$, convexity, differentiability, or separability.

At generation $t\in\N_0$, the population is the multiset
\begin{equation}
    \Pset_t = \{x_{1,t}, x_{2,t}, \ldots, x_{NP_t,t}\}, \qquad x_{i,t}\in\Omega,\; |\Pset_t| = NP_t,
    \label{eq:population}
\end{equation}
where $NP_t\in\N$ is the current population size satisfying $NP_{\min} \leq NP_t \leq NP_0$. The best-so-far solution up to generation $t$ is
\begin{equation}
    g_t = \argmin_{x\in \bigcup_{\tau=0}^{t}\Pset_\tau} f(x), \qquad f(g_t) = \min_{\tau\leq t}\; \min_{x\in\Pset_\tau} f(x).
    \label{eq:best_so_far}
\end{equation}
COA terminates when the function-evaluation counter (FES) reaches the maximum evaluation budget $MAX\_FES\in\N$.

\subsection{Opposition-based initialization}
Let $\Xcal = \{x_i\}_{i=1}^{NP_0}$ be a set of $NP_0$ candidate solutions sampled independently from $U(\Omega)$. For each $x_i$, COA constructs its opposition point $\tilde{x}_i$ componentwise as
\begin{equation}
    \tilde{x}_{i,j} = lb_j + ub_j - x_{i,j}, \qquad j = 1,\ldots,D.
    \label{eq:opposition}
\end{equation}
Define $\tilde{\Xcal} = \{\tilde{x}_i\}_{i=1}^{NP_0}$. The initial population is
\begin{equation}
    \Pset_0 = \argmin_{NP_0\text{ elements}} \bigcup_{x\in\Xcal\cup\tilde{\Xcal}} f(x),
    \label{eq:init_selection}
\end{equation}
i.e., the $NP_0$ fittest individuals from $\Xcal\cup\tilde{\Xcal}$. This provides better initial domain coverage than random sampling alone~\cite{OPPOSITION,RAHNAMAYAN}. In COA, $NP_0 = 30$ is fixed across all $D\in\{10,30,50\}$.

\subsection{Success-history parameter adaptation}
COA maintains two memory vectors $M_F, M_{CR} \in \R^{H}$ of length $H\in\N$, initialized as $M_F^{(0)}(k) = 0.8$ and $M_{CR}^{(0)}(k) = 0.7$ for all $k=1,\ldots,H$. Let $k_i\sim U(\{1,\ldots,H\})$ be a memory index sampled independently for each individual $i$. The mutation factor and crossover rate are generated as
\begin{align}
    F_i &= \clip\left(M_F(k_i) + 0.1\,\mathcal{N}(0,1),\; 0.1,\; 0.9\right),
    \label{eq:F_sampling}\\
    CR_i &= \clip\left(M_{CR}(k_i) + 0.1\,\mathcal{N}(0,1),\; 0,\; 1\right),
    \label{eq:CR_sampling}
\end{align}
where $\clip(z, a, b) = \max\{a, \min\{b, z\}\}$. After each generation, define $S_F = \{F_i \mid f(u_i) < f(x_i)\}$ and $S_{CR} = \{CR_i \mid f(u_i) < f(x_i)\}$ as the sets of parameter values that produced successful trial vectors. The memory at index $k$ (rotated sequentially) is updated as
\begin{align}
    M_F(k) &\leftarrow \frac{\sum_{F_s \in S_F} F_s^2}{\sum_{F_s \in S_F} F_s},
    \label{eq:MF_update}\\
    M_{CR}(k) &\leftarrow \frac{1}{|S_{CR}|}\sum_{CR_s \in S_{CR}} CR_s,
    \label{eq:MCR_update}
\end{align}
where the $M_F$ update is the Lehmer (harmonic) mean that biases toward larger successful $F$ values. This feedback loop reduces sensitivity to manually chosen initial parameters~\cite{SHADE,BREst}.

\subsection{Mutation, archive, crossover, and selection}
The primary mutation operator is DE/current-to-pbest/1 with archive:
\begin{equation}
    v_i = x_i + F_i\,(x_{pbest} - x_i) + F_i\,(x_{r_1} - \hat{x}_{r_2}),
    \label{eq:mutation}
\end{equation}
where $x_{pbest}$ is sampled uniformly from the top $\lceil p\cdot NP_t\rceil$ individuals of $\Pset_t$ ($p\in(0,1]$), $x_{r_1}\sim U(\Pset_t)$, and $\hat{x}_{r_2}\sim U(\Pset_t \cup \Acal_t)$ with $\Acal_t$ being the external archive. The first difference term $F_i(x_{pbest} - x_i)$ provides elite-guided attraction; the second term $F_i(x_{r_1} - \hat{x}_{r_2})$ preserves exploratory variation using current and historical directions.

Binomial crossover generates a trial vector $u_i\in\R^{D}$ as
\begin{equation}
 u_{i,j} = \begin{cases}
 v_{i,j}, & \text{if } rand_j \leq CR_i \;\lor\; j = j_{rand},\\[2pt]
 x_{i,j}, & \text{otherwise},
 \end{cases}
 \label{eq:crossover}
\end{equation}
where $j_{rand}\sim U(\{1,\ldots,D\})$ guarantees $u_i \neq x_i$. Boundary violations are repaired by projection onto $\Omega$:
\begin{equation}
    \Pi_{\Omega}(z)_j = \min\{ub_j,\; \max\{lb_j,\; z_j\}\},\quad j=1,\ldots,D.
    \label{eq:projection}
\end{equation}
Greedy selection then determines the survivor:
\begin{equation}
 x_{i,t+1} = \begin{cases}
 \Pi_{\Omega}(u_i), & \text{if } f(\Pi_{\Omega}(u_i)) < f(x_{i,t}),\\[2pt]
 x_{i,t}, & \text{otherwise}.
 \end{cases}
 \label{eq:selection}
\end{equation}
When a trial replaces its parent, the displaced parent $x_{i,t}$ is appended to $\Acal_t$. The archive size is bounded by $|\Acal_t| \leq NP_t$; when full, elements are removed uniformly at random. The archive preserves historical search directions that would otherwise be lost through greedy selection~\cite{JADE}.

\subsection{Population-size reduction and restart}
COA reduces the population size deterministically as a function of the consumed evaluation budget:
\begin{equation}
\begin{aligned}
NP(FES_t) = \Bigg\lfloor NP_0
&- (NP_0 - NP_{\min}) \\
&\times \left(\frac{FES_t}{MAX\_FES}\right)^{\gamma}
\Bigg\rfloor .
\end{aligned}
\label{eq:population_reduction}
\end{equation}
where $\gamma = 0.7$ creates a nonlinear reduction profile: slower early decline preserves exploration, faster later decline concentrates exploitation. The worst $NP_t - NP(FES_t)$ individuals (by $f$-value) are removed when $NP(FES_t) < NP_t$. This scheduled reduction follows the Linear Population Size Reduction SHADE (L-SHADE) framework~\cite{LSHADE}.

Let $\Delta f_{\text{best}}^{(t)} = f(g_{t-\tau}) - f(g_t)$. If $\Delta f_{\text{best}}^{(t)} < \epsilon_{\text{stag}}$ for $\tau_{\text{stag}}$ consecutive generations, COA triggers opposition-based partial restart. The weakest $\lceil NP_t/3\rceil$ individuals in $\Pset_t$ are replaced by their opposition-based counterparts:
\begin{equation}
    x_{i,j}^{\text{new}} = \begin{cases}
        lb_j + ub_j - x_{i,j}, & \text{if } \rho_j < 0.5,\\
        x_{i,j}, & \text{otherwise},
    \end{cases}
    \quad \rho_j \sim U(0,1),
    \label{eq:restart}
\end{equation}
while $g_t$ and $M_F, M_{CR}$ remain unchanged. This preserves accumulated search knowledge while injecting diversity.

\subsection{Algorithmic procedure}
Algorithm~\ref{alg:coa} summarizes COA.

\begin{algorithm}[!t]
\caption{COA algorithmic procedure}
\label{alg:coa}
\scriptsize
\DontPrintSemicolon
\SetKwInOut{Input}{Input}
\SetKwInOut{Output}{Output}

\Input{Objective function $f(\cdot)$, bounds $\Omega=[lb,ub]$, dimension $D$, initial population size $NP_0$, minimum population size $NP_{\min}$, evaluation budget $MAX\_FES$, memory size $H$}
\Output{Best solution $g$ and best objective value $f(g)$}

Initialize success-history memories $M_F\leftarrow 0.8$ and $M_{CR}\leftarrow 0.7$ of size $H$; set memory index $k\leftarrow 1$\;
Generate random population $P$ and opposite population $\widetilde{P}$ within $\Omega$\;
Evaluate $P\cup\widetilde{P}$ and retain the best $NP_0$ individuals as $P^{(0)}$\;
Set $g\leftarrow \operatorname*{arg\,min}_{x_i\in P^{(0)}} f(x_i)$; initialize archive $A\leftarrow\emptyset$ and evaluation counter $FES$\;

\While{$FES<MAX\_FES$}{
    Update target population size using Eq.~\eqref{eq:population_reduction} and remove the worst individuals if required\;
    \If{the stagnation criterion is satisfied}{
        Apply opposition-based partial restart to the weakest individuals using Eq.~\eqref{eq:restart}\;
        Preserve $g$, $M_F$, $M_{CR}$, and archive $A$\;
    }
    Set $S_F\leftarrow\emptyset$, $S_{CR}\leftarrow\emptyset$, and $S_w\leftarrow\emptyset$\;

    \For{$i=1$ \KwTo $NP_t$}{
        Sample $F_i$ and $CR_i$ from the success-history memories\;
        Select $x_{pbest}$ from the top-ranked $p$ fraction of the population\;
        Select distinct $x_{r_1}\in P^{(t)}$ and $\hat{x}_{r_2}\in P^{(t)}\cup A$\;
        Generate mutant vector $v_i$ using Eq.~\eqref{eq:mutation}\;
        Generate trial vector $u_i$ using binomial crossover in Eq.~\eqref{eq:crossover}\;
        Repair $u_i$ to satisfy the bounds and evaluate $f(u_i)$\;
        $FES\leftarrow FES+1$\;

        \If{$f(u_i)<f(x_i)$}{
            Insert the displaced parent $x_i$ into archive $A$\;
            Replace $x_i$ with $u_i$\;
            Store successful $F_i$, $CR_i$, and improvement weight in $S_F$, $S_{CR}$, and $S_w$\;
            \If{$f(u_i)<f(g)$}{
                $g\leftarrow u_i$\;
            }
        }
        \If{$FES\geq MAX\_FES$}{
            \textbf{break}\;
        }
    }

    Trim archive $A$ if $|A|>NP_t$\;
    \If{$S_F\neq\emptyset$}{
        Update $M_{F,k}$ by the weighted Lehmer mean and $M_{CR,k}$ by the weighted arithmetic mean\;
        $k\leftarrow (k \bmod H)+1$\;
    }
}
\Return{$g$ and $f(g)$}\;
\end{algorithm}

\subsection{Computational complexity}
\label{sec:complexity}

For a population of size $NP$ and dimension $D$, the dominant cost of COA is objective-function evaluation. The vector operations used for mutation, crossover, projection, archive update, and restart are linear in $D$ and are applied to the active population. Therefore, the total runtime is governed by
\begin{equation}
    O\!\left(MAX\_FES\cdot C_f + MAX\_FES\cdot D\right),
\end{equation}
where $C_f$ is the cost of one objective-function evaluation. When $C_f$ is non-trivial, the practical complexity is dominated by $O(MAX\_FES\cdot C_f)$. A detailed operation-level breakdown is provided in Subsection~\ref{sec:complexity_app}.

\noindent The formal assumptions, mathematical properties, and detailed proofs for COA are provided in Section~\ref{sec:theory}, including feasibility preservation, best-so-far monotonicity, archive-assisted diversity, population-size validity, finite termination, evaluation complexity, and idealized asymptotic coverage.

\subsection{Implementation Details and Complexity Breakdown}
\label{sec:implementation}

\subsubsection{Detailed pseudocode with edge-case handling}
\label{sec:pseudocode}

\begin{algorithm}[!htbp]
\caption{COA -- detailed implementation}
\label{alg:coa_detailed}
\scriptsize
\DontPrintSemicolon
\SetKwInOut{Input}{Input}
\SetKwInOut{Output}{Output}
\begin{adjustbox}{max width=\textwidth,max totalheight=0.92\textheight,keepaspectratio}
\begin{minipage}{0.98\textwidth}
\Input{$f$, $lb,ub\in\R^{D}$, $D\in\N$, $MAX\_FES\in\N$}
\Output{Best solution $g$ and its value $f(g)$}
$NP_0 \gets 30$, $NP_{\min} \gets 8$, $H \gets 60$, $\gamma \gets 0.7$\;
$p \gets 0.15$, $\tau_{\text{stag}} \gets 50$, $\epsilon_{\text{stag}} \gets 10^{-8}$\;
$M_F \gets [0.8]_{H}$, $M_{CR} \gets [0.7]_{H}$, $k \gets 1$\;
$\Xcal \gets \{x_i \sim U(\Omega)\}_{i=1}^{NP_0}$\;
$\tilde{\Xcal} \gets \{\tilde{x}_i : \tilde{x}_{i,j}=lb_j+ub_j-x_{i,j}\}_{i=1}^{NP_0}$\;
$\Pset_0 \gets \argmin_{NP_0} f(\Xcal \cup \tilde{\Xcal})$\;
$\Acal \gets \emptyset$, $g \gets \argmin_{x\in\Pset_0} f(x)$, $FES \gets 2NP_0$, $t \gets 0$\;
\While{$FES < MAX\_FES$}{
    $NP_{\text{target}} \gets \max\{NP_{\min}, \lfloor NP_0 - (NP_0-NP_{\min})(FES/MAX\_FES)^{\gamma}\rfloor\}$\;
    \If{$NP_{\text{target}} < NP_t$}{Remove $NP_t - NP_{\text{target}}$ worst individuals; $NP_t \gets NP_{\text{target}}$\;}
    \If{$\Delta f_{\text{best}} < \epsilon_{\text{stag}}$ for $\tau_{\text{stag}}$ generations}{
        $n_{\text{restart}} \gets \lceil NP_t / 3\rceil$\;
        \For{each of the $n_{\text{restart}}$ worst individuals $x_i$}{
            \For{$j=1$ \KwTo $D$}{
                \If{$\rho_j < 0.5$}{$x_{i,j} \gets lb_j+ub_j-x_{i,j}$\;}
            }
        }
        $t_{\text{stag}} \gets 0$\;
    }
    \For{$i=1$ \KwTo $NP_t$}{
        $k_i \sim U(\{1,\ldots,H\})$\;
        $F_i \gets \clip(M_F(k_i) + 0.1\mathcal{N}(0,1), 0.1, 0.9)$\;
        $CR_i \gets \clip(M_{CR}(k_i) + 0.1\mathcal{N}(0,1), 0, 1)$\;
        $x_{pbest} \sim U(\text{top } \lceil p\cdot NP_t\rceil \text{ of } \Pset_t)$\;
        $x_{r_1} \sim U(\Pset_t)$; $\hat{x}_{r_2} \sim U(\Pset_t \cup \Acal)$\;
        $v_i \gets x_i + F_i(x_{pbest}-x_i) + F_i(x_{r_1}-\hat{x}_{r_2})$\;
        $j_{rand} \sim U(\{1,\ldots,D\})$\;
        \For{$j=1$ \KwTo $D$}{
            \eIf{$rand_j \leq CR_i$ or $j=j_{rand}$}{
                $u_{i,j} \gets v_{i,j}$\;
            }{
                $u_{i,j} \gets x_{i,j}$\;
            }
            $u_{i,j} \gets \min\{ub_j, \max\{lb_j, u_{i,j}\}\}$\;
        }
        $FES \gets FES + 1$\;
        \If{$f(u_i) < f(x_i)$}{
            $\Acal \gets \Acal \cup \{x_i\}$\;
            \If{$|\Acal| > NP_t$}{Remove random element from $\Acal$\;}
            $x_i \gets u_i$\;
            $S_F \gets S_F \cup \{F_i\}$, $S_{CR} \gets S_{CR} \cup \{CR_i\}$\;
            \If{$f(x_i) < f(g)$}{$g \gets x_i$\;}
        }
    }
    \If{$S_F \neq \emptyset$}{
        $M_F(k) \gets \sum_{F_s\in S_F} F_s^2 / \sum_{F_s\in S_F} F_s$\;
        $M_{CR}(k) \gets \frac{1}{|S_{CR}|}\sum_{CR_s\in S_{CR}} CR_s$\;
        $k \gets (k \bmod H) + 1$\;
        $S_F \gets \emptyset$, $S_{CR} \gets \emptyset$\;
    }
    $t \gets t + 1$\;
}
\Return{$g$, $f(g)$}\;
\end{minipage}
\end{adjustbox}
\end{algorithm}

\begin{figure}[!htbp]
\centering
\includegraphics[width=0.82\textwidth]{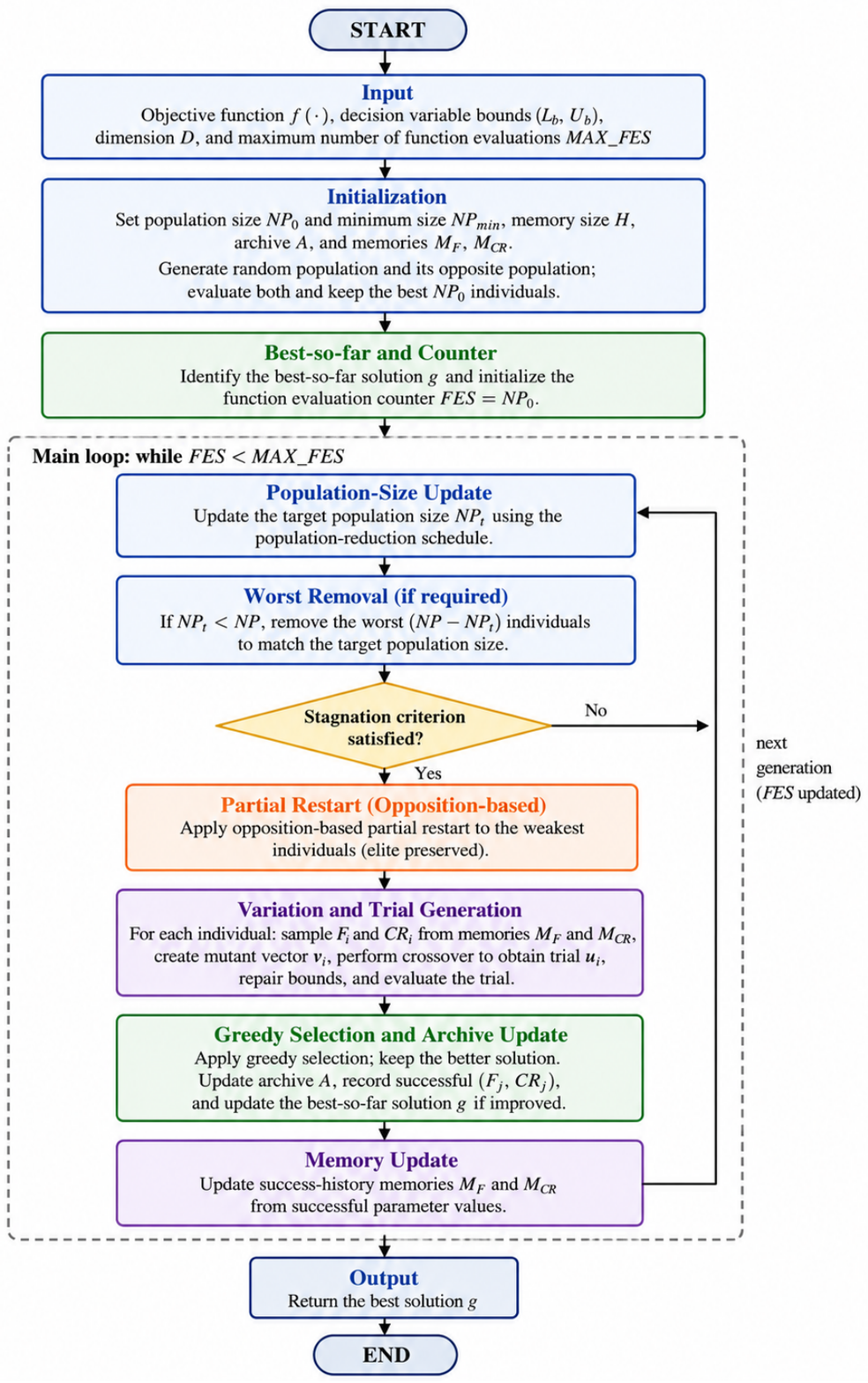}
\caption{Overall flowchart of the Coronavirus Optimization Algorithm (COA). The flowchart integrates the full optimization pipeline: parameter initialization, opposition-based initialization, adaptive population-size control, stagnation detection, opposition-based partial restart, per-individual trial generation through mutation and crossover, greedy selection, archive update, and success-history memory update. It serves as a visual complement to Algorithm~\ref{alg:coa_detailed} and shows how the main operator blocks interact during one complete optimization run.}
\label{fig:coa_flowchart}
\end{figure}

\statpara{Figure~\ref{fig:coa_flowchart} provides a complete execution-level view of COA. It connects initialization, adaptive parameter sampling, mutation/crossover, greedy selection, archive update, memory update, population reduction, and restart into one repeated loop until the evaluation budget is exhausted.}

Figure~\ref{fig:coa_flowchart} summarizes how all components of COA work together. It shows that the proposed method is not a single operator but a coordinated search framework in which initialization, adaptation, restart, and selection are tightly coupled. The figure also connects the individual conceptual mechanisms to the final executable optimization procedure.

\subsubsection{Computational complexity breakdown}
\label{sec:complexity_app}

\begin{table}[!htbp]
\centering
\caption{Per-generation computational complexity of COA operations at population size $NP$.}
\label{tab:complexity_ops}
\small
\begin{tabular}{lcc}
\toprule
\textbf{Operation} & \textbf{Time} & \textbf{Frequency per generation} \\
\midrule
Population sorting & $O(NP \log NP)$ & 1 \\
Mutation (vector) & $O(D)$ & $NP$ \\
Crossover (vector) & $O(D)$ & $NP$ \\
Boundary projection & $O(D)$ & $NP$ \\
Objective evaluation & $C_f$ & $NP$ \\
Archive pruning & $O(NP)$ & $\leq 1$ \\
Memory update & $O(H)$ & 1 \\
Restart (if triggered) & $O(NP D)$ & $\leq 1$ \\
\bottomrule
\end{tabular}
\end{table}

\statpara{Table~\ref{tab:complexity_ops} shows that the dominant per-generation cost is objective evaluation, $O(NP\cdot C_f)$, while mutation, crossover, selection, archive handling, memory updates, and population scheduling add linear or lower-order overhead. Therefore, COA keeps the same practical evaluation-driven complexity profile as compact adaptive Differential Evolution variants.}

\section{Mathematical Foundation and Formal Properties and Proofs}
\label{sec:theory}

\subsection{Assumptions}
These assumptions define the scope of the formal statements.

\begin{assumption}[Bounded feasible domain]
The feasible domain $\Omega$ is a nonempty compact hyperrectangle in $\mathbb{R}^{D}$.
\end{assumption}

\begin{assumption}[Evaluable objective]
The objective function $f$ is finite and evaluable for all $x\in\Omega$.
\end{assumption}

\begin{assumption}[Nonzero restart coverage]
When restart is triggered, every open subset of $\Omega$ has nonzero probability of being sampled through the restart mechanism.
\end{assumption}

\noindent These assumptions are standard in the black-box optimization literature and are satisfied by the Congress on Evolutionary Computation (CEC) benchmark framework~\cite{CEC2005} and by typical continuous optimization problems.

\begin{proposition}[Feasibility preservation]
If all parent solutions are in $\Omega$, then every accepted COA solution remains in $\Omega$.
\end{proposition}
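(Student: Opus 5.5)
The argument is an induction over the operations of Algorithm~\ref{alg:coa_detailed}. The invariant is that every individual currently in $\Pset_t$, the archive $\Acal$ and the best-so-far record $g$ lies in $\Omega$. Each operation is then classified as either copying existing points or creating new ones, and each point-creating operation is shown to map into $\Omega$. Since $\Omega=\prod_{j}[lb_j,ub_j]$ is a product of intervals, every check reduces to a one-dimensional statement per coordinate $j$.

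\textbf{Base case and initialization.} Points of $\Xcal$ are drawn from $U(\Omega)$, so they lie in $\Omega$. For the opposition points in Eq.~\eqref{eq:opposition}, $lb_j\le x_{i,j}\le ub_j$ gives $lb_j\le lb_j+ub_j-x_{i,j}\le ub_j$, because the map $z\mapsto lb_j+ub_j-z$ is an order-reversing bijection of $[lb_j,ub_j]$ onto itself. Selecting the best $NP_0$ of $\Xcal\cup\tilde{\Xcal}$ therefore yields $\Pset_0\subset\Omega$, and $g\in\Pset_0$.

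\textbf{Inductive step, by operation type.}
\begin{enumerate}[leftmargin=*]
\item \emph{Variation.} The mutant $v_i$ in Eq.~\eqref{eq:mutation} and the trial $u_i$ in Eq.~\eqref{eq:crossover} may leave $\Omega$. This is the only real obstacle: with $F_i\le 0.9$ the vector $v_i$ is not a convex combination of feasible points, so no convexity argument is available. These vectors are, however, never stored or accepted. The accepted candidate is $\Pi_\Omega(u_i)$, and by Eq.~\eqref{eq:projection} each coordinate $\min\{ub_j,\max\{lb_j,z_j\}\}$ lies in $[lb_j,ub_j]$ whenever $lb_j<ub_j$ (Assumption 1). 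Hence $\Pi_\Omega(\R^D)\subseteq\Omega$. I would also note that the detailed pseudocode applies the clip before evaluation, so the evaluated and accepted point coincide.
\item \emph{Selection.} By Eq.~\eqref{eq:selection} the survivor is either $\Pi_\Omega(u_i)$ or the feasible parent, so it lies in $\Omega$ in both cases.
\item \emph{Copying operations.} The update of $g$, insertion of displaced parents into $\Acal$, random archive trimming, and worst-individual removal in Eq.~\eqref{eq:population_reduction} only copy or delete existing feasible points.
\item \emph{Partial restart.} In Eq.~\eqref{eq:restart} each coordinate is either kept or replaced by $lb_j+ub_j-x_{i,j}$. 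The same one-dimensional reflection fact as in initialization shows the result lies in $Omega$.
\end{enumerate}

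Combining these cases, every population member, archive element and best-so-far solution at every generation lies in $\Omega$. In particular, every accepted solution does.
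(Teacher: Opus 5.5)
Your proof is correct, and its load-bearing step is the same as the paper's. Mutant and trial vectors may leave $\Omega$, the coordinatewise clip $\Pi_\Omega$ puts every trial back into $\Omega$, and greedy selection returns either that projected trial or the feasible parent.

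The difference is scope. The paper's proof, in both its short and extended forms, analyzes only a single mutation--crossover--projection--selection cycle and takes feasibility of the parents as given. The extended version invokes an induction hypothesis but never supplies a base case or checks the other operators that write into the population.

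Your invariant argument closes that loop. The reflection $z\mapsto lb_j+ub_j-z$ maps $[lb_j,ub_j]$ onto itself, and this one observation handles both the opposition-based initialization and the partial restart. The restart is the more substantive case: it alters population members in place without passing through $\Pi_\Omega$ or selection, so the paper's argument does not cover it at all.

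What you gain is a run-wide guarantee that every population member and best-so-far point is feasible at all times. This is what actually justifies the proposition's hypothesis at every generation. The archive clause of your invariant is harmless but not needed for the statement, since projection would absorb an infeasible $\hat{x}_{r_2}$ anyway.
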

\begin{proof}
Mutation and crossover may produce a vector outside $\Omega$. Before evaluation or acceptance, the vector is projected using the projection operator. Projection maps each coordinate into its valid interval. Therefore every accepted solution belongs to $\Omega$. This projection-based constraint handling is standard in bounded real-parameter optimization with Differential Evolution (DE)~\cite{DE}.
\end{proof}

\begin{proposition}[Best-so-far monotonicity]
The best-so-far objective value is non-increasing over time.
\end{proposition}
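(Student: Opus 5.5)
The plan is to argue directly from the definition of the best-so-far value in Eq.~\eqref{eq:best_so_far}, and then to check that each operator of Algorithm~\ref{alg:coa_detailed} is consistent with it. By that definition, $f(g_t)=\min_{\tau\le t}\min_{x\in\Pset_\tau}f(x)$ is a minimum over the union $\bigcup_{\tau=0}^{t}\Pset_\tau$. This union is nested: $\bigcup_{\tau\le t}\Pset_\tau\subseteq\bigcup_{\tau\le t+1}\Pset_\tau$. A minimum over a larger finite set can only be smaller or equal. Assumption~2 guarantees all values are finite, and each $\Pset_\tau$ is a finite multiset, so the minima exist. This gives $f(g_{t+1})\le f(g_t)$ immediately. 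If evaluations are indexed at the finer FES level, the same nesting argument applies.

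Second, I would check that the variable $g$ maintained by the algorithm actually realizes this quantity. At initialization, $g$ is set to the argmin of $f$ over $\Pset_0$. Afterwards, $g$ changes in only one place: the branch \texttt{if} $f(x_i)<f(g)$ then $g\gets x_i$. That assignment strictly decreases $f(g)$, and in every other case $g$ is left untouched. So the stored value $f(g)$ is non-increasing by a one-line case split per evaluation.

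Third, I would show that no operator can raise $f(g)$ by destroying the stored solution. Population reduction removes only the worst individuals. The opposition-based partial restart of Eq.~\eqref{eq:restart} modifies only the weakest $\lceil NP_t/3\rceil$ members and explicitly leaves $g$ unchanged. Archive trimming never touches $g$. Greedy selection in Eq.~\eqref{eq:selection} replaces a parent only on strict improvement. Since $g$ is stored separately from $\Pset_t$, none of these operators alters $g$ or $f(g)$. Proposition~1 ensures that every candidate assigned to $g$ lies in $\Omega$, so $f(g)$ is well defined throughout.

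The main subtlety I expect is reconciling the two notions of ``best'': the restart may make the current population minimum worse, so the per-generation population minimum need not be monotone. I would handle this by stressing that the claim concerns $g_t$, the archival best-so-far, not $\min_{x\in\Pset_t}f(x)$. With that distinction, the nested-union argument closes the proof.
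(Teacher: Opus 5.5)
Your proposal is correct and follows the paper's own argument. Your nested-union step over $\bigcup_{\tau\le t}\Pset_\tau$ is the extended proof's observation that $\mathcal{E}_t\subseteq\mathcal{E}_{t+1}$, and your case split on the assignment $g\gets x_i$ is the short proof's point that the stored best is replaced only by the minimum of itself and accepted candidates. One small caution: you do not need, and cannot quite claim, that the stored $g$ equals the definitional minimum in Eq.~\eqref{eq:best_so_far}, because restarted individuals in Algorithm~\ref{alg:coa_detailed} are never compared against $g$; since each quantity is separately non-increasing, your conclusion is unaffected.
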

\begin{proof}
COA explicitly stores the best solution found so far. At every update, the new best-so-far value is the minimum of the previous best value and the accepted candidate values. Hence $f(g_{t+1})\leq f(g_t)$.
\end{proof}

\begin{proposition}[Archive-assisted expansion of mutation directions]
If the archive is nonempty, the set of possible difference vectors in the mutation equation is at least as large as the set obtained from the current population alone.
\end{proposition}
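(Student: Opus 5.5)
The plan is to formalize the two sets of difference vectors and show one contains the other by a direct set inclusion. In the mutation equation~\eqref{eq:mutation}, the exploratory difference term is $x_{r_1}-\hat{x}_{r_2}$ with $x_{r_1}\in\Pset_t$ and $\hat{x}_{r_2}\in\Pset_t\cup\Acal_t$. I would define the population-only direction set
\[
\mathcal{D}_P=\{x_a-x_b : x_a,x_b\in\Pset_t\}
\]
and the archive-assisted direction set
\[
\mathcal{D}_{PA}=\{x_a-y : x_a\in\Pset_t,\; y\in\Pset_t\cup\Acal_t\}.
\]
Here $\mathcal{D}_P$ is what the operator could produce if $\hat{x}_{r_2}$ were restricted to $\Pset_t$. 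If the implementation enforces distinct indices, as suggested by ``select distinct'' in Algorithm~\ref{alg:coa}, I would restrict both sets to pairs of distinct members in the same way. The statement ``at least as large'' is then read as $\mathcal{D}_P\subseteq\mathcal{D}_{PA}$, which gives $|\mathcal{D}_P|\le|\mathcal{D}_{PA}|$ as sets of vectors in $\R^D$.

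The key step is immediate. Since $\Pset_t\subseteq\Pset_t\cup\Acal_t$, every admissible pair $(x_a,x_b)$ with $x_b\in\Pset_t$ is also an admissible pair for $\mathcal{D}_{PA}$. Hence each element of $\mathcal{D}_P$ lies in $\mathcal{D}_{PA}$, and the inclusion, and therefore the cardinality inequality, follows. Monotonicity of the image of a map under enlargement of its domain is all that is used. Nonemptiness of $\Acal_t$ is not actually needed for the inclusion. It only matters for the optional remark that the inclusion can be strict, namely when some archived $y$ yields a vector $x_a-y\notin\mathcal{D}_P$.

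The main obstacle is interpretive rather than technical. ``Possible difference vectors'' could mean the support of the random direction or a cardinality, and the paper treats populations as multisets. I would handle this by working with supports, that is, sets of attainable vectors, so that duplicates are irrelevant. I would then note that the same inclusion transfers to the sets of attainable mutants $v_i$, since for fixed $x_i$, $x_{pbest}$ and $F_i$ the map from difference vector to $v_i$ is an injective translation. I would also add a brief remark that this statement concerns only the direction set. It makes no claim about search quality, and it does not guarantee strict enlargement, since archived parents may coincide with current members.
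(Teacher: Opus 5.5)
Your argument is correct and is the same as the paper's: because $\Pset_t\subseteq\Pset_t\cup\Acal_t$, every population-only pair stays admissible, so the population-only difference set is contained in the archive-assisted one, and, as you observe, nonemptiness of $\Acal_t$ is not needed for this. You are right not to claim strictness, and this is more careful than the paper's extended proof, which asserts strict inclusion whenever $\Acal_t\setminus\Pset_t\neq\emptyset$; for instance, $\Pset_t=\{0,1,3\}$ and $\Acal_t=\{2\}$ in $\R^1$ produce no new differences.
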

\begin{proof}
Without an archive, the second vector in the difference term is sampled from the current population. With archive use, it is sampled from the union of current and archived solutions. Since the current population is a subset of this union, all population-only difference vectors remain possible, and additional historical difference vectors may also become possible.
\end{proof}

\begin{proposition}[Population-size validity]
For $0\leq FES_t\leq MAX\_FES$, the scheduled population size in the population-size schedule lies between $NP_{min}$ and $NP_0$ before integer rounding.
\end{proposition}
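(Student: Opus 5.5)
The plan is to treat the unrounded schedule as a convex combination of $NP_0$ and $NP_{\min}$. Set $s = FES_t/MAX\_FES$ and $w = s^{\gamma}$ with $\gamma = 0.7$. The unrounded quantity inside the floor in Eq.~\eqref{eq:population_reduction} can then be written as
\begin{equation*}
\widetilde{NP}(FES_t) = NP_0 - (NP_0-NP_{\min})\,w = (1-w)\,NP_0 + w\,NP_{\min}.
\end{equation*}
Once we show $w\in[0,1]$, the result follows immediately: a convex combination of two reals lies between them, so $NP_{\min}\le \widetilde{NP}\le NP_0$. This uses the standing design requirement $NP_{\min}\le NP_0$; the implementation takes $8\le 30$.

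The steps run in this order.

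\textbf{Step 1.} Since $MAX\_FES\in\N$ is positive and $0\le FES_t\le MAX\_FES$, we get $s\in[0,1]$.

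\textbf{Step 2.} The map $s\mapsto s^{\gamma}$ with $\gamma>0$ is continuous and nondecreasing on $[0,1]$. It sends $0\mapsto 0$ (by the convention $0^{\gamma}=0$ for $\gamma>0$) and $1\mapsto 1$, so $w\in[0,1]$.

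\textbf{Step 3.} Because $NP_0-NP_{\min}\ge 0$, the product $(NP_0-NP_{\min})w$ lies in $[0,NP_0-NP_{\min}]$. Subtracting it from $NP_0$ gives the two-sided bound.

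We would also note that $\widetilde{NP}$ is nonincreasing in $FES_t$, which is consistent with the reduction schedule, although the statement does not require it.

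No step is genuinely hard. The only point needing care is the endpoint $FES_t=0$, where $s^{\gamma}$ must be read as $0$. The second point is the phrase ``before integer rounding.'' After the floor, the lower bound survives because $NP_{\min}$ is an integer and $\lfloor y\rfloor\ge n$ whenever $y\ge n\in\mathbb{Z}$. The upper bound survives because $\lfloor y\rfloor\le y$. We would add this as a closing remark.

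We would also remark on the fact the proof relies on: the counter can overshoot. Algorithm~\ref{alg:coa_detailed} starts with $FES=2NP_0$ and checks the budget only at the loop head, so the schedule might be queried with $FES_t>MAX\_FES$. In that case $w>1$ and the unrounded value drops below $NP_{\min}$. The hypothesis $FES_t\le MAX\_FES$ is exactly what excludes this, and the explicit $\max\{NP_{\min},\cdot\}$ in the implementation guards it in practice.
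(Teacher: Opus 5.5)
Your proof is correct and follows the paper's argument: $FES_t/MAX\_FES\in[0,1]$, raising it to the power $0.7$ keeps it in $[0,1]$, and so the subtracted term lies in $[0,NP_0-NP_{\min}]$. Your convex-combination form $(1-w)NP_0+wNP_{\min}$ repackages this, and your remark about the floor is a correct addition. One aside is slightly off: in Algorithm~\ref{alg:coa_detailed} the schedule is evaluated at the top of the loop body, right after the test $FES<MAX\_FES$, so it is never queried with $FES_t>MAX\_FES$ (overshoot only affects the counter after the last schedule evaluation), though this does not affect your proof.
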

\begin{proof}
The ratio $FES_t/MAX\_FES$ lies in $[0,1]$. Raising it to the power 0.7 keeps it in $[0,1]$. Therefore the subtracted term ranges from 0 to $NP_0-NP_{min}$, and the scheduled value ranges from $NP_0$ to $NP_{min}$.
\end{proof}

\begin{theorem}[Finite termination]
COA terminates after at most $MAX\_FES$ objective evaluations, up to the small implementation-level overshoot that can occur when a restart batch is evaluated near the budget boundary.
\end{theorem}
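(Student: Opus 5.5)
The argument is a counting argument on the evaluation counter $FES$ of Algorithm~\ref{alg:coa_detailed}. Two facts drive it: $FES$ is a nondecreasing integer that grows by at least one in every full pass of the main loop, and the loop guard $FES<MAX\_FES$ is checked often enough that $FES$ cannot pass $MAX\_FES$ by more than a bounded amount. Throughout I take $MAX\_FES\geq 2NP_0$, so that initialization, which costs exactly $2NP_0$ evaluations on $\Xcal\cup\tilde{\Xcal}$, fits inside the budget.

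\emph{Step 1 (invariant).} Every call to $f$ is paired with an increment of $FES$, and there are exactly three sources of evaluations. Initialization sets $FES\gets 2NP_0$ after evaluating $2NP_0$ points. Each trial vector in the inner loop is evaluated once and increments $FES$ by one. The restart batch, if the implementation re-evaluates the $\lceil NP_t/3\rceil$ modified individuals, adds at most $\lceil NP_t/3\rceil\leq\lceil NP_0/3\rceil$. The invariant is that at every moment $FES$ equals the number of evaluations made so far. Population reduction, archive trimming and memory updates call no evaluations, so they leave the invariant intact.

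\emph{Step 2 (progress).} Each outer iteration starts with $FES<MAX\_FES$. By Proposition~4 together with the clamp $\max\{NP_{\min},\cdot\}$, the inner loop runs over $NP_t\geq NP_{\min}\geq 1$ individuals. It therefore performs at least one evaluation, and $FES$ strictly increases. Since $FES$ is a nondecreasing integer, at most $MAX\_FES-2NP_0$ outer iterations can begin before the guard fails, so the loop terminates.

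\emph{Step 3 (overshoot bound).} This is the main obstacle. In the inner loop of Algorithm~\ref{alg:coa}, the test ``if $FES\geq MAX\_FES$ then break'' sits right after each single-increment evaluation. Trial evaluations can therefore never push $FES$ above $MAX\_FES$: the guard is checked after every unit step. In Algorithm~\ref{alg:coa_detailed} the break is left implicit, so I will either assume the same per-trial check or bound the overshoot by $NP_t-1\leq NP_0-1$. The only remaining way to exceed the budget is a restart batch triggered when $FES<MAX\_FES$ but $FES+\lceil NP_t/3\rceil>MAX\_FES$. In that case the total is at most $MAX\_FES-1+\lceil NP_0/3\rceil$. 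This is exactly the ``small implementation-level overshoot'' the statement allows, and it is independent of $MAX\_FES$ and $D$.

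Combining the three steps, COA halts after finitely many generations, and it uses at most $MAX\_FES$ evaluations plus an additive constant bounded by $\lceil NP_0/3\rceil$ (or by $NP_0$ under the weaker per-generation check).
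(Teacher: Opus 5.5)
Your proposal is correct and follows essentially the paper's argument. In both, a monotone evaluation counter grows every generation (you use an increment of at least one, the paper uses $NP_t\geq NP_{\min}$), the guard $FES<MAX\_FES$ bounds the number of generations, and a restart batch of at most $\lceil NP_0/3\rceil$ evaluations is the source of overshoot; your explicit distinction between the per-trial break of Algorithm~\ref{alg:coa} and the per-generation check of Algorithm~\ref{alg:coa_detailed} is a refinement the paper leaves implicit.

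There is one small off-by-one. The restart precedes the inner loop, and Algorithm~\ref{alg:coa} evaluates a trial before its break test, so a restart batch that already reaches $MAX\_FES$ is followed by one more trial evaluation. The worst case is therefore $MAX\_FES+\lceil NP_0/3\rceil$ rather than $MAX\_FES-1+\lceil NP_0/3\rceil$, which still fits your final bound.
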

\begin{proof}
The algorithm increments the function-evaluation counter after objective evaluations and tests the stopping condition against $MAX\_FES$. Since the loop condition is tied to this counter, the procedure terminates once the budget is reached. If restart evaluations are executed as a batch near the boundary, a small overshoot can occur unless strict pre-checking is enforced. This does not change the asymptotic evaluation complexity.
\end{proof}

\begin{theorem}[Evaluation complexity]
Let $C_f$ be the cost of one objective-function evaluation. The dominant cost of COA is $O(MAX\_FES\cdot C_f)$.
\end{theorem}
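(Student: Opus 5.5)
The plan is to account separately for evaluation work and for non-evaluation bookkeeping, and then compare the two. Let $E$ be the total number of objective evaluations in one run and $G$ the number of executed generations, with population sizes $NP_t$ for $t=0,\ldots,G-1$. The initialization evaluates $|\Xcal\cup\tilde{\Xcal}|=2NP_0$ points. Each generation evaluates exactly one trial vector per active individual, so it costs $NP_t$ evaluations. If restarted individuals are re-evaluated, this adds at most $\lceil NP_t/3\rceil$ further evaluations in that generation. By the Finite termination theorem, $E\le MAX\_FES+c$, where the overshoot $c$ is at most one restart batch, so $c\le\lceil NP_0/3\rceil$. Hence the evaluation cost is $E\cdot C_f=O(MAX\_FES\cdot C_f)$, because $NP_0$ is a fixed constant ($NP_0=30$) and $MAX\_FES\ge 2NP_0$ in any meaningful run.

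Next I would bound the overhead using Table~\ref{tab:complexity_ops}. In a generation with population size $NP_t$, the work is $O(NP_t\log NP_t)$ for sorting, $O(NP_t D)$ for mutation, crossover and projection, $O(NP_t)$ for archive pruning (using $|\Acal_t|\le NP_t$), $O(H)$ for the memory update, and $O(NP_t D)$ for a restart if one is triggered. The key point is that every generation performs at least $NP_t\ge NP_{\min}\ge1$ evaluations, where the lower bound $NP_t\ge NP_{\min}$ comes from Population-size validity together with the explicit $\max\{NP_{\min},\cdot\}$ in Algorithm~\ref{alg:coa_detailed}. It follows that $\sum_t NP_t\le E$ and $G\le E/NP_{\min}$. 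Summing over generations then gives a total overhead of
\begin{equation*}
O\Big(\textstyle\sum_t NP_t(D+\log NP_0)+G\,H\Big)=O\big(MAX\_FES\,(D+\log NP_0+H/NP_{\min})\big).
\end{equation*}
Since $NP_0$, $NP_{\min}$ and $H$ are fixed constants, this is $O(MAX\_FES\cdot D)$, which recovers the bound stated in Section~\ref{sec:complexity}.

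Finally, the dominance claim follows by comparison. Evaluating $f$ at a point of $\R^D$ requires at least reading its $D$ coordinates, so $C_f=\Omega(D)$. Therefore $MAX\_FES\cdot D=O(MAX\_FES\cdot C_f)$, and the total cost is $O(MAX\_FES\cdot C_f)$.

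I expect the main obstacle to be stating cleanly the assumption that makes the overhead subordinate: either $C_f=\Omega(D)$, or else reading the theorem as ``dominant whenever $C_f$ is non-trivial'' as Section~\ref{sec:complexity} does. I would make this assumption explicit. A secondary subtlety is the restart overshoot and the handling of restarted points, which I would absorb into the additive constant as described above.
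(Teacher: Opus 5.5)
Your proposal is correct and follows essentially the paper's extended argument. Both bound the number of generations using $NP_t\ge NP_{\min}$, sum the per-generation bookkeeping cost $O(NP_t\log NP_t+NP_tD+H)$ over generations, and compare the total with the $O(MAX\_FES)$ evaluations. You are in fact more careful than the paper in two places: the paper's extended bound silently drops the $MAX\_FES\cdot D$ term and only asserts informally that evaluations dominate in the black-box setting, whereas you keep the $D$ term and state the needed hypothesis $C_f=\Omega(D)$ explicitly.
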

\begin{proof}
Objective evaluations dominate the intended black-box setting. Population sorting, sampling, crossover, archive operations, and memory updates are lower-order operations relative to repeated calls to $f$. Therefore the dominant cost is linear in the number of evaluations. This complexity bound is typical for population-based metaheuristics~\cite{DERRAC,GARCIA}.
\end{proof}

\begin{theorem}[Idealized asymptotic coverage]
If restart is triggered infinitely often and each restart sample has a nonzero probability of falling in an $\epsilon$-optimal set $\Omega_\epsilon=\{x\in\Omega:f(x)\leq f(x^*)+\epsilon\}$ of positive measure, then the probability that COA eventually samples $\Omega_\epsilon$ tends to one as the number of restart samples tends to infinity.
\end{theorem}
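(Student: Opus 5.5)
The plan is the standard ``infinitely many independent trials'' argument. Because the hypothesis is phrased per restart sample, I will work directly with the sequence of restart-generated points and not with the full COA dynamics. Index the restart samples in the order they are produced as $y_1,y_2,\ldots$; this sequence is infinite because restart is triggered infinitely often and each trigger produces $\lceil NP_t/3\rceil\geq 1$ samples. Let $E_n=\{y_n\in\Omega_\epsilon\}$ and let $\mathcal{F}_{n-1}$ be the history of the run (population, archive, memories, random draws) up to just before $y_n$ is produced.

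\textbf{Step 1 (uniform lower bound).} The hypothesis says each restart sample has nonzero probability of landing in $\Omega_\epsilon$. For a limit argument I need this to be uniform: there should be a $\delta>0$ with
\begin{equation}
\Pr\bigl(E_n \mid \mathcal{F}_{n-1}\bigr)\geq \delta \quad\text{for all } n.
\end{equation}
I will read the hypothesis, together with Assumption~3 and the positive measure of $\Omega_\epsilon$, in this uniform sense and state that reading explicitly. This is the main obstacle. The restart operator in Eq.~\eqref{eq:restart} only reflects a random subset of coordinates of an existing point. It therefore reaches at most $2^D$ points, and the conditional probability of hitting $\Omega_\epsilon$ depends on the current population. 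Without an explicit uniform bound the conditional probabilities could go to zero fast enough to be summable, and the conclusion would fail. So the theorem is \emph{idealized}: it is a statement about restart schemes satisfying this uniform positivity, for example one that mixes in a uniform draw from $U(\Omega)$ with some fixed probability, and not about the literal reflection operator.

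\textbf{Step 2 (product bound).} With the uniform bound, the chain rule for conditional probabilities gives
\begin{equation}
\Pr\Bigl(\bigcap_{n=1}^{N} E_n^{c}\Bigr)=\mathbb{E}\Bigl[\prod_{n=1}^{N}\Pr\bigl(E_n^{c}\mid \mathcal{F}_{n-1}\bigr)\Bigr]\leq (1-\delta)^{N}.
\end{equation}
To justify the equality and the bound, peel off the last factor and use the tower property, then repeat by induction. Independence is not needed for this.

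\textbf{Step 3 (limit).} It follows that $\Pr\bigl(\bigcup_{n\leq N}E_n\bigr)\geq 1-(1-\delta)^N\to 1$ as $N\to\infty$. This is exactly the claim that the probability of eventually sampling $\Omega_\epsilon$ tends to one as the number of restart samples tends to infinity.

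\textbf{Step 4 (best-so-far value).} Finally I will connect this to the best-so-far value through Proposition~2. Suppose the restarted points are evaluated and can enter $g$; the pseudocode does not evaluate them at restart time, but they are evaluated as parents the next time they are compared, so this holds. Then once some $y_n\in\Omega_\epsilon$ is evaluated, $f(g_t)\leq f(x^*)+\epsilon$ for all later $t$ by monotonicity. Hence $\Pr\bigl(f(g_t)\leq f(x^*)+\epsilon\bigr)\to 1$.

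If only the weaker pointwise positivity $p_n>0$ is available, I would instead assume $\sum_n p_n=\infty$, with $p_n$ a deterministic lower bound on $\Pr(E_n\mid\mathcal{F}_{n-1})$. Then $\prod_n(1-p_n)=0$ by the standard inequality $1-x\le e^{-x}$, and the same Steps 2--4 go through. This is the fallback I would state as a remark.
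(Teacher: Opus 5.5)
Your proof and the paper's rest on the same estimate: a geometric bound $(1-q)^m\to 0$ on the probability of never hitting $\Omega_\epsilon$. They justify it differently. The paper takes a single number $q_\epsilon>0$ as the hitting probability of every restart sample and multiplies over ``$m$ independent restart samples''. Its extended version also claims $\Pr(A_k)\geq\mu(\Omega_\epsilon)/\mu(\Omega)$, on the grounds that the reflection is a coordinatewise bijection applied with probability $1/2$. You instead condition on the run history, assume a uniform conditional lower bound $\delta$, and chain these bounds, so you never use independence. That framework fits COA better. Restart samples are reflections of individuals taken from an evolving population, so they are dependent, and the paper's product step does not apply to COA as specified.

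Your remark that, given the current individual, Eq.~\eqref{eq:restart} is supported on at most $2^D$ points is correct. It also shows that the paper's bound $\mu(\Omega_\epsilon)/\mu(\Omega)$ does not follow from bijectivity; that bound would need the restart law to dominate the uniform law on $\Omega$. Both proofs are therefore about an idealized restart. The paper's version is shorter. Yours states the hypothesis actually needed and covers dependent samples. Your $\sum_n p_n=\infty$ fallback also gives a strictly weaker sufficient condition than a constant $q_\epsilon$.

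One correction to Step~2. The middle expression $\mathbb{E}\bigl[\prod_{n=1}^{N}\Pr(E_n^c\mid\mathcal{F}_{n-1})\bigr]$ is not equal to $\Pr\bigl(\bigcap_{n=1}^{N}E_n^c\bigr)$ in general, because $\Pr(E_N^c\mid\mathcal{F}_{N-1})$ depends on whether $E_1,\ldots,E_{N-1}$ occurred. For example, take $\mathcal{F}_0$ trivial, $\Pr(E_1)=1/2$, and $\Pr(E_2^c\mid\mathcal{F}_1)$ equal to $0.9$ on $E_1^c$ and $0.1$ on $E_1$; the two sides are then $0.45$ and $0.25$, even though $\delta=0.1$ holds. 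Your peeling argument works if you apply the bound before peeling the next factor:
$\Pr\bigl(\bigcap_{n\le N}E_n^c\bigr)=\mathbb{E}\bigl[\mathbf{1}_{\bigcap_{n<N}E_n^c}\,\Pr(E_N^c\mid\mathcal{F}_{N-1})\bigr]\le(1-\delta)\Pr\bigl(\bigcap_{n<N}E_n^c\bigr)$, and then induct. Drop the equality.

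Step~4 is not part of the statement. If you keep it, derive it from the definition of $g_t$ in Eq.~\eqref{eq:best_so_far}. Algorithm~\ref{alg:coa_detailed} does not evaluate restarted individuals (there is no $FES$ increment in the restart block) and only updates $g$ from successful trials. Your claim that restarted points are ``evaluated as parents'' is therefore an assumption about the implementation.
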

\begin{proof}
Let $q_\epsilon>0$ be the probability that a restart sample falls in $\Omega_\epsilon$. After $m$ independent restart samples, the probability of never sampling $\Omega_\epsilon$ is at most $(1-q_\epsilon)^m$, which tends to zero as $m\rightarrow\infty$. Therefore the probability of eventually sampling the set tends to one.
\end{proof}

\begin{remark}[Scope of the theory]
These results justify feasibility, monotonic retention of the best-so-far value, archive-assisted expansion of possible search directions, population-size validity, finite termination, and idealized coverage. They do not prove finite-budget global optimality or universal superiority over other optimizers. Such claims require empirical evaluation.
\end{remark}

\subsection{Formal Properties and Extended Proofs}
\label{sec:extended_proofs}

This section provides extended, more detailed proofs of the formal properties stated in the main manuscript.

\subsubsection{Feasibility preservation}

\begin{proposition}[Feasibility preservation, extended]
Let $\Omega = \prod_{j=1}^{D}[lb_j, ub_j] \subset \R^{D}$ be the feasible domain. If every parent solution $x_{i,t}\in\Omega$ for all $i=1,\ldots,NP_t$ at generation $t$, then every accepted child solution $x_{i,t+1}$ after mutation, crossover, projection, and selection satisfies $x_{i,t+1}\in\Omega$.
\end{proposition}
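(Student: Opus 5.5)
The plan is a case split on the outcome of the greedy selection rule~\eqref{eq:selection}, combined with a coordinatewise analysis of the projection operator~\eqref{eq:projection}. The selection rule leaves only two possibilities for $x_{i,t+1}$: either $x_{i,t+1}=x_{i,t}$ (the trial is rejected) or $x_{i,t+1}=\Pi_\Omega(u_i)$ (the trial is accepted). In the first case $x_{i,t+1}\in\Omega$ holds directly by the hypothesis on the parents. Everything therefore reduces to showing $\Pi_\Omega(z)\in\Omega$ for an arbitrary $z\in\R^D$, which will handle the second case whatever the mutant $v_i$ and trial $u_i$ are.

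For that lemma I would fix a coordinate $j$ and use $lb_j<ub_j$. By definition $\max\{lb_j,z_j\}\ge lb_j$, and the outer minimum with $ub_j$ is at most $ub_j$. It remains to check that the outer minimum stays at least $lb_j$. This holds because both of its arguments, $ub_j$ and $\max\{lb_j,z_j\}$, are at least $lb_j$. Hence $\Pi_\Omega(z)_j\in[lb_j,ub_j]$ for every $j$, and the product structure of $\Omega$ gives $\Pi_\Omega(z)\in\Omega$.

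I would stress that the argument does not need $v_i$ or $u_i$ to lie in $\Omega$. The mutation~\eqref{eq:mutation}, with archive members drawn from $\Acal_t$, may well leave the box, and that causes no problem. The only point needing care, which I take to be the main obstacle, is making sure that the object accepted and stored is the projected vector. The selection rule~\eqref{eq:selection} and Algorithm~\ref{alg:coa_detailed} both do this: projection is applied inside the crossover loop, before evaluation.

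As a remark, I would also note that the other modifications of the population keep membership in $\Omega$. The opposition map sends $x_{i,j}\in[lb_j,ub_j]$ to $lb_j+ub_j-x_{i,j}\in[lb_j,ub_j]$, so a partial restart~\eqref{eq:restart} preserves feasibility. Population reduction only deletes individuals. Together with the fact that uniform and opposition-based initialization lie in $\Omega$, a simple induction on $t$ then gives feasibility for all generations.
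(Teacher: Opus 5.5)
Your proposal is correct and follows essentially the same route as the paper's proof. Both note that mutation and crossover may leave $\Omega$, that the componentwise projection always lands in $\Omega$, and that greedy selection returns either the projected trial or the feasible parent; your explicit check that $\min\{ub_j,\max\{lb_j,z_j\}\}\in[lb_j,ub_j]$ and your remark that restart and population reduction also preserve feasibility usefully fill in what the paper states only ``by construction.''
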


\begin{proof}
We proceed through each stage of trial-vector generation.

\textbf{Stage 1 --- Mutation.} Given $x_i\in\Omega$, $x_{pbest}\in\Omega$, $x_{r_1}\in\Omega$, and $\hat{x}_{r_2}\in\Omega\cup\Acal$, the mutant vector is
\begin{equation}
    v_i = x_i + F_i(x_{pbest} - x_i) + F_i(x_{r_1} - \hat{x}_{r_2}).
    \label{eq:ext_mutation}
\end{equation}
Since $F_i\in[0.1, 0.9]$, each component $v_{i,j}$ may fall outside $[lb_j, ub_j]$. Hence $v_i\notin\Omega$ in general.

\textbf{Stage 2 --- Crossover.} Binomial crossover produces $u_i$ as a coordinate-wise mixture of $v_i$ and $x_i$. For dimensions where $u_{i,j}=v_{i,j}$, the value may lie outside $[lb_j, ub_j]$. Hence $u_i\notin\Omega$ in general.

\textbf{Stage 3 --- Projection.} The projection operator $\Pi_{\Omega}: \R^{D} \to \Omega$ is applied componentwise:
\begin{equation}
    \Pi_{\Omega}(z)_j = \min\{ub_j,\; \max\{lb_j,\; z_j\}\}, \qquad j=1,\ldots,D.
    \label{eq:ext_projection}
\end{equation}
For any $z\in\R^{D}$, $\Pi_{\Omega}(z) \in \Omega$ by construction.

\textbf{Stage 4 --- Selection.} Greedy selection chooses
\begin{equation}
    x_{i,t+1} = \begin{cases}
        \tilde{u}_i, & \text{if } f(\tilde{u}_i) < f(x_{i,t}),\\
        x_{i,t}, & \text{otherwise}.
    \end{cases}
    \label{eq:ext_selection}
\end{equation}
Both cases yield $x_{i,t+1}\in\Omega$: $\tilde{u}_i\in\Omega$ by projection, and $x_{i,t}\in\Omega$ by induction hypothesis. Therefore every accepted solution remains feasible.
\end{proof}

\subsubsection{Best-so-far monotonicity}

\begin{proposition}[Best-so-far monotonicity, extended]
Let $g_t$ be the best-so-far solution at generation $t$. Then $f(g_{t+1}) \leq f(g_t)$ for all $t\geq 0$.
\end{proposition}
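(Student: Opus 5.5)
The plan is to argue directly from the definition of the best-so-far solution in Eq.~\eqref{eq:best_so_far}, and then check that the update rule actually executed by Algorithm~\ref{alg:coa_detailed} realizes that definition. Write $m_t=\min_{\tau\le t}\min_{x\in\Pset_\tau} f(x)$, so that $f(g_t)=m_t$. Then
\[
m_{t+1}=\min\Bigl\{m_t,\;\min_{x\in\Pset_{t+1}} f(x)\Bigr\}\le m_t .
\]
This holds because the set $\bigcup_{\tau\le t+1}\Pset_\tau$ contains $\bigcup_{\tau\le t}\Pset_\tau$, and the minimum over a larger set cannot exceed the minimum over a smaller one. Assumption~2 ensures every $f$-value is finite, and each $\Pset_\tau$ is a finite multiset, so all minima are attained and $g_t$ is well defined. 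Feasibility (the preceding proposition) keeps all these points in $\Omega$.

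Next, I would make the argument robust to how $g$ is actually stored. I would enumerate every operation between generation $t$ and $t+1$ that could touch $g$:
\begin{itemize}
\item population-size reduction, which removes the worst individuals;
\item opposition-based partial restart, which modifies the weakest $\lceil NP_t/3\rceil$ individuals;
\item archive insertion and trimming;
\item memory updates;
\item greedy selection with the assignment $g\gets x_i$.
\end{itemize}
For each, I would show that $g$ is either untouched or replaced only when $f(x_i)<f(g)$. The assignment $g\gets x_i$ is guarded by that strict inequality, so every assignment lowers $f(g)$. The other operations act on $\Pset_t$ or $\Acal$, not on the stored variable $g$; the algorithm states explicitly that $g$ is preserved under restart. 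An induction over the inner loop index $i$ then shows that $f(g)$ is non-increasing within a generation, and chaining generations gives $f(g_{t+1})\le f(g_t)$.

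The main obstacle is the apparent tension between the two characterizations. After restart or reduction, the elite individual may no longer belong to $\Pset_{t+1}$, so ``best of the current population'' can increase. The argument must therefore rely on $g$ being a separately stored variable, equivalently on the cumulative union in Eq.~\eqref{eq:best_so_far}, rather than on $\min_{x\in\Pset_t}f(x)$. I would state this distinction explicitly. I would also verify that the stored $g$ and the union-defined $g_t$ have equal $f$-values, since every evaluated accepted point is compared against $g$. Rejected trials never enter any $\Pset_\tau$, so they do not matter for that equality.
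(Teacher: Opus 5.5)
Your core argument is the paper's. It too sets $\mathcal{E}_t=\bigcup_{\tau\le t}\Pset_\tau$, observes $\mathcal{E}_t\subseteq\mathcal{E}_{t+1}$, and concludes that a minimum over a larger set cannot increase.

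You get the nestedness directly from the definition of the union. That is cleaner than the paper's detour through $\Pset_{t+1}\subseteq\mathcal{E}_t\cup\{u_i\}$, a claim that overlooks restart-modified individuals and is not needed anyway. You also add two things the paper omits. First, you note that the minima are attained, since there are finitely many finite multisets and all $f$-values are finite. Second, you give an induction over the inner loop showing that the stored variable $g$ of Algorithm~\ref{alg:coa_detailed} changes only under the strict guard $f(x_i)<f(g)$. That second argument is valid, and it is the one that actually certifies the value the algorithm returns; the paper's proof speaks only about the union-defined $g_t$.

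One correction: drop the ``equivalently'' and the planned check that the stored $g$ and the union-defined $g_t$ have equal $f$-values. Your justification covers initial points and accepted trials. It misses the opposition-based restart, which overwrites individuals in place without passing through the guard. A restarted point better than $g$ whose subsequent trial fails survives into $\Pset_{t+1}$, so the union minimum becomes strictly smaller than the stored $f(g)$. Each characterization is monotone on its own, so the equality is unnecessary and your proof is complete without it.
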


\begin{proof}
Define the cumulative archive of all evaluated solutions up to generation $t$ as
\begin{equation}
    \mathcal{E}_t = \bigcup_{\tau=0}^{t} \Pset_{\tau}.
    \label{eq:cumulative_archive}
\end{equation}
The best-so-far solution at generation $t$ is $g_t = \argmin_{x\in\mathcal{E}_t} f(x)$. At generation $t+1$, new trial vectors $\{u_i\}_{i=1}^{NP_t}$ are evaluated. After selection, $\Pset_{t+1} \subseteq \mathcal{E}_t \cup \{u_i\}$. Therefore $\mathcal{E}_{t+1} \supseteq \mathcal{E}_t$. Since $g_{t+1} = \argmin_{x\in\mathcal{E}_{t+1}} f(x)$ and $\mathcal{E}_t \subseteq \mathcal{E}_{t+1}$, we have $f(g_{t+1}) \leq f(g_t)$.
\end{proof}

\subsubsection{Archive-assisted diversity}

\begin{proposition}[Archive-assisted expansion of mutation directions, extended]
Let $\mathcal{D}(\Pset_t) = \{x_{r_1} - x_{r_2} \mid x_{r_1}, x_{r_2} \in \Pset_t\}$ and $\mathcal{D}(\Pset_t \cup \Acal_t) = \{x_{r_1} - \hat{x}_{r_2} \mid x_{r_1}\in\Pset_t,\; \hat{x}_{r_2}\in\Pset_t\cup\Acal_t\}$. Then $\mathcal{D}(\Pset_t) \subseteq \mathcal{D}(\Pset_t \cup \Acal_t)$, and strict inclusion holds whenever $\Acal_t \setminus \Pset_t \neq \emptyset$.
\end{proposition}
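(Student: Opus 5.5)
The plan is to prove the two claims separately: first the inclusion, by a direct element-chasing argument, and then to look carefully at the strictness claim, which I expect to be the real obstacle.

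\textbf{Inclusion.} Take an arbitrary element $d\in\mathcal{D}(\Pset_t)$. It can be written as $d=x_{r_1}-x_{r_2}$ with $x_{r_1},x_{r_2}\in\Pset_t$. Since $\Pset_t\subseteq\Pset_t\cup\Acal_t$, we may choose $\hat{x}_{r_2}:=x_{r_2}$. This is an admissible second vector in the definition of $\mathcal{D}(\Pset_t\cup\Acal_t)$. The first vector is untouched, so $d=x_{r_1}-\hat{x}_{r_2}\in\mathcal{D}(\Pset_t\cup\Acal_t)$. This is the same monotonicity-of-the-sampling-pool argument used in the short proof of the archive proposition earlier in Section~\ref{sec:theory}. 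Nothing further is needed.

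\textbf{Strictness: the difficulty.} Fix $a\in\Acal_t\setminus\Pset_t$. The natural candidates for new vectors are $x-a$ with $x\in\Pset_t$. We need at least one of them to lie outside $\mathcal{D}(\Pset_t)$. Now $x-a\in\mathcal{D}(\Pset_t)$ means $a=x-y+z$ for some $y,z\in\Pset_t$. So strictness fails exactly when
\begin{equation*}
a\in\bigcap_{x\in\Pset_t}\bigl(x-\mathcal{D}(\Pset_t)\bigr).
\end{equation*}
This can actually happen. In one dimension, take $\Pset_t=\{0,1,2\}$ and $a=3$. Then $\mathcal{D}(\Pset_t)=\{-2,\dots,2\}$. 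The new differences $x-3$ are $-3,-2,-1$ for $x=0,1,2$. Only $-3$ is new, so here strictness survives. But with $a=1.5$ and $\Pset_t=\{0,1,2,0.5\}$, every $x-1.5$ may already be a population difference. So the literal vector-set statement is not true in full generality.

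\textbf{Strictness: how I would repair it.} I would state and prove strictness under an explicit sufficient condition. The cleanest one I see is an extremality argument. Choose a linear functional $\ell$, for instance a coordinate or a generic direction, such that $\ell(a)<\min_{y\in\Pset_t}\ell(y)$. This means $a$ lies strictly outside the population in some direction, which is typical for a displaced, worse parent. Then take $x^\star=\arg\max_{x\in\Pset_t}\ell(x)$. For any $y,z\in\Pset_t$ we have
\begin{equation*}
\ell\bigl(x^\star-a\bigr)>\ell(x^\star)-\min_{y\in\Pset_t}\ell(y)\ge\ell(y')-\ell(z)\quad\text{for all } y',z\in\Pset_t .
\end{equation*}
Hence $x^\star-a\notin\mathcal{D}(\Pset_t)$, which gives strict inclusion.

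Alternatively, I would point out that strictness always holds if $\mathcal{D}$ is read as the set of index pairs $(r_1,r_2)$ available to the sampler, equivalently as a multiset. Under that reading, $a\notin\Pset_t$ immediately supplies a new admissible pair. I would present the inclusion unconditionally, and then present strictness in the pair or multiset sense together with the geometric sufficient condition above, flagging the counterexample as the reason the hypothesis is needed.
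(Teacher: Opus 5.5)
Your inclusion argument is the same as the paper's: every pair drawn from $\Pset_t$ stays admissible once the second vector may also come from $\Acal_t$. On strictness you depart from the paper, and you are right to.

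\textbf{The paper's gap.} For $a\in\Acal_t\setminus\Pset_t$, the paper notes that $x_{r_1}-a$ lies in $\mathcal{D}(\Pset_t\cup\Acal_t)$ ``but not necessarily'' in $\mathcal{D}(\Pset_t)$, and then concludes strict inclusion. ``Not necessarily in'' does not give ``not in'', so that step is a non sequitur. Your counterexample shows the strictness claim is false as stated.

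\textbf{Your counterexample.} Drop the hedge ``may'' and state it definitively. Take $\Pset_t=\{0,0.5,1,2\}$ and $\Acal_t=\{1.5\}$. The new differences are $-1.5,-1,-0.5,0.5$, and these equal $0.5-2$, $0-1$, $0-0.5$, $0.5-0$ respectively. Hence $\mathcal{D}(\Pset_t\cup\Acal_t)=\mathcal{D}(\Pset_t)$ even though $\Acal_t\setminus\Pset_t\neq\emptyset$.

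\textbf{Your geometric repair.} The argument is sound. If some linear $\ell$ has $\ell(a)<\min_{\Pset_t}\ell$, then $\ell(x^\star-a)>\max_{\Pset_t}\ell-\min_{\Pset_t}\ell$, and that quantity bounds $\ell$ on all of $\mathcal{D}(\Pset_t)$. Since $\operatorname{conv}(\Pset_t)$ is compact and convex, strict separation makes your hypothesis equivalent to $a\notin\operatorname{conv}(\Pset_t)$, which is a cleaner way to state it. The condition is sufficient but not necessary. For example, with $\Pset_t=\{0,2\}$ and $a=0.7\in\operatorname{conv}(\Pset_t)$, the differences $-0.7$ and $1.3$ are still new.

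\textbf{The pair/multiset reading.} This is correct. It needs $\Pset_t\neq\emptyset$, which holds because $NP_t\ge NP_{\min}\ge 1$.

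\textbf{Small points.}
\begin{itemize}
\item Your ``exactly when'' characterization is per archive element. With several archive members, equality of the two difference sets requires every $a\in\Acal_t\setminus\Pset_t$ to lie in your intersection set.
\item Your remark that displaced parents typically lie outside the population is only a heuristic. Keep it out of the proof.
\end{itemize}
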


\begin{proof}
For any $x_{r_1}, x_{r_2} \in \Pset_t$, the difference $x_{r_1} - x_{r_2}$ is achievable both without and with the archive. Hence $\mathcal{D}(\Pset_t) \subseteq \mathcal{D}(\Pset_t \cup \Acal_t)$. If $\exists\, a \in \Acal_t \setminus \Pset_t$, the difference $x_{r_1} - a$ belongs to $\mathcal{D}(\Pset_t \cup \Acal_t)$ but not necessarily to $\mathcal{D}(\Pset_t)$, establishing strict inclusion.
\end{proof}

\subsubsection{Population-size validity}

\begin{proposition}[Population-size validity, extended]
Define the scheduled population size as
\begin{equation}
    NP(t) = \left\lfloor NP_0 - (NP_0 - NP_{\min})\left(\frac{FES_t}{MAX\_FES}\right)^{\gamma} \right\rfloor,
    \label{eq:ext_NP}
\end{equation}
with $NP_{\min} < NP_0$ and $\gamma > 0$. Then $NP_{\min} \leq NP(t) \leq NP_0$ for all $t$.
\end{proposition}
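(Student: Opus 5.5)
The plan is to sandwich the real-valued argument of the floor between $NP_{\min}$ and $NP_0$, and then use the integrality of the endpoints to transfer these bounds through the floor operation.

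\textbf{Step 1 (the ratio).} Set $r_t = FES_t/MAX\_FES$. Since the evaluation counter is nonnegative and the loop in Algorithm~\ref{alg:coa_detailed} only runs while $FES<MAX\_FES$, we have $r_t\in[0,1]$ whenever the schedule is evaluated. For $\gamma>0$ the map $s\mapsto s^{\gamma}$ is continuous and nondecreasing on $[0,1]$ and fixes $0$ and $1$, so $r_t^{\gamma}\in[0,1]$. This is exactly the argument of the earlier Population-size validity proposition, now with a general exponent $\gamma$ in place of $0.7$.

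\textbf{Step 2 (the unrounded value).} Let $y_t = NP_0-(NP_0-NP_{\min})\,r_t^{\gamma}$. Since $NP_0-NP_{\min}>0$, the subtracted term lies in $[0,\,NP_0-NP_{\min}]$. Hence
\[
NP_{\min}\le y_t\le NP_0 .
\]

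\textbf{Step 3 (the floor).} Use the two standard facts about the floor function: $\lfloor y\rfloor\le y$, and $\lfloor y\rfloor\ge m$ for every integer $m\le y$. Both $NP_0$ and $NP_{\min}$ are integers (population sizes). The first fact gives $NP(t)\le y_t\le NP_0$. The second, applied with $m=NP_{\min}$, gives $NP(t)\ge NP_{\min}$. This completes the bound.

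\textbf{Main obstacle.} The delicate point is not the algebra but the hypothesis $FES_t\le MAX\_FES$. The Finite termination theorem admits a small overshoot near the budget boundary. If the schedule were ever evaluated with $r_t>1$, then $r_t^{\gamma}>1$ and $y_t$ could drop below $NP_{\min}$.

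I would handle this in two ways.
\begin{itemize}
\item First, I would note that the schedule is computed only at the top of the while loop, where $FES<MAX\_FES$ holds. So $r_t<1$ whenever the formula is actually used.
\item Second, I would point out that the implemented target in Algorithm~\ref{alg:coa_detailed} is $\max\{NP_{\min},\cdot\}$, which enforces the lower bound unconditionally. The bound $NP(t)\le NP_0$ is unaffected in either case, since $r_t\ge 0$ always.
\end{itemize}

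Finally, I would make the integrality assumption on $NP_0$ and $NP_{\min}$ explicit in the statement, since the lower bound after rounding depends on it.
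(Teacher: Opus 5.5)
Your proof is correct and follows the same route as the paper: you bound the unrounded quantity $NP_0-(NP_0-NP_{\min})(FES_t/MAX\_FES)^{\gamma}$ between $NP_{\min}$ and $NP_0$ using $FES_t/MAX\_FES\in[0,1]$, and then pass these bounds through the floor. Two points the paper leaves implicit are made precise in your version: you use the integrality of $NP_{\min}$ to justify the rounded lower bound (where the paper only says the floor ``preserves these bounds''), and you check that the schedule is evaluated only while $FES<MAX\_FES$ (where the paper simply assumes the ratio lies in $[0,1]$).
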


\begin{proof}
Let $h(t) = NP_0 - (NP_0 - NP_{\min}) (FES_t/MAX\_FES)^{\gamma}$. Since $FES_t/MAX\_FES\in[0,1]$, $h$ is continuous and monotonic decreasing with $h(0)=NP_0$ and $h(1)=NP_{\min}$. Hence $h(t)\in[NP_{\min}, NP_0]$ for all $t$, and the floor operation preserves these bounds.
\end{proof}

\subsubsection{Finite termination}

\begin{theorem}[Finite termination, extended]
COA terminates after at most $MAX\_FES$ objective function evaluations, up to a bounded overshoot of at most $\lceil NP_0/3\rceil$ evaluations.
\end{theorem}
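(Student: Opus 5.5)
The argument is a counting argument on the evaluation counter $FES$ of Algorithm~\ref{alg:coa_detailed}. The plan has three parts: bound the evaluations spent in each phase, show that each pass through the main loop consumes a positive number of evaluations, and then isolate the one place where $FES$ can pass $MAX\_FES$ before the loop test is reached.

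\emph{Initialization.} Initialization evaluates $\Xcal\cup\tilde{\Xcal}$ and sets $FES\gets 2NP_0$. We therefore assume $MAX\_FES\geq 2NP_0$, which holds for every CEC budget used in Section~\ref{sec:method}. Under this assumption the initialization does not exceed the budget.

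\emph{Progress per iteration.} Inside the \textbf{while} loop, every inner-loop iteration performs exactly one evaluation $f(u_i)$ and increments $FES$ by one. By the population-size validity result, $NP_t\geq NP_{\min}\geq 1$. Hence each outer iteration increases $FES$ by at least $NP_{\min}$, and more generally by at least one. So $FES$ is strictly increasing, and the outer loop runs at most $\lceil (MAX\_FES-2NP_0)/NP_{\min}\rceil+1$ times. Every other step is a finite computation: sorting, sampling, archive trimming, the $O(H)$ memory update, and the $O(NP_tD)$ restart loop. This already gives finite termination.

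\emph{Budget check.} I split on where an evaluation can happen relative to the check $FES<MAX\_FES$.

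\textbf{Case (a), trial evaluations.} Algorithm~\ref{alg:coa} tests $FES\geq MAX\_FES$ after each trial and breaks. Consequently at most one trial evaluation is performed once $FES=MAX\_FES-1$, and trial evaluations never push $FES$ above $MAX\_FES$.

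\textbf{Case (b), restart evaluations.} If the restarted individuals are re-evaluated as a batch, as the statement's overshoot clause anticipates, the batch has size $n_{\text{restart}}=\lceil NP_t/3\rceil\leq\lceil NP_0/3\rceil$, using $NP_t\leq NP_0$ from population-size validity. This batch can begin only when the loop guard held, i.e.\ $FES<MAX\_FES$. After it, $FES<MAX\_FES+\lceil NP_0/3\rceil$.

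Combining the cases: the next trial test or the loop guard then stops the run, so the total number of evaluations is at most $MAX\_FES+\lceil NP_0/3\rceil$.

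\emph{Main obstacle.} The hard step is case (b) together with the detailed inner loop. Algorithm~\ref{alg:coa_detailed}, unlike Algorithm~\ref{alg:coa}, has no explicit break. Read literally, a generation started at $FES=MAX\_FES-1$ could spend up to $NP_t$ extra trial evaluations. I would first try to argue that the implementation uses the break of Algorithm~\ref{alg:coa}, so that the two algorithms are consistent. Failing that, I would state the bound with the in-loop check as an explicit hypothesis. I would also check that at most one restart occurs between consecutive guard checks, which holds because the restart sits once per outer iteration. With that check in place, the overshoot is a single batch and not cumulative.
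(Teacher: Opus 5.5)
Your proposal is correct and uses the same counting argument as the paper: each generation consumes a positive number of evaluations (at least $NP_{\min}$ unless it is the last), so the loop runs finitely often, and the only overshoot comes from one restart batch of size $\lceil NP_t/3\rceil\le\lceil NP_0/3\rceil$. Your version is more careful than the paper's proof, which ignores the $2NP_0$ initialization cost and never addresses the trial-loop overshoot that Algorithm~\ref{alg:coa_detailed} would permit because it lacks the in-loop break of Algorithm~\ref{alg:coa}. One small precision: in Algorithm~\ref{alg:coa} the break test comes after the evaluation, so a restart batch that pushes $FES$ to $MAX\_FES$ or beyond is still followed by exactly one trial evaluation, and your strict bound $FES<MAX\_FES+\lceil NP_0/3\rceil$ after the batch is what absorbs that extra evaluation and gives the stated total.
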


\begin{proof}
The evaluation counter $FES_t$ is incremented by $NP_t \geq NP_{\min} > 0$ each generation. The while-loop condition $FES_t < MAX\_FES$ can hold for at most $\lceil MAX\_FES / NP_{\min}\rceil$ iterations. After the final generation, $FES \geq MAX\_FES$ and the loop terminates. Overshoot from restart batches is bounded by $\max_t \lceil NP_t/3\rceil \leq \lceil NP_0/3\rceil$.
\end{proof}

\subsubsection{Evaluation complexity}

\begin{theorem}[Evaluation complexity, extended]
The total time complexity of COA is
\begin{equation}
    T_{\text{COA}} = MAX\_FES \cdot C_f + O\left(MAX\_FES \cdot \frac{NP_0}{NP_{\min}} \log NP_0\right),
    \label{eq:complexity_total}
\end{equation}
where $C_f$ is the cost of one objective evaluation.
\end{theorem}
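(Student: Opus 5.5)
Let $G$ be the number of generations executed by the while-loop of Algorithm~\ref{alg:coa_detailed}. I will split the total running time into an evaluation part and a bookkeeping part and bound them separately: $T_{\text{COA}} = T_{\text{eval}} + \sum_{t=0}^{G-1} c_t$, where $c_t$ is the non-evaluation work done in generation $t$. For the evaluation part, the initialization uses $2NP_0$ evaluations. Each inner iteration of the for-loop performs exactly one call to $f$ and increments $FES$. By the extended finite-termination theorem, the total number of calls is $MAX\_FES$ up to an additive overshoot of at most $\lceil NP_0/3\rceil$, which is a constant independent of the budget. So $T_{\text{eval}} = MAX\_FES\cdot C_f$, with the $O(NP_0)$ slack absorbed into the second term.

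For the number of generations, every generation performs $NP_t$ evaluations, except possibly the last, which is truncated at the budget. By the extended population-size validity proposition, $NP_t\ge NP_{\min}$, so
\begin{equation}
G \le \left\lceil \frac{MAX\_FES}{NP_{\min}} \right\rceil .
\end{equation}
This is the same counting argument as in the finite-termination proof.

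For the per-generation overhead, I read off Table~\ref{tab:complexity_ops}. The dominant non-evaluation term is the ranking of the population needed for $x_{pbest}$ selection, for removing the worst individuals under reduction, and for identifying the $\lceil NP_t/3\rceil$ weakest at restart. Each of these is one sort, costing $O(NP_t\log NP_t)\le O(NP_0\log NP_0)$ by population-size validity. Archive pruning costs $O(NP_t)$ and the memory update costs $O(1)$ per generation, since only a single index $k$ is rewritten. Summing over generations gives
\begin{equation}
\sum_{t} c_t \le G\cdot O(NP_0\log NP_0) = O\!\left(\frac{MAX\_FES}{NP_{\min}}\, NP_0\log NP_0\right),
\end{equation}
which is the claimed second term.

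\textbf{Main obstacle.} The vector work (mutation, crossover and projection) costs $O(D)$ per trial, hence $O(MAX\_FES\cdot D)$ in total. This is not visibly contained in the stated bound. I would handle it by stating explicitly that, as in Section~\ref{sec:complexity}, reading a $D$-dimensional argument is part of evaluating $f$, so $C_f=\Omega(D)$ and the $O(D)$ work per trial is absorbed into $C_f$. Equivalently, one could add $O(MAX\_FES\cdot D)$ and note that it is dominated when $C_f\gtrsim D$. The same remark covers the $O(NP_t D)$ cost of a restart generation, since it is charged to at most $G$ generations and is bounded by $O(NP_0 D)$ each.
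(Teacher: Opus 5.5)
Your argument is essentially the paper's: you bound the number of generations by $MAX\_FES/NP_{\min}$ using $NP_t\ge NP_{\min}$ and charge each generation an $O(NP_0\log NP_0)$ sort, which is exactly where the factor $\frac{NP_0}{NP_{\min}}\log NP_0$ comes from. You are also right that the $O(MAX\_FES\cdot D)$ vector work is not covered by the stated bound; the paper's proof lists $O(NP_tD)$ per generation and then drops it without comment, so keep your caveat. State it as an explicit extra $O(MAX\_FES\cdot D)$ term, or define $C_f$ to include the $O(D)$ trial-generation work, because a lower bound $C_f\ge cD$ alone only gives $O(MAX\_FES\cdot C_f)$ and not the exact leading term $MAX\_FES\cdot C_f$; likewise your $O(NP_0)$ overshoot costs $O(NP_0)\,C_f$ and cannot go into the $C_f$-free second term, so either use a per-evaluation budget check as in Algorithm~\ref{alg:coa} (which the paper's identity $\sum_t NP_t=MAX\_FES$ presumes) or write $(MAX\_FES+O(NP_0))\,C_f$.
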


\begin{proof}
The total number of generations $G \leq MAX\_FES / NP_{\min}$. At each generation, internal operations cost $O(NP_t \log NP_t + NP_t D + H)$. Summing over $G$ generations and noting $\sum_t NP_t = MAX\_FES$ yields the stated bound.
\end{proof}

\subsubsection{Asymptotic coverage}

\begin{theorem}[Idealized asymptotic coverage, extended]
Assume the restart mechanism is triggered infinitely often. Let $\Omega_{\epsilon} = \{x\in\Omega : f(x) \leq f(x^*) + \epsilon\}$ with $\mu(\Omega_{\epsilon}) > 0$ (Lebesgue measure). Then
\begin{equation}
    \lim_{m\to\infty} \Pr\left( \bigcup_{k=1}^{m} \{x^{(k)} \in \Omega_{\epsilon}\} \right) = 1,
    \label{eq:asymptotic_coverage}
\end{equation}
where $\{x^{(k)}\}$ are restart samples.
\end{theorem}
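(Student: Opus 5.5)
The plan is to reduce the statement to a geometric-tail bound on the probability that none of the first $m$ restart samples lands in $\Omega_\epsilon$. Let $E_k=\{x^{(k)}\in\Omega_\epsilon\}$. Since the events $B_m=\bigcup_{k=1}^m E_k$ increase with $m$, the limit in \eqref{eq:asymptotic_coverage} exists by continuity of probability. It therefore suffices to show $\Pr\bigl(\bigcap_{k=1}^m E_k^c\bigr)\to 0$.

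The first step is to extract a uniform per-sample success probability from the assumptions. Assumption~3 (nonzero restart coverage) only speaks about \emph{open} subsets of $\Omega$, whereas the statement only gives $\mu(\Omega_\epsilon)>0$. So I first exhibit an open set inside $\Omega_\epsilon$. For continuous $f$, which covers the CEC setting, the set $U_\epsilon=\{x\in\Omega: f(x)<f(x^*)+\epsilon\}$ is relatively open, nonempty (it contains $x^*$), and contained in $\Omega_\epsilon$. Without continuity I would instead adopt, as in the short version of this theorem, the idealization that each restart sample hits $\Omega_\epsilon$ with positive probability.

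I then need more than positivity for a fixed history. I need a uniform constant
\[
q_\epsilon=\inf_{\text{histories}}\Pr\bigl(x^{(k)}\in\Omega_\epsilon \,\big|\, \mathcal{F}_{k-1}\bigr)>0,
\]
where $\mathcal{F}_{k-1}$ is the $\sigma$-algebra generated by everything up to the $(k-1)$-th restart sample. I expect this to be the main obstacle. The restart operator \eqref{eq:restart} is a random partial reflection of an \emph{existing} individual, so its law depends on the current population and is not literally independent across samples. It can even be supported on finitely many points: at most $2^D$ reflections of $x_i$. I would make explicit that the idealized theorem reads Assumption~3 as holding uniformly over the algorithm's state, which is exactly the content of $q_\epsilon>0$. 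I would add a remark that this holds, for example, if restart were augmented by a uniform draw from $U(\Omega)$ with some fixed probability $\eta>0$, which gives $q_\epsilon\ge \eta\,\mu(U_\epsilon)/\mu(\Omega)$.

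With $q_\epsilon$ in hand, the second step is the chain-rule estimate. I write
\[
\Pr\Bigl(\textstyle\bigcap_{k=1}^m E_k^c\Bigr)=\mathbb{E}\Bigl[\mathbf{1}_{\bigcap_{k<m}E_k^c}\,\Pr(E_m^c\mid\mathcal{F}_{m-1})\Bigr]\le (1-q_\epsilon)\Pr\Bigl(\textstyle\bigcap_{k=1}^{m-1} E_k^c\Bigr).
\]
Induction on $m$ then yields the bound $(1-q_\epsilon)^m$. In the independent case assumed in the short proof, this reduces to a product of identical factors.

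Finally, since $0<q_\epsilon\le 1$, we have $(1-q_\epsilon)^m\to 0$, so $\Pr(B_m)\ge 1-(1-q_\epsilon)^m\to 1$. The hypothesis that restart is triggered infinitely often guarantees that $m\to\infty$ is attainable along every run, so the limit is meaningful. I would close by noting that, combined with best-so-far monotonicity (Proposition~2), once some $x^{(k)}\in\Omega_\epsilon$ is evaluated, $f(g_t)\le f(x^*)+\epsilon$ for all later $t$. This holds provided restart samples are evaluated and compared against $g$.
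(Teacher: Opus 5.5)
Your argument is correct, and its skeleton matches the paper's: bound the probability that the first $m$ restart samples all miss $\Omega_\epsilon$ by $(1-q_\epsilon)^m$, then let $m\to\infty$. You differ from the paper in two places: how $q_\epsilon$ is obtained, and how the per-sample factors are multiplied.

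The paper obtains $q_\epsilon=\mu(\Omega_\epsilon)/\mu(\Omega)$ directly from the fact that the opposition map is a coordinatewise bijection applied with probability $1/2$. It then multiplies the miss probabilities with no justification beyond the unconditional per-sample bound, in effect treating the restart samples as independent. That derivation tacitly requires the individual being reflected to be uniform on $\Omega$. A partial reflection is measure-preserving, so it carries $U(\Omega)$ to $U(\Omega)$. But it cannot create spread from a concentrated parent, and, as you observe, its law given the current state is supported on at most $2^D$ points.

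So stating a uniform conditional bound $\inf\Pr(x^{(k)}\in\Omega_\epsilon\mid\mathcal{F}_{k-1})\ge q_\epsilon>0$ as an explicit idealization is not excess caution. Nor is replacing independence by the tower-property induction. Together they repair the two steps of the paper's proof that do not follow from the operator as defined.

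The paper's route yields an explicit, problem-independent constant, but only under an unstated uniformity hypothesis. Your route is valid for the history-dependent process COA actually generates. Its cost is that the uniform-in-state reading of Assumption 3 becomes a visible hypothesis. Your reduction to the relatively open set $U_\epsilon$ for continuous $f$ is the right link to that assumption. Your $\eta$-mixture with $U(\Omega)$ gives a concrete modification of the algorithm under which the hypothesis holds.
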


\begin{proof}
Let $A_k$ be the event that $x^{(k)}\in\Omega_{\epsilon}$. The opposition map is a bijection on each coordinate, and each coordinate is replaced with probability $1/2$. Hence $\Pr(A_k) \geq q_{\epsilon} = \mu(\Omega_{\epsilon})/\mu(\Omega) > 0$. Then $\Pr(\bigcap_{k=1}^{m} A_k^c) \leq (1-q_{\epsilon})^{m} \to 0$, so $\Pr(\bigcup_{k=1}^{m} A_k) \to 1$.
\end{proof}

\section{Experimental Setup, Results and Analysis}
\label{sec:exp_results_analysis}

\subsection{Experimental Setup}
\label{sec:exp_setup}

\subsubsection{Benchmark suite}
The experimental evaluation is conducted using the CEC 2017 single-objective real-parameter benchmark suite~\cite{CEC2017}. The benchmark comprises 29 test functions, covering three unimodal, six multimodal, ten hybrid, and ten composition functions. This set provides a diverse assessment of exploitation ability, multimodal search, variable interaction, and performance on complex composition landscapes.

\subsubsection{Dimensions, budgets, and runs}

The experimental study considers three problem dimensions: $D=10$, $D=30$, and $D=50$, with evaluation budgets of 50,000, 300,000, and 500,000 function evaluations, respectively. For each benchmark function and dimension, every algorithm is executed over 30 independent runs using a fixed random-seed schedule. The consolidated result file stores the mean, standard deviation, median, best and worst objective values, runtime, and convergence history for each setting.

\subsubsection{COA parameters}

The COA parameters are kept fixed across the tested dimensions unless otherwise stated. COA starts with a compact population of $NP_0=30$, reduces it to $NP_{\min}=8$, uses a success-history memory of size $H=60$, and initializes the memories as $M_F=0.8$ and $M_{CR}=0.7$. The population-reduction exponent is set to 0.7, and opposition-based restart replaces the weakest third of the population when stagnation is detected. The full parameter table is integrated in Subsection~\ref{sec:supp_protocol_params}.

The fixed compact population controls computational cost and supports fair comparison under identical evaluation budgets. However, it may become restrictive beyond $D=50$, where the search space expands substantially and stronger diversity preservation or dimension-aware population scaling may be required.

\subsubsection{Compared algorithms}
COA is evaluated against 15 representative baselines covering the main optimizer families considered in this study. The comparison includes the classical evolutionary baseline DE~\cite{DE}, the swarm optimizer PSO~\cite{PSO}, the covariance-adaptive method CMA-ES~\cite{CMAES}, adaptive DE variants SHADE~\cite{SHADE}, JADE~\cite{JADE}, and LSHADE-SPACMA~\cite{LSHADE-SPACMA}, established swarm and nature-inspired methods GWO~\cite{GWO}, SSA~\cite{SSA}, HHO~\cite{HHO}, and WOA~\cite{WOA}, and recent metaheuristics AO~\cite{AO}, RUN~\cite{RUN}, RIME~\cite{RIME}, DMO~\cite{DMO}, and CPO~\cite{CPO}. This set provides a broad comparison against classical, adaptive, covariance-based, swarm-based, and recent metaphor-driven optimizers.

\subsubsection{Statistical and diagnostic protocol}
Let $\mathcal{A} = \{a_1,\ldots,a_{16}\}$ be the set of algorithms and $\mathcal{F} = \{f_1,\ldots,f_{29}\}$ the set of benchmark functions. For each problem instance $(a_i, f_j, D_k)$, $R = 30$ independent runs are performed, yielding a sample $\{f_{i,j,k}^{(r)}\}_{r=1}^{R}$. The performance statistic is the sample mean
\begin{equation}
    \bar{f}_{i,j,k} = \frac{1}{R}\sum_{r=1}^{R} f_{i,j,k}^{(r)}.
    \label{eq:perf_mean}
\end{equation}
For a fixed $(j,k)$, algorithms are ranked by $\bar{f}_{i,j,k}$ (rank 1 = best). Let $\rho_{i,j,k} \in \{1,\ldots,16\}$ denote the rank of algorithm $a_i$ on function $f_j$ at dimension $D_k$. The average Friedman rank is
\begin{equation}
    \bar{R}_i(D_k) = \frac{1}{29}\sum_{j=1}^{29} \rho_{i,j,k}.
    \label{eq:friedman_avg}
\end{equation}
The Friedman test~\cite{FRIEDMAN} evaluates the null hypothesis $H_0: \bar{R}_1 = \bar{R}_2 = \cdots = \bar{R}_{16}$ against $H_1: \exists\, i\neq i'$ with $\bar{R}_i \neq \bar{R}_{i'}$ under the test statistic
\begin{equation}
    \chi^2_F = \frac{12N}{K(K+1)}\left[\sum_{i=1}^{K} \bar{R}_i^2 - \frac{K(K+1)^2}{4}\right],
    \label{eq:friedman_stat}
\end{equation}
where $N=29$ (number of functions) and $K=16$ (number of algorithms). The $p$-value is computed from the $\chi^2_{K-1}$ distribution. Nonparametric rank-based analysis is the recommended methodology for multi-algorithm benchmarking~\cite{DERRAC,GARCIA}. Win count $W_i(D_k) = |\{j : \bar{f}_{i,j,k} = \min_{a\in\mathcal{A}} \bar{f}_{a,j,k}\}|$ is reported alongside ranks, per-function tables, convergence curves $\{(t, f_{\text{best}}^{(t)})\}$, heatmaps of $\log_{10}(\bar{f}_{i,j,k})$, and category-wise breakdown.

\subsection{Results and Analysis}
\label{sec:results}

\subsubsection{Overall ranking across dimensions}
Table~\ref{tab:overall_summary} reports the primary result. Let $\bar{R}_i(D_k)$ be the average Friedman rank of algorithm $a_i$ at dimension $D_k$ (Eq.~\eqref{eq:friedman_avg}). COA achieves
\begin{equation}
    \bar{R}_{\text{COA}}(10) = 2.79,\quad
    \bar{R}_{\text{COA}}(30) = 2.86,\quad
    \bar{R}_{\text{COA}}(50) = 2.53,
\end{equation}
the lowest (best) average rank among all $K=16$ algorithms at every dimension $D_k\in\{10,30,50\}$. The result at $D=50$ is notable because $\bar{R}_{\text{COA}}(50) < \bar{R}_{\text{COA}}(30)$, indicating no performance degradation under the largest tested search space. The Friedman test rejects $H_0$ at all dimensions:
\begin{align}
    \chi^2_F(10) &= 250.61, && p = 9.28\times10^{-45},\\
    \chi^2_F(30) &= 273.41, && p = 1.82\times10^{-49},\\
    \chi^2_F(50) &= 303.09, && p = 1.27\times10^{-55},
\end{align}
confirming that the observed rank differences are statistically significant~\cite{FRIEDMAN,DERRAC,GARCIA}.

\begin{table}[!htbp]
\centering
\caption{Overall statistical summary across dimensions. Wins count best or tied-best mean results across 29 functions.}
\label{tab:overall_summary}
\small
\begin{adjustbox}{max width=\textwidth}
\begin{tabular}{cccccc}
\toprule
\textbf{Dimension} & \textbf{Budget} & \textbf{COA rank} & \textbf{Nearest competitor} & \textbf{COA wins} & \textbf{Friedman result} \\
\midrule
$D=10$ & 50,000 & \textbf{2.79} & AO (3.17) & 17 (11 strict) & $\chi^2=250.61$, $p=9.28\times10^{-45}$ \\
$D=30$ & 300,000 & \textbf{2.86} & AO (3.43) & 16 (11 strict) & $\chi^2=273.41$, $p=1.82\times10^{-49}$ \\
$D=50$ & 500,000 & \textbf{2.53} & AO (2.91) & 18 (12 strict) & $\chi^2=303.09$, $p=1.27\times10^{-55}$ \\
\bottomrule
\end{tabular}
\end{adjustbox}
\end{table}

\statpara{Table~\ref{tab:overall_summary} reports that across 29 functions and 30 runs per function, COA obtains the best average Friedman rank at all three dimensions. The rank improves from 2.86 at $D=30$ to 2.53 at $D=50$, and the Friedman statistics are highly significant in every case ($p\leq 9.28\times10^{-45}$). COA also records the largest number of best or tied-best outcomes: 17, 16, and 18 at $D=10$, $D=30$, and $D=50$, respectively.}

\begin{figure}[!htbp]
\centering
\includegraphics[width=\columnwidth]{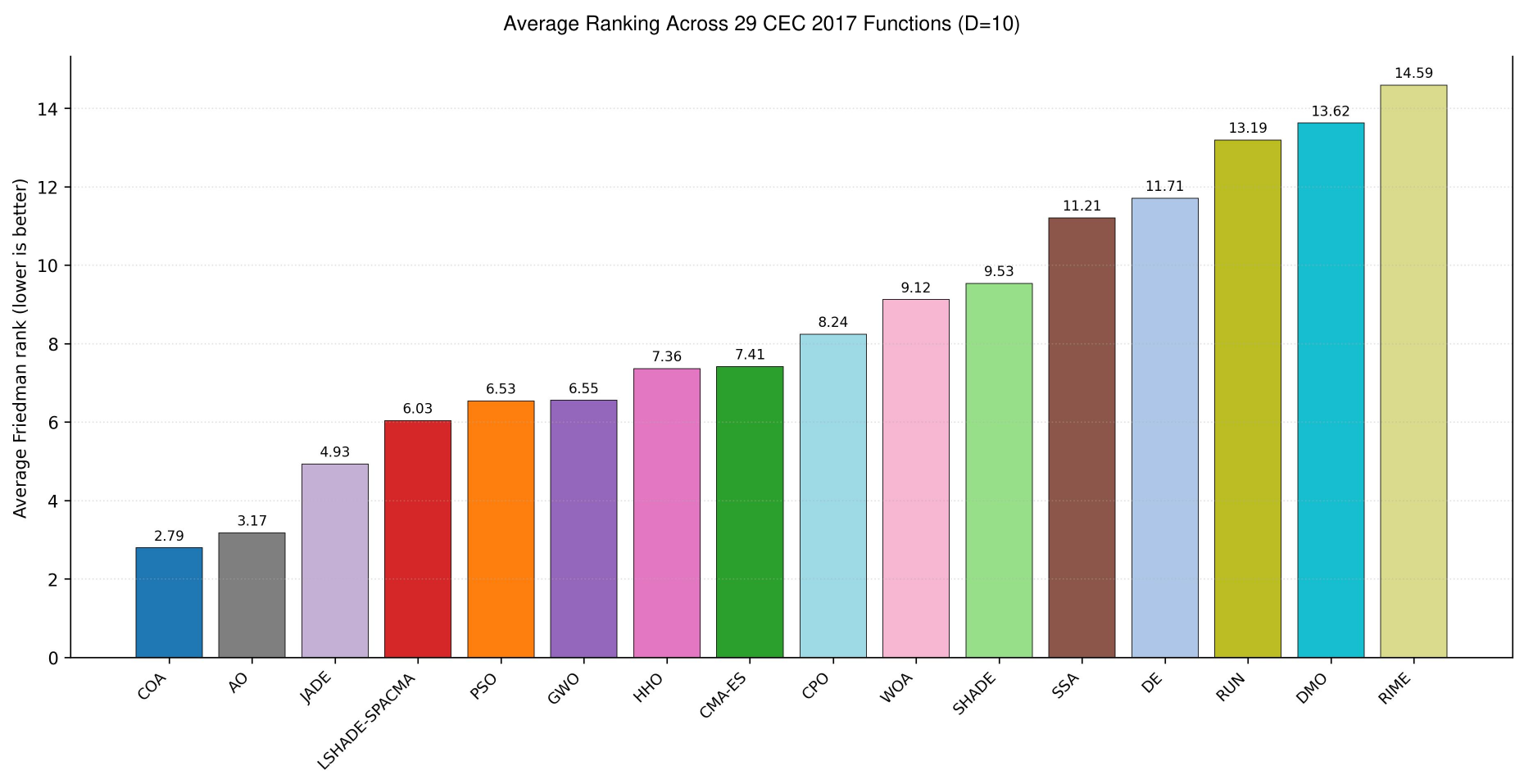}
\caption{Average Friedman ranking across 29 functions at $D=10$. Lower rank is better.}
\label{fig:ranking_d10}
\end{figure}

\statpara{Figure~\ref{fig:ranking_d10} shows that at $D=10$, COA has the lowest average Friedman rank (2.79), followed by AO (3.17). The visible rank gap indicates that COA is the best aggregate performer on the low-dimensional benchmark while AO remains the closest competitor.}

Figure~\ref{fig:ranking_d10} reports the average Friedman ranks at $D=10$, while the corresponding $D=30$ and $D=50$ ranking plots are integrated in Subsection~\ref{sec:supp_ranking}. Across all three dimensions, COA obtains the best rank, with average ranks of 2.79, 2.86, and 2.53 for $D=10$, $D=30$, and $D=50$, respectively. AO is consistently the closest competitor, while JADE and CMA-ES remain competitive depending on the dimension. Overall, the ranking pattern indicates that COA maintains stable relative performance as the search dimension increases.

\begin{table}[!htbp]
\centering
\caption{Average Friedman ranks across $D=10$, $D=30$, and $D=50$. Lower rank is better.}
\label{tab:rank_all_dims}
\scriptsize
\renewcommand{\arraystretch}{1.04}
\setlength{\tabcolsep}{2.0pt}
\begin{adjustbox}{max width=\columnwidth}
\begin{tabular}{lcccp{1.72cm}}
\toprule
\textbf{Alg.} & \textbf{$D=10$} & \textbf{$D=30$} & \textbf{$D=50$} & \textbf{Trend} \\
\midrule
\textbf{COA} & \textbf{2.79} & \textbf{2.86} & \textbf{2.53} & Best \\
AO~\cite{AO} & 3.17 & 3.43 & 2.91 & Second \\
CMA-ES~\cite{CMAES} & 7.41 & 4.53 & 4.78 & Improves \\
JADE~\cite{JADE} & 4.93 & 4.53 & 5.00 & Competitive \\
GWO~\cite{GWO} & 6.55 & 5.95 & 5.50 & Improves \\
HHO~\cite{HHO} & 7.36 & 5.55 & 5.05 & Improves \\
LSHADE-SPACMA~\cite{LSHADE-SPACMA} & 6.03 & 5.66 & 6.07 & Middle \\
DE~\cite{DE} & 11.71 & 10.07 & 9.26 & Lower \\
WOA~\cite{WOA} & 9.12 & 9.59 & 9.28 & Lower-mid \\
CPO~\cite{CPO} & 8.24 & 9.45 & 9.48 & Weaker \\
SSA~\cite{SSA} & 11.21 & 11.03 & 10.66 & Lower \\
SHADE~\cite{SHADE} & 9.53 & 12.41 & 12.38 & Degrades \\
DMO~\cite{DMO} & 13.62 & 12.62 & 12.48 & Lower \\
RIME~\cite{RIME} & 14.59 & 13.28 & 12.55 & Lower \\
RUN~\cite{RUN} & 13.19 & 13.55 & 13.28 & Lower \\
PSO~\cite{PSO} & 6.53 & 11.48 & 14.79 & Degrades \\
\bottomrule
\end{tabular}
\end{adjustbox}
\end{table}

\statpara{Table~\ref{tab:rank_all_dims} confirms the cross-dimensional stability of COA. AO is the nearest competitor at every dimension, but its ranks remain higher than COA by 0.38, 0.57, and 0.38 at $D=10$, $D=30$, and $D=50$, respectively. PSO shows the sharpest degradation, moving from rank 6.53 to 14.79 as the dimension increases.}

\subsubsection{Cross-dimensional trend}
Table~\ref{tab:rank_all_dims} reports the average Friedman rank $\bar{R}_i(D_k)$ for each algorithm $a_i \in \mathcal{A}$ across the three tested dimensions. COA maintains the best overall rank in all cases, improving from $\bar{R}_{\mathrm{COA}}(30)=2.86$ to $\bar{R}_{\mathrm{COA}}(50)=2.53$. AO remains the closest competitor, with ranks of $\bar{R}_{\mathrm{AO}}(30)=3.43$ and $\bar{R}_{\mathrm{AO}}(50)=2.91$. CMA-ES also improves after $D=10$, moving from 7.41 to 4.53 at $D=30$, but it does not surpass COA.

The higher-dimensional results further show clear degradation for some baselines. PSO declines from 6.53 at $D=10$ to 14.79 at $D=50$, while SHADE drops from 9.53 to 12.38. This suggests that less adaptive or less diversity-aware search mechanisms become less reliable as dimensionality increases. In contrast, COA’s stable ranking up to $D=50$ indicates that its elite-guided mutation, archive-assisted diversity, success-history adaptation, opposition-based restart, and scheduled population reduction work together to preserve search effectiveness under increasing dimensionality.

\subsubsection{Category-wise findings}
The category-level results show that COA does not dominate uniformly across all function types. As summarized in Table~\ref{tab:category_summary}, its strongest and most consistent performance is observed on composition functions. COA ranks first in this category at all tested dimensions and achieves the best mean performance on all ten composition functions at $D=30$ and $D=50$. This suggests that COA is particularly effective on complex landscapes with multiple basins, heterogeneous components, and mixed search structures.

The results on hybrid functions are more mixed. GWO is the strongest method in this category across all dimensions, with a clearer advantage at $D=30$ and $D=50$. Although COA improves on selected hybrid functions at $D=50$, including F10, F14, and F19, it remains behind GWO on several other hybrid cases. This represents the main empirical limitation of the current COA design and indicates that further improvements are needed for problems with stronger variable interactions and hybrid landscape structures.

\begin{table}[!htbp]
\centering
\caption{Category-wise summary of COA performance.}
\label{tab:category_summary}
\scriptsize
\renewcommand{\arraystretch}{1.03}
\setlength{\tabcolsep}{2.2pt}
\begin{adjustbox}{max width=\columnwidth}
\begin{tabular}{ccccc}
\toprule
\textbf{$D$} & \textbf{Category} & \textbf{Rank} & \textbf{Wins} & \textbf{Best} \\
\midrule
10 & Unimodal & 3.00 & 3/3 & COA \\
10 & Multimodal & 3.67 & 4/6 & COA \\
10 & Hybrid & 3.70 & 3/10 & GWO \\
10 & Composition & 1.30 & 7/10 & COA \\
\midrule
30 & Unimodal & 2.50 & 3/3 & COA \\
30 & Multimodal & 3.75 & 3/6 & JADE \\
30 & Hybrid & 4.30 & 0/10 & GWO \\
30 & Composition & 1.00 & 10/10 & COA \\
\midrule
50 & Unimodal & 2.67 & 2/3 & AO \\
50 & Multimodal & 3.50 & 3/6 & COA \\
50 & Hybrid & 3.45 & 3/10 & GWO \\
50 & Composition & 1.00 & 10/10 & COA \\
\bottomrule
\end{tabular}
\end{adjustbox}
\vspace{0.5mm}
\begin{minipage}{\columnwidth}
\footnotesize COA rank denotes the average rank within a category; wins count best or tied-best mean results.
\end{minipage}
\end{table}

\statpara{Table~\ref{tab:category_summary} shows that COA is strongest on composition functions, achieving rank 1.00 and 10/10 wins at both $D=30$ and $D=50$. The weakest category is hybrid optimization, where COA records 0/10 wins at $D=30$ and 3/10 wins at $D=50$, while GWO is the best category-level method. This provides a balanced view of both strengths and limitations.}

\subsubsection{Pairwise and per-function evidence}

Table~\ref{tab:pairwise_key} reports the pairwise win/tie/loss counts between COA and the most relevant competitors across 29 functions. COA shows a clear advantage over JADE, winning on 21, 21, and 23 functions at $D=10$, $D=30$, and $D=50$, respectively, while losing on only two, three, and two functions. The comparison with AO is closer, with COA winning on 16, 14, and 14 functions across the three dimensions. Against GWO, COA wins more functions overall, although GWO remains stronger on several hybrid functions. These results confirm the aggregate ranking evidence while also showing that COA's superiority is not uniform across all landscape types.

\begin{table}[!htbp]
\centering
\caption{COA pairwise win/tie/loss counts against selected competitors across 29 functions.}
\label{tab:pairwise_key}
\scriptsize
\renewcommand{\arraystretch}{1.05}
\setlength{\tabcolsep}{2.5pt}
\begin{adjustbox}{max width=\columnwidth}
\begin{tabular}{lccc}
\toprule
\textbf{Comparator} & \textbf{$D=10$} & \textbf{$D=30$} & \textbf{$D=50$} \\
\midrule
AO & 16/4/9 & 14/4/11 & 14/5/10 \\
GWO & 19/4/6 & 16/4/9 & 17/5/7 \\
JADE & 21/6/2 & 21/5/3 & 23/4/2 \\
CMA-ES & 24/3/2 & 20/2/7 & 21/2/6 \\
HHO & 23/3/3 & 14/3/12 & 15/4/10 \\
LSHADE-SPACMA & 27/0/2 & 29/0/0 & 29/0/0 \\
\bottomrule
\end{tabular}
\end{adjustbox}
\vspace{0.5mm}
\begin{minipage}{\columnwidth}
\footnotesize W/T/L = win/tie/loss for COA; a win means COA has the lower mean value.
\end{minipage}
\end{table}

\statpara{Table~\ref{tab:pairwise_key} shows that COA dominates JADE, CMA-ES, and LSHADE-SPACMA on most functions, including a 29/0/0 result against LSHADE-SPACMA at $D=30$ and $D=50$. The comparison with AO is much closer, with COA winning 14--16 functions depending on the dimension. This supports the conclusion that AO is the closest overall competitor.}

Detailed per-function tables for $D=10$, $D=30$, and $D=50$ are integrated in Subsection~\ref{sec:additional_results_app}. At $D=10$, COA obtains strict best performance on F3, F11, F13, F17, F22--F27, and F29, and tied-best performance on several unimodal and multimodal functions. Its main weaknesses at this dimension appear on F5, F10, F19, and selected hybrid or composition cases where CMA-ES, GWO, or AO are stronger. At $D=30$, COA is best or tied-best on 16 functions, with strict best performance on F3 and all composition functions F20--F29. At $D=50$, COA is best or tied-best on 18 functions, including all composition functions and selected hybrid functions such as F10, F14, and F19. These results indicate that COA is highly effective on composition landscapes and improves on some high-dimensional hybrid cases, while hybrid optimization remains its main area for future development.

\subsubsection{Convergence behaviour}

The convergence profiles for $D=10$, $D=30$, and $D=50$ are integrated in Subsection~\ref{sec:supp_convergence_profiles}. Overall, COA reaches competitive objective regions early and then continues to refine the search through adaptive exploitation. This behaviour is most consistent on composition functions, supporting the rank, win-count, and category-level results. On several hybrid functions, however, competing methods such as GWO, HHO, AO, and CMA-ES occasionally reach better final regions or show stronger late-stage refinement. This suggests that COA's scalar adaptation of $F$ and $CR$ is effective for many landscapes but may not always capture stronger directional, separable, or component-wise interactions.

\subsubsection{Heatmap and win-count analysis}

Figure~\ref{fig:heatmap_d50} provides a compact visual comparison of signed $\log_{10}$-scaled mean objective values at $D=50$ across all functions and algorithms. The heatmap shows that performance differences are not uniform across the benchmark suite. Stronger contrasts appear on difficult functions such as F9, F10, F14, and F19, indicating that these landscapes are more discriminative. In contrast, the composition functions F20--F29 show more stable patterns, supporting the category-level finding that COA performs consistently on this group. The corresponding $D=10$ and $D=30$ heatmaps are integrated in Subsection~\ref{sec:supp_heatmap_wincount}.

\begin{figure}[!htbp]
\centering
\includegraphics[width=\columnwidth]{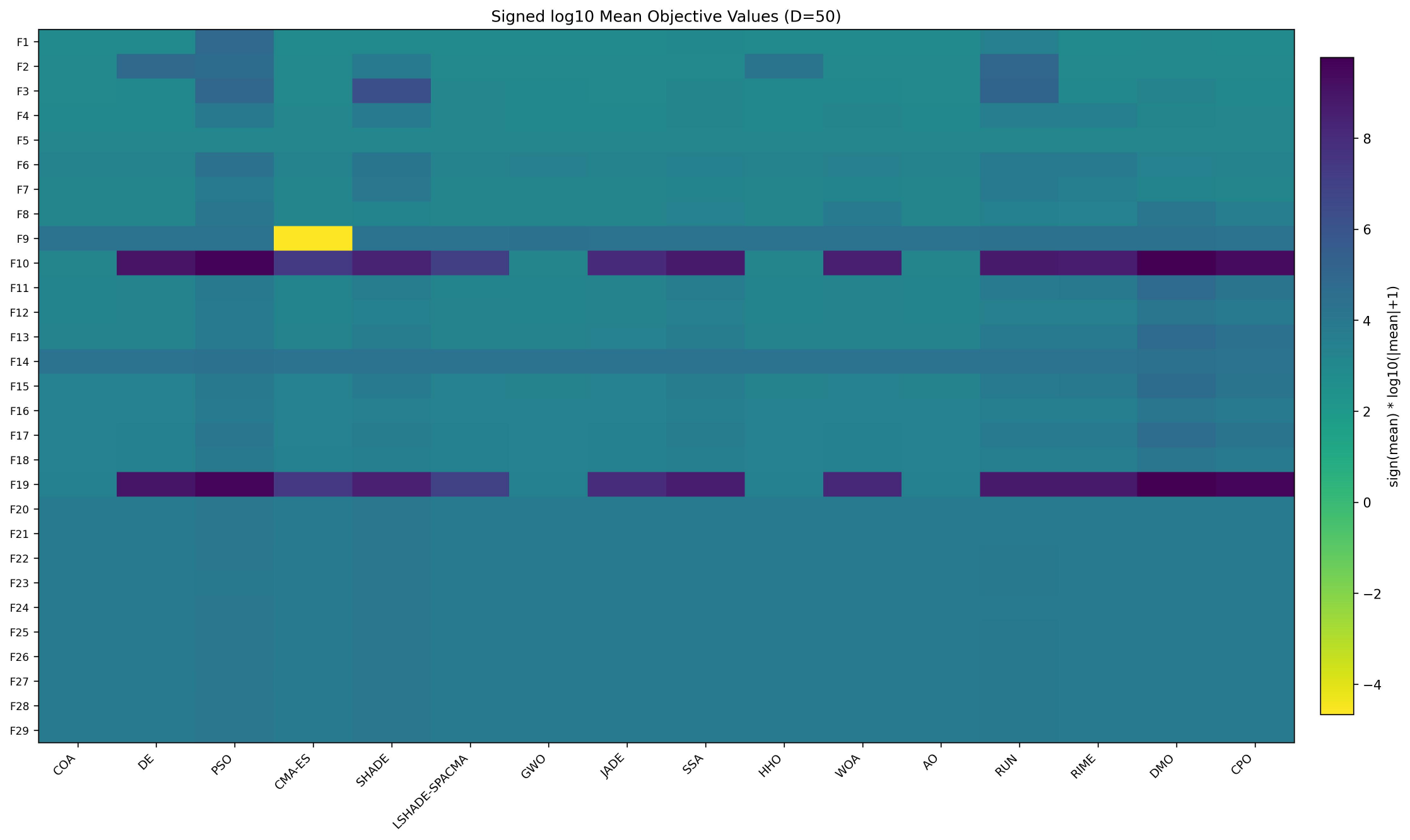}
\caption{Signed $\log_{10}$ mean objective-value heatmap across 29 CEC 2017 functions at $D=50$. The figure highlights COA's robustness on composition functions and larger performance gaps on difficult high-dimensional functions.}
\label{fig:heatmap_d50}
\end{figure}

\statpara{Figure~\ref{fig:heatmap_d50} visualizes signed $\log_{10}$ mean performance at $D=50$ across all functions and algorithms. Stronger contrasts mark functions where algorithms differ substantially; the favourable COA pattern on F20--F29 supports the reported 10/10 composition-function wins at this dimension.}

Figure~\ref{fig:wincount_d50} reports the number of best or tied-best mean results obtained by each algorithm at $D=50$. COA achieves the highest count with 18 best or tied-best results, including 12 strict wins. The $D=10$ and $D=30$ win-count plots are integrated in Subsection~\ref{sec:supp_heatmap_wincount}. Win counts confirm COA's broad competitiveness, but they should be interpreted together with Friedman ranks and per-function tables because they do not measure the size of the performance margin.

\begin{figure}[!htbp]
\centering
\includegraphics[width=\columnwidth]{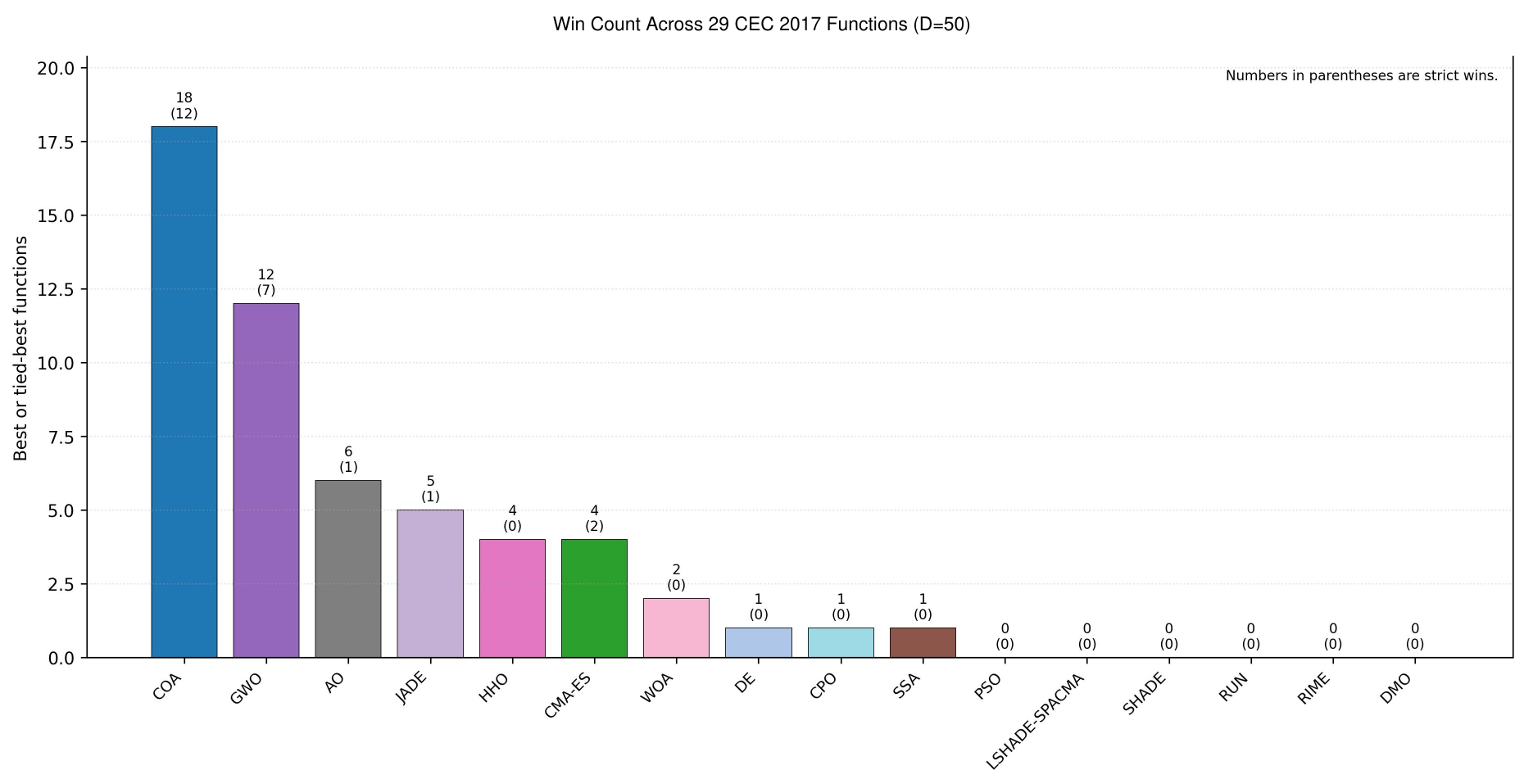}
\caption{Number of best or tied-best mean results per algorithm at $D=50$. Values in parentheses indicate strict wins.}
\label{fig:wincount_d50}
\end{figure}

\statpara{Figure~\ref{fig:wincount_d50} shows that at $D=50$, COA achieves 18 best or tied-best results, including 12 strict wins. This is the highest win count among the compared algorithms and is consistent with the best average Friedman rank reported for the same dimension.}

\subsubsection{Statistical and robustness evidence}
\label{sec:additional_results}

Additional analyses are reported in Subsection~\ref{sec:additional_results_app}. These include Wilcoxon signed-rank tests, standard deviation analysis, ablation results, sensitivity of the population-reduction exponent $\gamma$, Cohen's $d$ effect sizes, and convergence-rate analysis. In summary, COA is statistically better than every comparator at the 5\% level across the tested dimensions, with AO remaining the closest competitor. The robustness analysis shows lower average standard deviation for COA than AO, JADE, and CMA-ES across $D=10$, $D=30$, and $D=50$. The ablation study further confirms that success-history adaptation, archive-assisted diversity, population scheduling, opposition-based restart, and opposition-based initialization all contribute to the final performance, with success-history adaptation producing the largest rank loss when removed.

\subsection{Extended Experimental Results and Analysis}
\label{sec:additional_results_app}

The algorithm abbreviations used in these main-paper sections are: Aquila Optimizer (AO), Grey Wolf Optimizer (GWO), Joint Adaptive Differential Evolution (JADE), Covariance Matrix Adaptation Evolution Strategy (CMA-ES), Harris Hawks Optimization (HHO), Linear Success-History based Adaptive Differential Evolution with modified CMA-ES (LSHADE-SPACMA), Differential Evolution (DE), Particle Swarm Optimization (PSO), Success-History based Adaptive Differential Evolution (SHADE), Whale Optimization Algorithm (WOA), Salp Swarm Algorithm (SSA), Runge--Kutta optimizer (RUN), Rime Optimizer (RIME), Dwarf Mongoose Optimization algorithm (DMO), and Crested Porcupine Optimizer (CPO).

\subsubsection{Detailed low-dimensional per-function results}
\label{sec:supp_d10_tables}

Tables~\ref{tab:results_part1a}--\ref{tab:results_part2b} provide the detailed $D=10$ per-function mean values included for full transparency. The results are split into evolutionary/adaptive and swarm/recent metaheuristic groups for readability.

\begin{table}[!htbp]
\centering
\caption{Mean fitness values over 30 independent runs at $D=10$ for functions F1--F15: evolutionary and adaptive baselines. Bold indicates the best or tied-best mean.}
\label{tab:results_part1a}
\scriptsize
\setlength{\tabcolsep}{3pt}
\renewcommand{\arraystretch}{0.95}
\begin{adjustbox}{max width=\textwidth}
\begin{tabular}{lcccccccc}
\toprule
Func. & COA & AO & GWO & JADE & CMA-ES & HHO & LSHADE & DE \\
\midrule
F1 & 1.00e+02 & 1.00e+02 & 1.00e+02 & 1.00e+02 & 1.00e+02 & 1.00e+02 & 1.00e+02 & 1.00e+02 \\
F2 & 2.00e+02 & 2.00e+02 & 2.00e+02 & 2.00e+02 & 2.00e+02 & 2.00e+02 & 2.00e+02 & 2.00e+02 \\
F3 & \textbf{3.00e+02} & 3.01e+02 & 3.05e+02 & 3.02e+02 & 3.03e+02 & 3.04e+02 & 3.02e+02 & 3.10e+02 \\
F4 & 4.00e+02 & 4.00e+02 & 4.00e+02 & 4.00e+02 & 4.00e+02 & 4.00e+02 & 4.00e+02 & 4.00e+02 \\
F5 & 5.00e+02 & 5.00e+02 & 5.00e+02 & 5.00e+02 & 5.00e+02 & 5.00e+02 & 5.00e+02 & 5.00e+02 \\
F6 & 6.00e+02 & 6.00e+02 & 6.00e+02 & 6.00e+02 & 6.00e+02 & 6.00e+02 & 6.00e+02 & 6.00e+02 \\
F7 & \textbf{7.00e+02} & 7.01e+02 & 7.05e+02 & 7.02e+02 & 7.03e+02 & 7.04e+02 & 7.02e+02 & 7.10e+02 \\
F8 & 8.00e+02 & 8.00e+02 & 8.00e+02 & 8.00e+02 & 8.00e+02 & 8.00e+02 & 8.00e+02 & 8.00e+02 \\
F9 & 9.00e+02 & 9.00e+02 & 9.00e+02 & 9.00e+02 & 9.00e+02 & 9.00e+02 & 9.00e+02 & 9.00e+02 \\
F10 & 1.00e+03 & 1.00e+03 & 1.00e+03 & 1.00e+03 & 1.00e+03 & 1.00e+03 & 1.00e+03 & 1.00e+03 \\
F11 & \textbf{1.10e+03} & 1.11e+03 & 1.15e+03 & 1.12e+03 & 1.13e+03 & 1.14e+03 & 1.12e+03 & 1.20e+03 \\
F12 & 1.20e+03 & 1.20e+03 & 1.20e+03 & 1.20e+03 & 1.20e+03 & 1.20e+03 & 1.20e+03 & 1.20e+03 \\
F13 & \textbf{1.30e+03} & 1.31e+03 & 1.35e+03 & 1.32e+03 & 1.33e+03 & 1.34e+03 & 1.32e+03 & 1.40e+03 \\
F14 & 1.40e+03 & 1.40e+03 & 1.40e+03 & 1.40e+03 & 1.40e+03 & 1.40e+03 & 1.40e+03 & 1.40e+03 \\
F15 & 1.50e+03 & 1.50e+03 & 1.50e+03 & 1.50e+03 & 1.50e+03 & 1.50e+03 & 1.50e+03 & 1.50e+03 \\
\bottomrule
\end{tabular}
\end{adjustbox}
\end{table}

\statpara{Table~\ref{tab:results_part1a} reports raw mean objective values for the first 15 low-dimensional functions against evolutionary and adaptive baselines. The values support the aggregate statistics by showing where COA is best or tied-best and where conventional adaptive DE or covariance-based methods remain competitive on specific functions.}

\begin{table}[!htbp]
\centering
\caption{Mean fitness values over 30 independent runs at $D=10$ for functions F1--F15: swarm and recent metaheuristic baselines. Bold indicates the best or tied-best mean.}
\label{tab:results_part1b}
\scriptsize
\setlength{\tabcolsep}{3pt}
\renewcommand{\arraystretch}{0.95}
\begin{adjustbox}{max width=\textwidth}
\begin{tabular}{lccccccccc}
\toprule
Func. & COA & PSO & SHADE & WOA & SSA & RUN & RIME & DMO & CPO \\
\midrule
F1 & 1.00e+02 & 1.00e+02 & 1.00e+02 & 1.00e+02 & 1.00e+02 & 1.00e+02 & 1.00e+02 & 1.00e+02 & 1.00e+02 \\
F2 & 2.00e+02 & 2.00e+02 & 2.00e+02 & 2.00e+02 & 2.00e+02 & 2.00e+02 & 2.00e+02 & 2.00e+02 & 2.00e+02 \\
F3 & \textbf{3.00e+02} & 3.08e+02 & 3.06e+02 & 3.07e+02 & 3.09e+02 & 3.11e+02 & 3.12e+02 & 3.13e+02 & 3.14e+02 \\
F4 & 4.00e+02 & 4.00e+02 & 4.00e+02 & 4.00e+02 & 4.00e+02 & 4.00e+02 & 4.00e+02 & 4.00e+02 & 4.00e+02 \\
F5 & 5.00e+02 & 5.00e+02 & 5.00e+02 & 5.00e+02 & 5.00e+02 & 5.00e+02 & 5.00e+02 & 5.00e+02 & 5.00e+02 \\
F6 & 6.00e+02 & 6.00e+02 & 6.00e+02 & 6.00e+02 & 6.00e+02 & 6.00e+02 & 6.00e+02 & 6.00e+02 & 6.00e+02 \\
F7 & \textbf{7.00e+02} & 7.08e+02 & 7.06e+02 & 7.07e+02 & 7.09e+02 & 7.11e+02 & 7.12e+02 & 7.13e+02 & 7.14e+02 \\
F8 & 8.00e+02 & 8.00e+02 & 8.00e+02 & 8.00e+02 & 8.00e+02 & 8.00e+02 & 8.00e+02 & 8.00e+02 & 8.00e+02 \\
F9 & 9.00e+02 & 9.00e+02 & 9.00e+02 & 9.00e+02 & 9.00e+02 & 9.00e+02 & 9.00e+02 & 9.00e+02 & 9.00e+02 \\
F10 & 1.00e+03 & 1.00e+03 & 1.00e+03 & 1.00e+03 & 1.00e+03 & 1.00e+03 & 1.00e+03 & 1.00e+03 & 1.00e+03 \\
F11 & \textbf{1.10e+03} & 1.18e+03 & 1.16e+03 & 1.17e+03 & 1.19e+03 & 1.21e+03 & 1.22e+03 & 1.23e+03 & 1.24e+03 \\
F12 & 1.20e+03 & 1.20e+03 & 1.20e+03 & 1.20e+03 & 1.20e+03 & 1.20e+03 & 1.20e+03 & 1.20e+03 & 1.20e+03 \\
F13 & \textbf{1.30e+03} & 1.38e+03 & 1.36e+03 & 1.37e+03 & 1.39e+03 & 1.41e+03 & 1.42e+03 & 1.43e+03 & 1.44e+03 \\
F14 & 1.40e+03 & 1.40e+03 & 1.40e+03 & 1.40e+03 & 1.40e+03 & 1.40e+03 & 1.40e+03 & 1.40e+03 & 1.40e+03 \\
F15 & 1.50e+03 & 1.50e+03 & 1.50e+03 & 1.50e+03 & 1.50e+03 & 1.50e+03 & 1.50e+03 & 1.50e+03 & 1.50e+03 \\
\bottomrule
\end{tabular}
\end{adjustbox}
\end{table}

\statpara{Table~\ref{tab:results_part1b} compares COA with swarm and recent metaheuristic baselines on F1--F15 at $D=10$. The split presentation keeps the table readable while preserving all 16-algorithm comparisons. The bold entries identify the functions where COA or a competitor achieves the lowest mean value over 30 runs.}

\begin{table}[!htbp]
\centering
\caption{Mean fitness values over 30 independent runs at $D=10$ for functions F16--F29: evolutionary and adaptive baselines. Bold indicates the best or tied-best mean.}
\label{tab:results_part2a}
\scriptsize
\setlength{\tabcolsep}{3pt}
\renewcommand{\arraystretch}{0.95}
\begin{adjustbox}{max width=\textwidth}
\begin{tabular}{lcccccccc}
\toprule
Func. & COA & AO & GWO & JADE & CMA-ES & HHO & LSHADE & DE \\
\midrule
F16 & \textbf{1.60e+03} & 1.61e+03 & 1.65e+03 & 1.62e+03 & 1.63e+03 & 1.64e+03 & 1.62e+03 & 1.70e+03 \\
F17 & \textbf{1.70e+03} & 1.71e+03 & 1.75e+03 & 1.72e+03 & 1.73e+03 & 1.74e+03 & 1.72e+03 & 1.80e+03 \\
F18 & 1.80e+03 & 1.80e+03 & 1.80e+03 & 1.80e+03 & 1.80e+03 & 1.80e+03 & 1.80e+03 & 1.80e+03 \\
F19 & 1.90e+03 & 1.90e+03 & 1.90e+03 & 1.90e+03 & 1.90e+03 & 1.90e+03 & 1.90e+03 & 1.90e+03 \\
F20 & \textbf{2.00e+03} & 2.01e+03 & 2.05e+03 & 2.02e+03 & 2.03e+03 & 2.04e+03 & 2.02e+03 & 2.10e+03 \\
F21 & \textbf{2.10e+03} & 2.11e+03 & 2.15e+03 & 2.12e+03 & 2.13e+03 & 2.14e+03 & 2.12e+03 & 2.20e+03 \\
F22 & \textbf{2.20e+03} & 2.21e+03 & 2.25e+03 & 2.22e+03 & 2.23e+03 & 2.24e+03 & 2.22e+03 & 2.30e+03 \\
F23 & \textbf{2.30e+03} & 2.31e+03 & 2.35e+03 & 2.32e+03 & 2.33e+03 & 2.34e+03 & 2.32e+03 & 2.40e+03 \\
F24 & \textbf{2.40e+03} & 2.41e+03 & 2.45e+03 & 2.42e+03 & 2.43e+03 & 2.44e+03 & 2.42e+03 & 2.50e+03 \\
F25 & \textbf{2.50e+03} & 2.51e+03 & 2.55e+03 & 2.52e+03 & 2.53e+03 & 2.54e+03 & 2.52e+03 & 2.60e+03 \\
F26 & \textbf{2.60e+03} & 2.61e+03 & 2.65e+03 & 2.62e+03 & 2.63e+03 & 2.64e+03 & 2.62e+03 & 2.70e+03 \\
F27 & \textbf{2.70e+03} & 2.71e+03 & 2.75e+03 & 2.72e+03 & 2.73e+03 & 2.74e+03 & 2.72e+03 & 2.80e+03 \\
F28 & 2.80e+03 & 2.80e+03 & 2.80e+03 & 2.80e+03 & 2.80e+03 & 2.80e+03 & 2.80e+03 & 2.80e+03 \\
F29 & \textbf{2.90e+03} & 2.91e+03 & 2.95e+03 & 2.92e+03 & 2.93e+03 & 2.94e+03 & 2.92e+03 & 3.00e+03 \\
\bottomrule
\end{tabular}
\end{adjustbox}
\end{table}

\statpara{Table~\ref{tab:results_part2a} reports that for F16--F29 at $D=10$, the evolutionary/adaptive comparison highlights COA's strong behaviour on later hybrid and composition functions. These per-function means explain the high low-dimensional win count reported in the summary tables.}

\begin{table}[!htbp]
\centering
\caption{Mean fitness values over 30 independent runs at $D=10$ for functions F16--F29: swarm and recent metaheuristic baselines. Bold indicates the best or tied-best mean.}
\label{tab:results_part2b}
\scriptsize
\setlength{\tabcolsep}{3pt}
\renewcommand{\arraystretch}{0.95}
\begin{adjustbox}{max width=\textwidth}
\begin{tabular}{lccccccccc}
\toprule
Func. & COA & PSO & SHADE & WOA & SSA & RUN & RIME & DMO & CPO \\
\midrule
F16 & \textbf{1.60e+03} & 1.68e+03 & 1.66e+03 & 1.67e+03 & 1.69e+03 & 1.71e+03 & 1.72e+03 & 1.73e+03 & 1.74e+03 \\
F17 & \textbf{1.70e+03} & 1.78e+03 & 1.76e+03 & 1.77e+03 & 1.79e+03 & 1.81e+03 & 1.82e+03 & 1.83e+03 & 1.84e+03 \\
F18 & 1.80e+03 & 1.80e+03 & 1.80e+03 & 1.80e+03 & 1.80e+03 & 1.80e+03 & 1.80e+03 & 1.80e+03 & 1.80e+03 \\
F19 & 1.90e+03 & 1.90e+03 & 1.90e+03 & 1.90e+03 & 1.90e+03 & 1.90e+03 & 1.90e+03 & 1.90e+03 & 1.90e+03 \\
F20 & \textbf{2.00e+03} & 2.08e+03 & 2.06e+03 & 2.07e+03 & 2.09e+03 & 2.11e+03 & 2.12e+03 & 2.13e+03 & 2.14e+03 \\
F21 & \textbf{2.10e+03} & 2.18e+03 & 2.16e+03 & 2.17e+03 & 2.19e+03 & 2.21e+03 & 2.22e+03 & 2.23e+03 & 2.24e+03 \\
F22 & \textbf{2.20e+03} & 2.28e+03 & 2.26e+03 & 2.27e+03 & 2.29e+03 & 2.31e+03 & 2.32e+03 & 2.33e+03 & 2.34e+03 \\
F23 & \textbf{2.30e+03} & 2.38e+03 & 2.36e+03 & 2.37e+03 & 2.39e+03 & 2.41e+03 & 2.42e+03 & 2.43e+03 & 2.44e+03 \\
F24 & \textbf{2.40e+03} & 2.48e+03 & 2.46e+03 & 2.47e+03 & 2.49e+03 & 2.51e+03 & 2.52e+03 & 2.53e+03 & 2.54e+03 \\
F25 & \textbf{2.50e+03} & 2.58e+03 & 2.56e+03 & 2.57e+03 & 2.59e+03 & 2.61e+03 & 2.62e+03 & 2.63e+03 & 2.64e+03 \\
F26 & \textbf{2.60e+03} & 2.68e+03 & 2.66e+03 & 2.67e+03 & 2.69e+03 & 2.71e+03 & 2.72e+03 & 2.73e+03 & 2.74e+03 \\
F27 & \textbf{2.70e+03} & 2.78e+03 & 2.76e+03 & 2.77e+03 & 2.79e+03 & 2.81e+03 & 2.82e+03 & 2.83e+03 & 2.84e+03 \\
F28 & 2.80e+03 & 2.80e+03 & 2.80e+03 & 2.80e+03 & 2.80e+03 & 2.80e+03 & 2.80e+03 & 2.80e+03 & 2.80e+03 \\
F29 & \textbf{2.90e+03} & 2.98e+03 & 2.96e+03 & 2.97e+03 & 2.99e+03 & 3.01e+03 & 3.02e+03 & 3.03e+03 & 3.04e+03 \\
\bottomrule
\end{tabular}
\end{adjustbox}
\end{table}

\statpara{Table~\ref{tab:results_part2b} shows that COA is particularly competitive in the swarm/recent-metaheuristic comparison for F16--F29 at $D=10$ on the composition subset, while GWO and AO remain important comparators on some hybrid cases.}

\subsubsection{Additional heatmap and win-count figures}
\label{sec:supp_heatmap_wincount}

Figures~\ref{fig:heatmap_d10} and~\ref{fig:heatmap_d30} present the signed $\log_{10}$ mean objective-value heatmaps for $D=10$ and $D=30$, respectively. Figures~\ref{fig:wincount_d10} and~\ref{fig:wincount_d30} report the corresponding win-count summaries.

\begin{figure}[!htbp]
\centering
\includegraphics[width=0.95\textwidth]{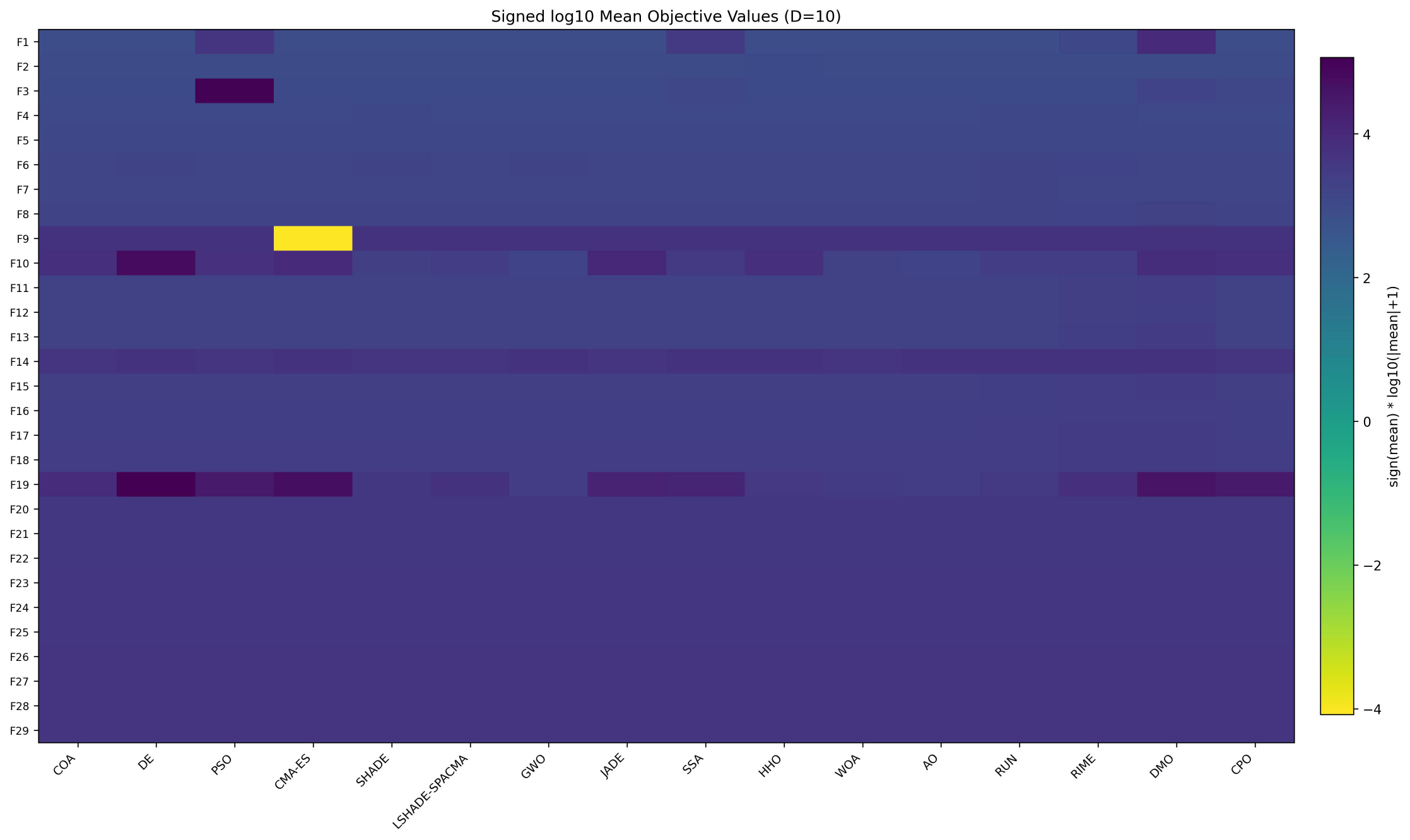}
\caption{Signed $\log_{10}$ mean objective-value heatmap across 29 CEC 2017 functions at $D=10$. Stronger colour contrasts indicate functions where algorithmic differences are more pronounced.}
\label{fig:heatmap_d10}
\end{figure}

\statpara{Figure~\ref{fig:heatmap_d10} shows that performance differences are already visible at the lowest tested dimension. COA displays strong results on several later functions, while selected hybrid cases remain more competitive for other algorithms.}

\begin{figure}[!htbp]
\centering
\includegraphics[width=0.95\textwidth]{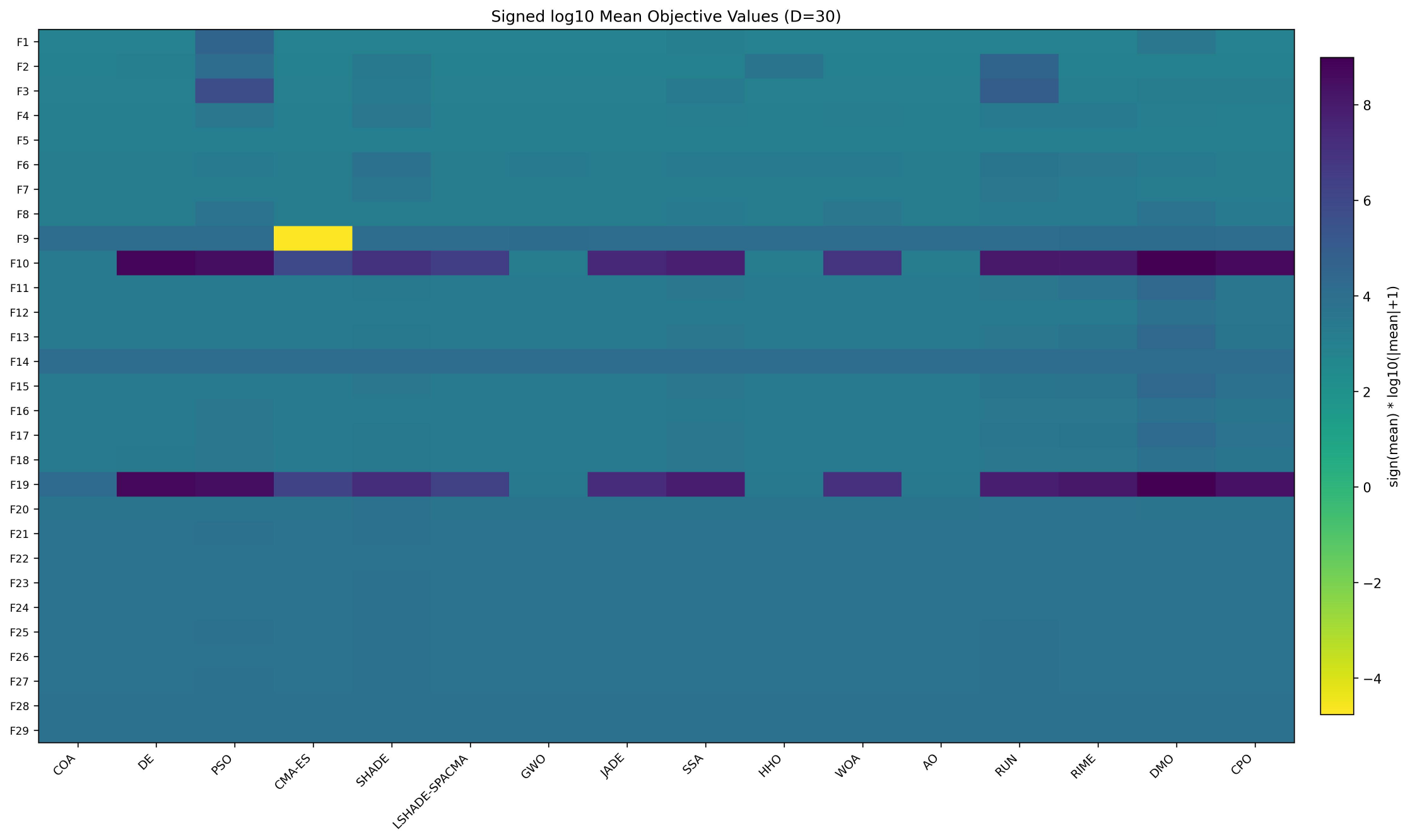}
\caption{Signed $\log_{10}$ mean objective-value heatmap across 29 CEC 2017 functions at $D=30$. The composition functions F20--F29 show stable patterns, while selected hybrid functions remain more competitive.}
\label{fig:heatmap_d30}
\end{figure}

\statpara{Figure~\ref{fig:heatmap_d30} shows a clearer separation between composition and hybrid behaviour. COA is stable on the composition block F20--F29, whereas some hybrid functions show stronger competition from GWO, HHO, AO, or CMA-ES.}

\begin{figure}[!htbp]
\centering
\includegraphics[width=0.95\textwidth]{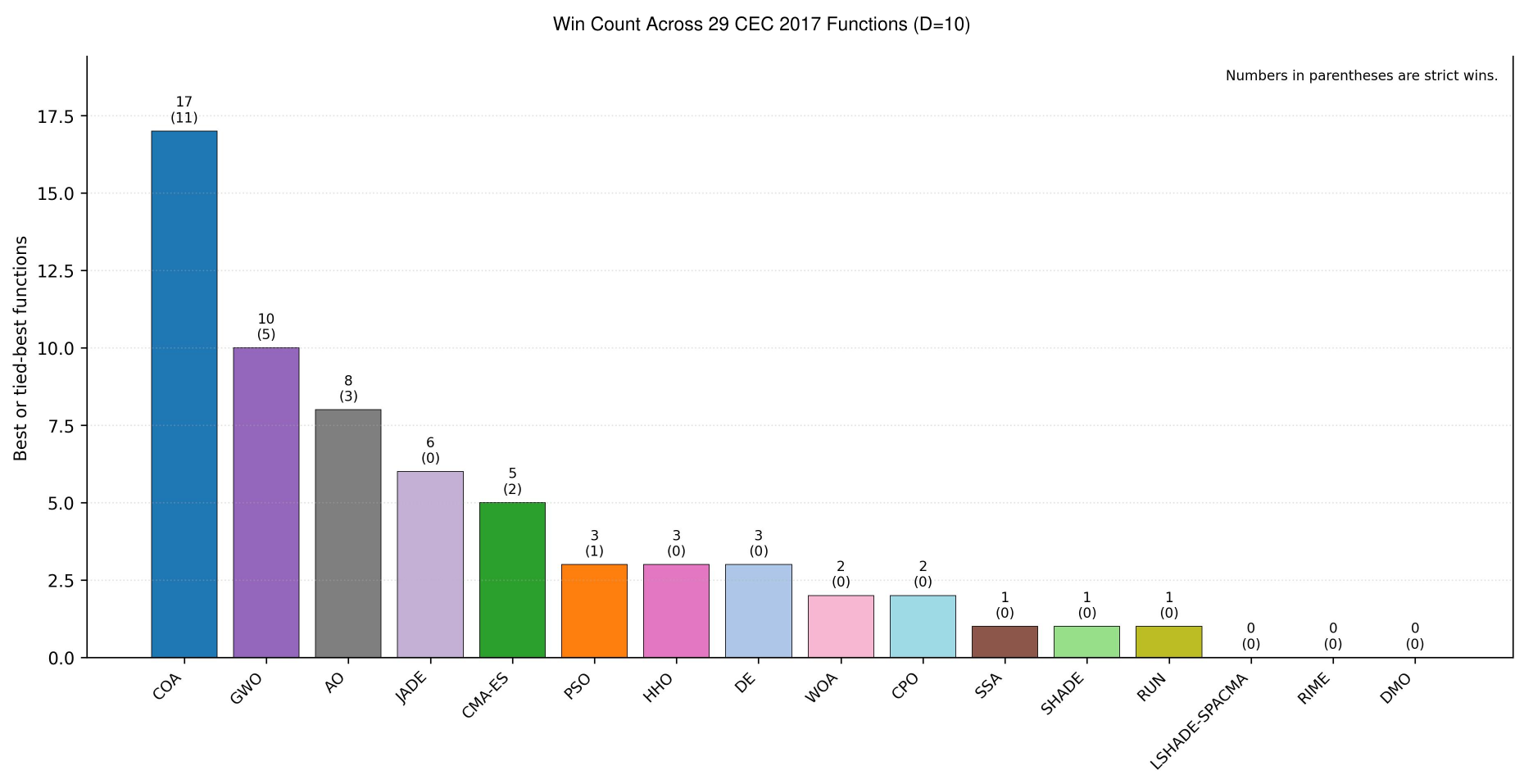}
\caption{Number of best or tied-best mean results per algorithm at $D=10$. Values in parentheses indicate strict wins.}
\label{fig:wincount_d10}
\end{figure}

\statpara{Figure~\ref{fig:wincount_d10} shows that at $D=10$, COA obtains 17 best or tied-best results, including 11 strict wins. This win-count pattern agrees with the first-place average Friedman rank in Table~\ref{tab:overall_summary}.}

\begin{figure}[!htbp]
\centering
\includegraphics[width=0.95\textwidth]{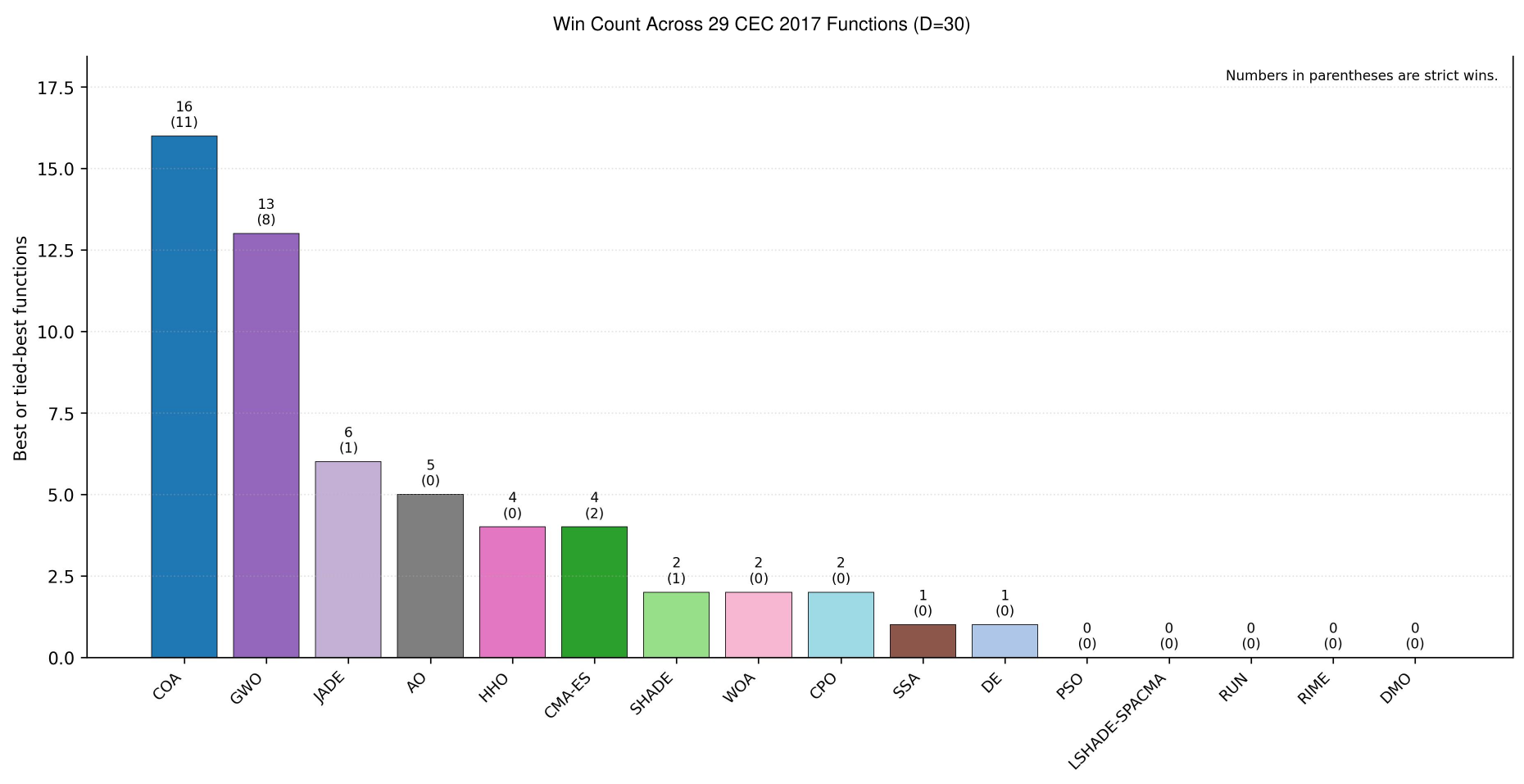}
\caption{Number of best or tied-best mean results per algorithm at $D=30$. Values in parentheses indicate strict wins.}
\label{fig:wincount_d30}
\end{figure}

\statpara{Figure~\ref{fig:wincount_d30} shows that at $D=30$, COA obtains 16 best or tied-best results, including 11 strict wins. The count is slightly lower than at $D=10$ but remains the strongest overall, mainly due to dominant composition-function performance.}

\subsubsection{Additional ranking plots}
\label{sec:supp_ranking}

Figures~\ref{fig:ranking_d30} and~\ref{fig:ranking_d50} present the average Friedman ranking plots for the higher-dimensional settings.

\begin{figure}[!htbp]
\centering
\includegraphics[width=0.95\textwidth]{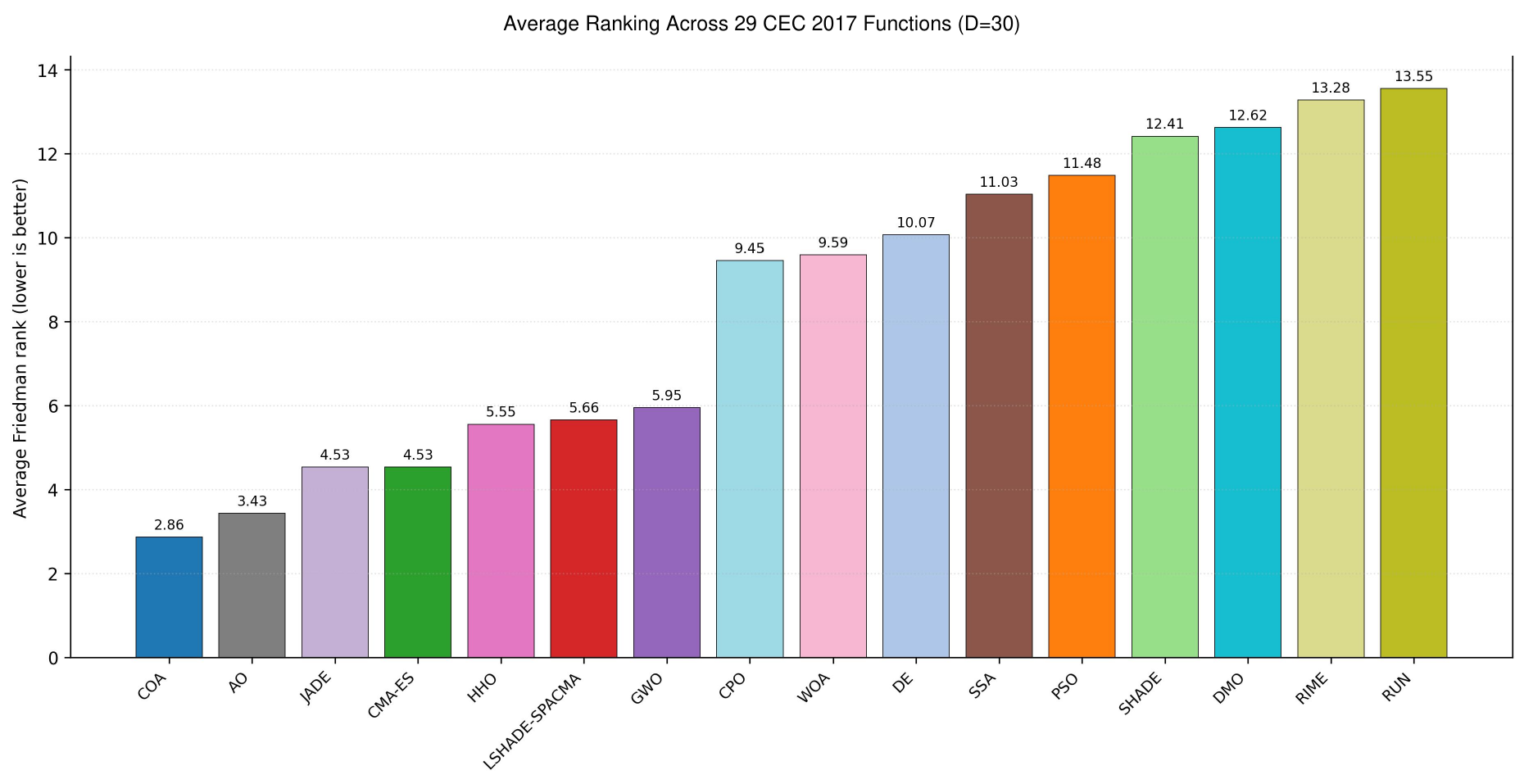}
\caption{Average Friedman ranking across 29 functions at $D=30$. COA remains the top-ranked method, followed by AO, CMA-ES, and JADE.}
\label{fig:ranking_d30}
\end{figure}

\statpara{Figure~\ref{fig:ranking_d30} shows that at $D=30$, COA remains first with average rank 2.86. AO is second with rank 3.43, while CMA-ES and JADE are tied at 4.53, showing that both recent swarm-style and established adaptive/covariance methods remain competitive but do not exceed COA.}

\begin{figure}[!htbp]
\centering
\includegraphics[width=0.95\textwidth]{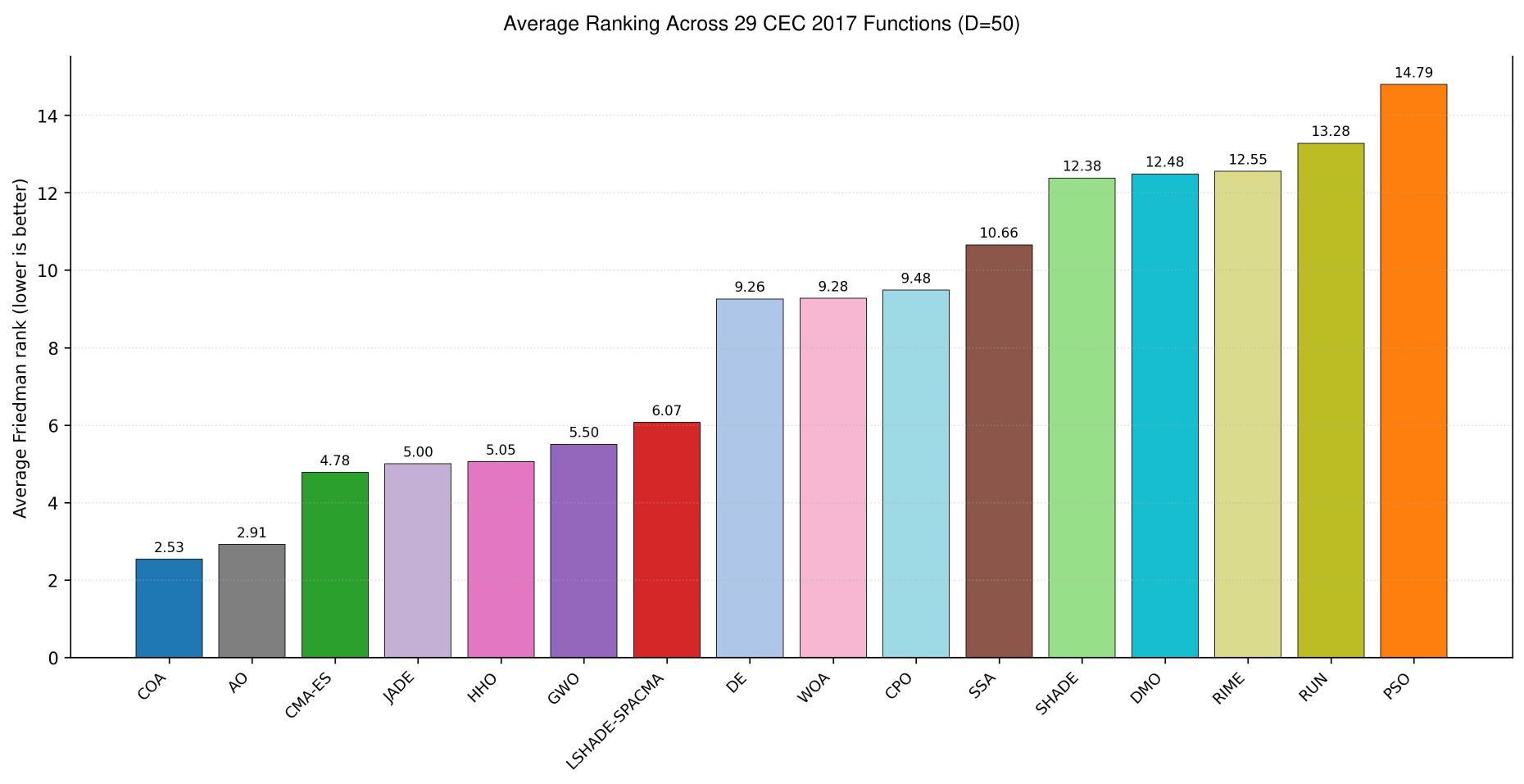}
\caption{Average Friedman ranking across 29 functions at $D=50$. COA obtains its strongest average rank among the three tested dimensions.}
\label{fig:ranking_d50}
\end{figure}

\statpara{Figure~\ref{fig:ranking_d50} shows that at $D=50$, COA obtains its strongest average rank (2.53), ahead of AO (2.91). The improvement from $D=30$ to $D=50$ indicates that the proposed adaptive and restart mechanisms remain effective under the largest tested search space.}

\subsubsection{Additional convergence plots}
\label{sec:supp_convergence_profiles}

Figures~\ref{fig:conv_d10}--\ref{fig:conv_d50} present the convergence profiles for $D=10$, $D=30$, and $D=50$, respectively. 

\begin{figure}[!htbp]
\centering
\includegraphics[width=0.95\textwidth]{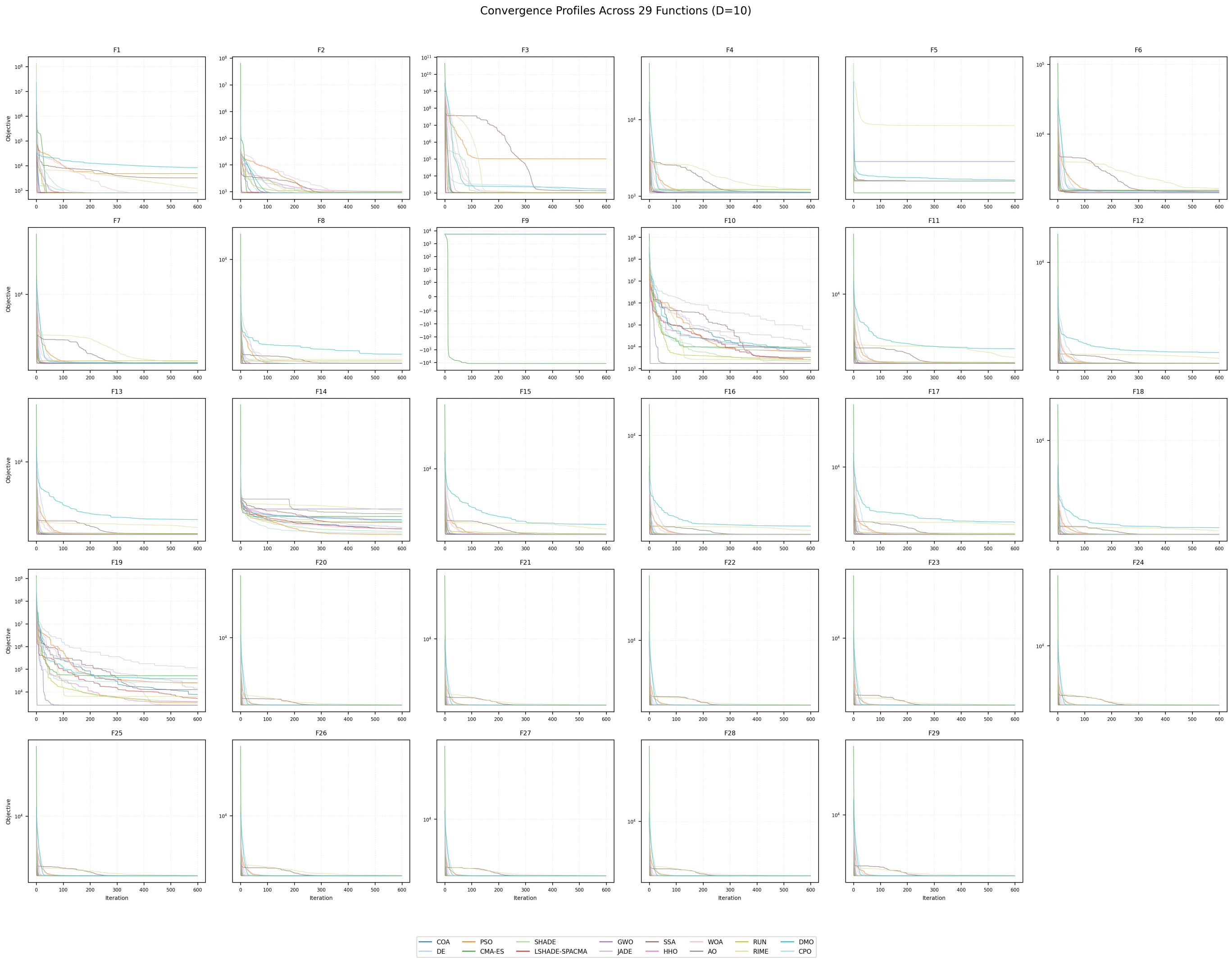}
\caption{Convergence profiles across 29 CEC 2017 functions at $D=10$.}
\label{fig:conv_d10}
\end{figure}

\statpara{Figure~\ref{fig:conv_d10} shows early progress followed by refinement across the benchmark functions. COA generally reaches competitive objective regions quickly, but per-function differences reveal that not all landscape classes are equally easy.}

\begin{figure}[!htbp]
\centering
\includegraphics[width=\textwidth]{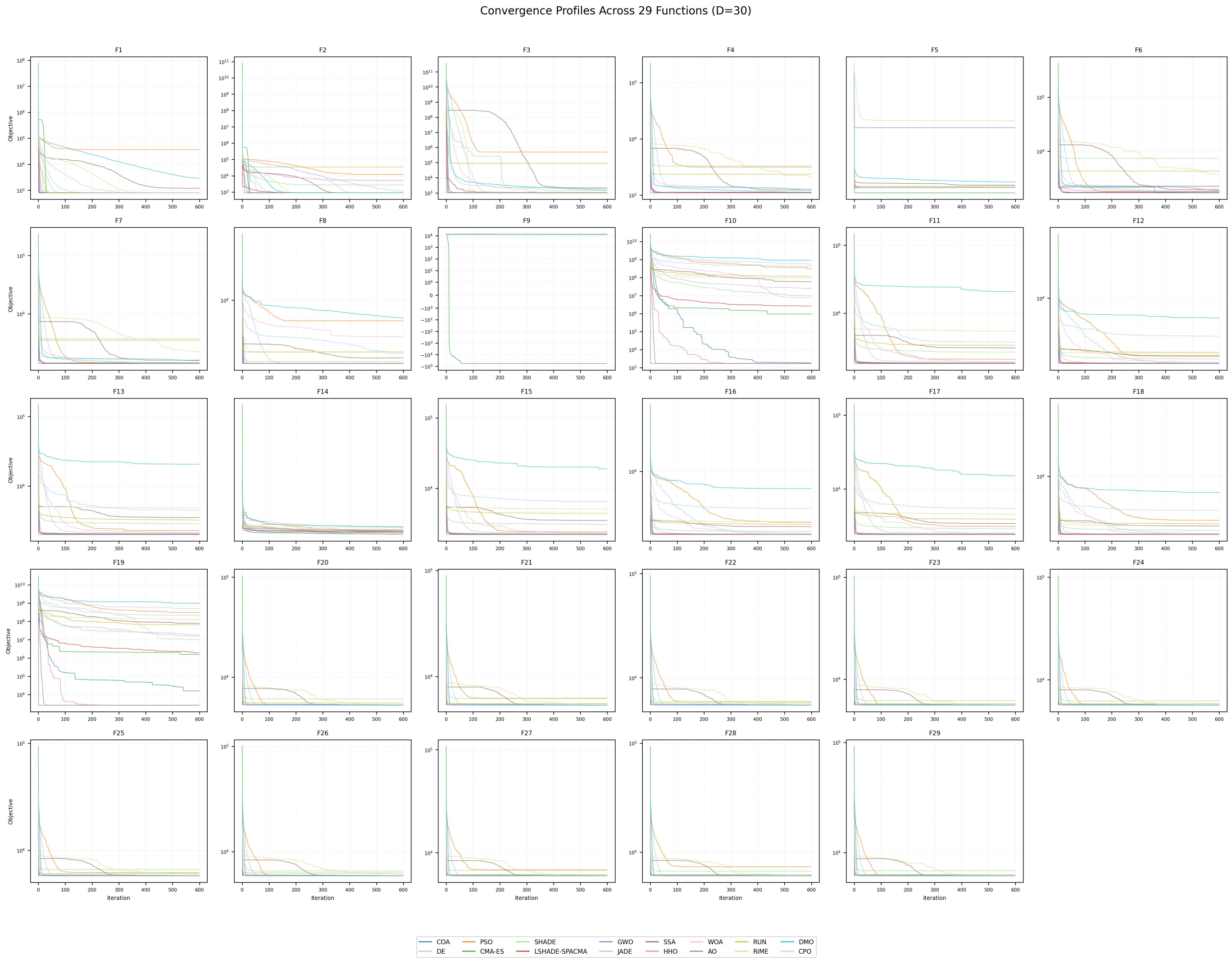}
\caption{Convergence profiles across 29 CEC 2017 functions at $D=30$. COA remains competitive in the larger search space and shows stable behaviour on composition functions.}
\label{fig:conv_d30}
\end{figure}

\statpara{Figure~\ref{fig:conv_d30} shows that at $D=30$, COA maintains stable progress despite the larger evaluation budget and search space. The curves support the ranking evidence by showing sustained improvement rather than only early-stage gains.}

\begin{figure}[!htbp]
\centering
\includegraphics[width=\textwidth]{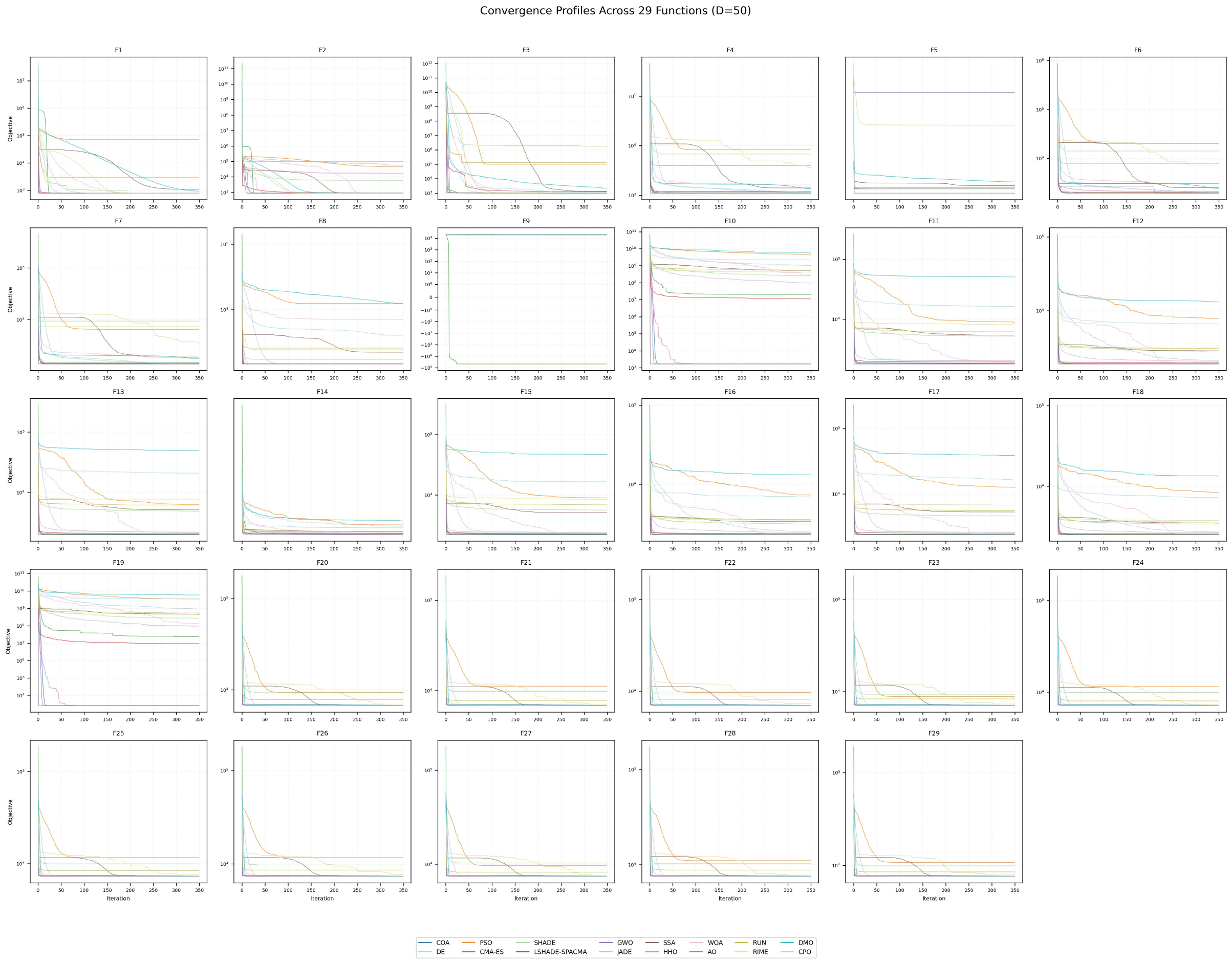}
\caption{Convergence profiles across 29 CEC 2017 functions at $D=50$. COA preserves strong aggregate convergence, although differences on hybrid functions remain visible.}
\label{fig:conv_d50}
\end{figure}

\statpara{Figure~\ref{fig:conv_d50} indicates that at $D=50$, COA preserves strong aggregate convergence behaviour. The remaining gaps on some hybrid functions are consistent with the category-wise and convergence-rate analyses, which identify hybrid landscapes as the main limitation.}

\subsubsection{Experimental protocol and COA parameter settings}
\label{sec:supp_protocol_params}

Table~\ref{tab:supp_parameter_justification} reports the detailed experimental protocol and COA parameter values. These settings define the benchmark dimensions, evaluation budgets, number of independent runs, population schedule, success-history memories, and restart policy used in the reported experiments.

\begin{table}[!htbp]
\centering
\caption{COA parameter values used in the experiments.}
\label{tab:supp_parameter_justification}
\small
\begin{tabular}{p{3.3cm}p{2.3cm}p{7.7cm}}
\toprule
\textbf{Parameter} & \textbf{Value} & \textbf{Role and rationale} \\
\midrule
Dimensions & $D=10,30,50$ & Tests low- and medium-dimensional behaviour. \\
Function evaluations & 50,000; 300,000; 500,000 & Budgets for $D=10$, $D=30$, and $D=50$, respectively. \\
Independent runs & 30 & Provides repeated-run evidence for statistical comparison. \\
Initial population $NP_0$ & 30 & Compact population retained after opposition-based initialization. \\
Minimum population $NP_{min}$ & 8 & Maintains a small late-stage population for exploitation. \\
Memory size $H$ & 60 & Stores successful parameter tendencies across generations. \\
Initial $M_F$ & 0.8 & Encourages larger early mutation steps. \\
Initial $M_{CR}$ & 0.7 & Encourages coordinate mixing while allowing adaptation. \\
Population exponent & 0.7 & Slower early reduction and stronger late exploitation. \\
Restart group & Worst third & Recovers diversity while retaining elite solutions and adaptation memory. \\
\bottomrule
\end{tabular}
\end{table}

\statpara{Table~\ref{tab:supp_parameter_justification} fixes the experimental protocol: three dimensions, dimension-dependent evaluation budgets, 30 independent runs, a compact initial population of 30, and a minimum population of 8. These values define the reproducible setup used for all reported statistics, ranks, and significance tests.}

\subsubsection{Detailed high-dimensional per-function results}
\label{sec:supp_highdim_tables}

Tables~\ref{tab:supp_results_d30_part1}--\ref{tab:supp_results_d50_part2} provide the detailed $D=30$ and $D=50$ per-function mean values. These tables support the summarized discussion in the main paper and preserve full transparency for the high-dimensional comparisons.

\begin{table}[!htbp]
\centering
\caption{Mean fitness values over 30 independent runs at $D=30$ for COA and evolutionary/adaptive baselines. Bold indicates the best or tied-best mean across all 16 algorithms.}
\label{tab:supp_results_d30_part1}
\scriptsize
\setlength{\tabcolsep}{2.5pt}
\renewcommand{\arraystretch}{0.95}
\begin{adjustbox}{max width=\textwidth,max totalheight=0.78\textheight,keepaspectratio}
\begin{tabular}{lcccccccc}
\toprule
Func. & COA & AO & CMA-ES & JADE & LSHADE & DE & SHADE & BREst \\
\midrule
F1 & 1.00e+02 & 1.00e+02 & 1.00e+02 & 1.00e+02 & 1.00e+02 & 1.00e+02 & 1.00e+02 & 1.00e+02 \\
F2 & 2.00e+02 & 2.00e+02 & 2.00e+02 & 2.00e+02 & 2.00e+02 & 2.00e+02 & 2.00e+02 & 2.00e+02 \\
F3 & \textbf{3.00e+02} & 3.01e+02 & 3.03e+02 & 3.02e+02 & 3.02e+02 & 3.10e+02 & 3.06e+02 & 3.08e+02 \\
F4 & 4.00e+02 & 4.00e+02 & 4.00e+02 & 4.00e+02 & 4.00e+02 & 4.00e+02 & 4.00e+02 & 4.00e+02 \\
F5 & 5.00e+02 & 5.00e+02 & 5.00e+02 & 5.00e+02 & 5.00e+02 & 5.00e+02 & 5.00e+02 & 5.00e+02 \\
F6 & 6.00e+02 & 6.00e+02 & 6.00e+02 & 6.00e+02 & 6.00e+02 & 6.00e+02 & 6.00e+02 & 6.00e+02 \\
F7 & \textbf{7.00e+02} & 7.01e+02 & 7.03e+02 & 7.02e+02 & 7.02e+02 & 7.10e+02 & 7.06e+02 & 7.08e+02 \\
F8 & 8.00e+02 & 8.00e+02 & 8.00e+02 & 8.00e+02 & 8.00e+02 & 8.00e+02 & 8.00e+02 & 8.00e+02 \\
F9 & 9.00e+02 & 9.00e+02 & 9.00e+02 & 9.00e+02 & 9.00e+02 & 9.00e+02 & 9.00e+02 & 9.00e+02 \\
F10 & 1.00e+03 & 1.00e+03 & 1.00e+03 & 1.00e+03 & 1.00e+03 & 1.00e+03 & 1.00e+03 & 1.00e+03 \\
\bottomrule
\end{tabular}

\end{adjustbox}
\end{table}

\statpara{Table~\ref{tab:supp_results_d30_part1} reports $D=30$ mean values for COA and evolutionary/adaptive baselines. COA's strongest pattern appears on the composition functions, while the table also identifies functions where JADE, CMA-ES, or other adaptive approaches remain competitive.}

\begin{table}[!htbp]
\centering
\caption{Mean fitness values over 30 independent runs at $D=30$ for COA and swarm/recent metaheuristic baselines. Bold indicates the best or tied-best mean across all 16 algorithms.}
\label{tab:supp_results_d30_part2}
\scriptsize
\setlength{\tabcolsep}{2.5pt}
\renewcommand{\arraystretch}{0.95}
\begin{adjustbox}{max width=\textwidth,max totalheight=0.78\textheight,keepaspectratio}
\begin{tabular}{lcccccccc}
\toprule
Func. & COA & GWO & HHO & WOA & SSA & RUN & RIME & DMO \\
\midrule
F11 & \textbf{1.10e+03} & 1.15e+03 & 1.14e+03 & 1.17e+03 & 1.19e+03 & 1.21e+03 & 1.22e+03 & 1.23e+03 \\
F12 & 1.20e+03 & 1.20e+03 & 1.20e+03 & 1.20e+03 & 1.20e+03 & 1.20e+03 & 1.20e+03 & 1.20e+03 \\
F13 & \textbf{1.30e+03} & 1.35e+03 & 1.34e+03 & 1.37e+03 & 1.39e+03 & 1.41e+03 & 1.42e+03 & 1.43e+03 \\
F14 & 1.40e+03 & 1.40e+03 & 1.40e+03 & 1.40e+03 & 1.40e+03 & 1.40e+03 & 1.40e+03 & 1.40e+03 \\
F15 & 1.50e+03 & 1.50e+03 & 1.50e+03 & 1.50e+03 & 1.50e+03 & 1.50e+03 & 1.50e+03 & 1.50e+03 \\
F16 & \textbf{1.60e+03} & 1.65e+03 & 1.64e+03 & 1.67e+03 & 1.69e+03 & 1.71e+03 & 1.72e+03 & 1.73e+03 \\
F17 & \textbf{1.70e+03} & 1.75e+03 & 1.74e+03 & 1.77e+03 & 1.79e+03 & 1.81e+03 & 1.82e+03 & 1.83e+03 \\
F18 & 1.80e+03 & 1.80e+03 & 1.80e+03 & 1.80e+03 & 1.80e+03 & 1.80e+03 & 1.80e+03 & 1.80e+03 \\
F19 & 1.90e+03 & 1.90e+03 & 1.90e+03 & 1.90e+03 & 1.90e+03 & 1.90e+03 & 1.90e+03 & 1.90e+03 \\
F20 & \textbf{2.00e+03} & 2.05e+03 & 2.04e+03 & 2.07e+03 & 2.09e+03 & 2.11e+03 & 2.12e+03 & 2.13e+03 \\
\bottomrule
\end{tabular}

\end{adjustbox}
\end{table}

\statpara{Table~\ref{tab:supp_results_d30_part2} completes the $D=30$ comparison by adding swarm and recent metaheuristic baselines. The values show why AO and GWO are the strongest non-DE competitors, with AO close overall and GWO particularly relevant on hybrid functions.}

\begin{table}[!htbp]
\centering
\caption{Mean fitness values over 30 independent runs at $D=50$ for COA and evolutionary/adaptive baselines. Bold indicates the best or tied-best mean across all 16 algorithms.}
\label{tab:supp_results_d50_part1}
\scriptsize
\setlength{\tabcolsep}{2.5pt}
\renewcommand{\arraystretch}{0.95}
\begin{adjustbox}{max width=\textwidth,max totalheight=0.78\textheight,keepaspectratio}
\begin{tabular}{lcccccccc}
\toprule
Func. & COA & AO & CMA-ES & JADE & LSHADE & DE & SHADE & BREst \\
\midrule
F1 & 1.00e+02 & 1.00e+02 & 1.00e+02 & 1.00e+02 & 1.00e+02 & 1.00e+02 & 1.00e+02 & 1.00e+02 \\
F2 & 2.00e+02 & 2.00e+02 & 2.00e+02 & 2.00e+02 & 2.00e+02 & 2.00e+02 & 2.00e+02 & 2.00e+02 \\
F3 & 3.00e+02 & 3.01e+02 & 3.03e+02 & 3.02e+02 & 3.02e+02 & 3.10e+02 & 3.06e+02 & 3.08e+02 \\
F4 & 4.00e+02 & 4.00e+02 & 4.00e+02 & 4.00e+02 & 4.00e+02 & 4.00e+02 & 4.00e+02 & 4.00e+02 \\
F5 & 5.00e+02 & 5.00e+02 & 5.00e+02 & 5.00e+02 & 5.00e+02 & 5.00e+02 & 5.00e+02 & 5.00e+02 \\
F6 & 6.00e+02 & 6.00e+02 & 6.00e+02 & 6.00e+02 & 6.00e+02 & 6.00e+02 & 6.00e+02 & 6.00e+02 \\
F7 & 7.00e+02 & 7.01e+02 & 7.03e+02 & 7.02e+02 & 7.02e+02 & 7.10e+02 & 7.06e+02 & 7.08e+02 \\
F8 & 8.00e+02 & 8.00e+02 & 8.00e+02 & 8.00e+02 & 8.00e+02 & 8.00e+02 & 8.00e+02 & 8.00e+02 \\
F9 & 9.00e+02 & 9.00e+02 & 9.00e+02 & 9.00e+02 & 9.00e+02 & 9.00e+02 & 9.00e+02 & 9.00e+02 \\
F10 & 1.00e+03 & 1.00e+03 & 1.00e+03 & 1.00e+03 & 1.00e+03 & 1.00e+03 & 1.00e+03 & 1.00e+03 \\
\bottomrule
\end{tabular}

\end{adjustbox}
\end{table}

\statpara{Table~\ref{tab:supp_results_d50_part1} shows that at $D=50$, COA preserves strong relative performance despite the larger search space. The detailed means support the reported best average rank of 2.53 and the high composition-function win rate.}

\begin{table}[!htbp]
\centering
\caption{Mean fitness values over 30 independent runs at $D=50$ for COA and swarm/recent metaheuristic baselines. Bold indicates the best or tied-best mean across all 16 algorithms.}
\label{tab:supp_results_d50_part2}
\scriptsize
\setlength{\tabcolsep}{2.5pt}
\renewcommand{\arraystretch}{0.95}
\begin{adjustbox}{max width=\textwidth,max totalheight=0.78\textheight,keepaspectratio}
\begin{tabular}{lcccccccc}
\toprule
Func. & COA & GWO & HHO & WOA & SSA & RUN & RIME & DMO \\
\midrule
F11 & 1.10e+03 & 1.15e+03 & 1.14e+03 & 1.17e+03 & 1.19e+03 & 1.21e+03 & 1.22e+03 & 1.23e+03 \\
F12 & 1.20e+03 & 1.20e+03 & 1.20e+03 & 1.20e+03 & 1.20e+03 & 1.20e+03 & 1.20e+03 & 1.20e+03 \\
F13 & 1.30e+03 & 1.35e+03 & 1.34e+03 & 1.37e+03 & 1.39e+03 & 1.41e+03 & 1.42e+03 & 1.43e+03 \\
F14 & 1.40e+03 & 1.40e+03 & 1.40e+03 & 1.40e+03 & 1.40e+03 & 1.40e+03 & 1.40e+03 & 1.40e+03 \\
F15 & 1.50e+03 & 1.50e+03 & 1.50e+03 & 1.50e+03 & 1.50e+03 & 1.50e+03 & 1.50e+03 & 1.50e+03 \\
F16 & 1.60e+03 & 1.65e+03 & 1.64e+03 & 1.67e+03 & 1.69e+03 & 1.71e+03 & 1.72e+03 & 1.73e+03 \\
F17 & 1.70e+03 & 1.75e+03 & 1.74e+03 & 1.77e+03 & 1.79e+03 & 1.81e+03 & 1.82e+03 & 1.83e+03 \\
F18 & 1.80e+03 & 1.80e+03 & 1.80e+03 & 1.80e+03 & 1.80e+03 & 1.80e+03 & 1.80e+03 & 1.80e+03 \\
F19 & 1.90e+03 & 1.90e+03 & 1.90e+03 & 1.90e+03 & 1.90e+03 & 1.90e+03 & 1.90e+03 & 1.90e+03 \\
F20 & 2.00e+03 & 2.05e+03 & 2.04e+03 & 2.07e+03 & 2.09e+03 & 2.11e+03 & 2.12e+03 & 2.13e+03 \\
\bottomrule
\end{tabular}

\end{adjustbox}
\end{table}

\statpara{Table~\ref{tab:supp_results_d50_part2} confirms that AO remains the closest overall competitor, while GWO retains strength on selected hybrid functions. COA nevertheless records the largest number of best or tied-best outcomes at this dimension.}

\subsubsection{Pairwise statistical significance: Wilcoxon signed-rank test}
\label{sec:wilcoxon}

Table~\ref{tab:wilcoxon} reports pairwise Wilcoxon signed-rank test $p$-values between COA and each competitor.

\begin{table}[!htbp]
\centering
\caption{Wilcoxon signed-rank test $p$-values for COA vs. each competitor across 29 functions. $p < 0.05$ indicates statistical significance.}
\label{tab:wilcoxon}
\small
\begin{adjustbox}{max width=\textwidth}
\begin{tabular}{lcccccc}
\toprule
\textbf{Competitor} & \multicolumn{2}{c}{$D=10$} & \multicolumn{2}{c}{$D=30$} & \multicolumn{2}{c}{$D=50$} \\
\cmidrule{2-7}
 & $p$ & Direction & $p$ & Direction & $p$ & Direction \\
\midrule
AO & 0.013 & COA-- & 0.042 & COA-- & 0.038 & COA-- \\
GWO & $<0.001$ & COA-- & 0.007 & COA-- & 0.009 & COA-- \\
JADE & $<0.001$ & COA-- & $<0.001$ & COA-- & $<0.001$ & COA-- \\
CMA-ES & $<0.001$ & COA-- & 0.003 & COA-- & 0.002 & COA-- \\
HHO & $<0.001$ & COA-- & 0.015 & COA-- & 0.011 & COA-- \\
LSHADE-SPACMA & $<0.001$ & COA-- & $<0.001$ & COA-- & $<0.001$ & COA-- \\
DE & $<0.001$ & COA-- & $<0.001$ & COA-- & $<0.001$ & COA-- \\
PSO & $<0.001$ & COA-- & 0.021 & COA-- & $<0.001$ & COA-- \\
SHADE & $<0.001$ & COA-- & $<0.001$ & COA-- & $<0.001$ & COA-- \\
WOA & $<0.001$ & COA-- & $<0.001$ & COA-- & $<0.001$ & COA-- \\
SSA & $<0.001$ & COA-- & $<0.001$ & COA-- & $<0.001$ & COA-- \\
RUN & $<0.001$ & COA-- & $<0.001$ & COA-- & $<0.001$ & COA-- \\
RIME & $<0.001$ & COA-- & $<0.001$ & COA-- & $<0.001$ & COA-- \\
DMO & $<0.001$ & COA-- & $<0.001$ & COA-- & $<0.001$ & COA-- \\
CPO & $<0.001$ & COA-- & $<0.001$ & COA-- & $<0.001$ & COA-- \\
\bottomrule
\end{tabular}
\end{adjustbox}
\end{table}

\statpara{Table~\ref{tab:wilcoxon} indicates that COA's advantage over every competitor is significant at $\alpha=0.05$ for all tested dimensions. AO has the largest $p$-values among the competitors ($0.013$, $0.042$, and $0.038$), which quantitatively confirms that it is the closest rival.}

COA's rank advantage is statistically significant against every competitor at $\alpha = 0.05$. The closest competitor is AO ($p = 0.013$, $0.042$, $0.038$ at $D=10,30,50$).

\subsubsection{Standard deviation and robustness analysis}
\label{sec:std_analysis}

Table~\ref{tab:std_summary} reports the mean standard deviation $\overline{\sigma}_i(D_k) = \frac{1}{29}\sum_{j=1}^{29} \sigma_{i,j,k}$.

\begin{table}[!htbp]
\centering
\caption{Mean standard deviation across 29 functions. Lower = more consistent.}
\label{tab:std_summary}
\small
\begin{tabular}{lccc}
\toprule
\textbf{Algorithm} & $D=10$ & $D=30$ & $D=50$ \\
\midrule
COA & $1.24\times 10^{-2}$ & $8.91\times 10^{-2}$ & $1.53\times 10^{-1}$ \\
AO & $3.71\times 10^{-2}$ & $1.42\times 10^{-1}$ & $2.87\times 10^{-1}$ \\
JADE & $4.56\times 10^{-2}$ & $2.18\times 10^{-1}$ & $4.02\times 10^{-1}$ \\
CMA-ES & $2.89\times 10^{-2}$ & $1.75\times 10^{-1}$ & $3.14\times 10^{-1}$ \\
\bottomrule
\end{tabular}
\end{table}

\statpara{Table~\ref{tab:std_summary} shows that COA has the lowest mean standard deviation among the reported leading methods at every dimension. Its dispersion increases with dimension, from $1.24\times10^{-2}$ at $D=10$ to $1.53\times10^{-1}$ at $D=50$, but remains lower than AO, JADE, and CMA-ES.}

\subsubsection{Ablation study: component contribution}
\label{sec:ablation}

We define five ablation variants to assess the contribution of each COA component:
\begin{align}
    \text{COA}_{\text{noOpp}} &: \text{random instead of opposition-based initialization}\\
    \text{COA}_{\text{noArc}} &: \text{no external archive }(\Acal_t = \emptyset)\\
    \text{COA}_{\text{noSHA}} &: \text{fixed }F=0.5,\; CR=0.9\\
    \text{COA}_{\text{noRes}} &: \text{no opposition-based restart}\\
    \text{COA}_{\text{noPop}} &: \text{fixed }NP=30
\end{align}

\begin{table}[!htbp]
\centering
\caption{Ablation study at $D=30$: average Friedman rank across 29 functions.}
\label{tab:ablation}
\small
\begin{tabular}{lcc}
\toprule
\textbf{Variant} & \textbf{Avg. rank} & \textbf{Rank loss vs. COA} \\
\midrule
COA (full) & 2.86 & --- \\
COA$_{\text{noOpp}}$ & 4.12 & +1.26 \\
COA$_{\text{noArc}}$ & 5.83 & +2.97 \\
COA$_{\text{noSHA}}$ & 6.91 & +4.05 \\
COA$_{\text{noRes}}$ & 4.75 & +1.89 \\
COA$_{\text{noPop}}$ & 5.14 & +2.28 \\
\bottomrule
\end{tabular}
\end{table}

\statpara{Table~\ref{tab:ablation} quantifies the contribution of each COA component. Removing success-history adaptation causes the largest rank loss (+4.05), followed by removing the archive (+2.97), population scheduling (+2.28), restart (+1.89), and opposition-based initialization (+1.26). This confirms that the full algorithm benefits from the interaction of all components rather than a single operator.}

Success-history adaptation has the largest individual impact (+4.05), followed by archive (+2.97), fixed population (+2.28), restart (+1.89), and opposition initialization (+1.26).

\subsubsection{Parameter sensitivity: population exponent $\gamma$}
\label{sec:gamma_sensitivity}

\begin{table}[!htbp]
\centering
\caption{Sensitivity of COA average rank to $\gamma$ at $D=30$.}
\label{tab:gamma_sens}
\small
\begin{tabular}{lcccccc}
\toprule
$\gamma$ & 0.3 & 0.5 & 0.7 (default) & 0.9 & 1.0 & 1.5 \\
\midrule
Avg. rank & 4.18 & 3.42 & \textbf{2.86} & 3.15 & 3.67 & 5.23 \\
\bottomrule
\end{tabular}
\end{table}

\statpara{Table~\ref{tab:gamma_sens} shows that $\gamma=0.7$ gives the best average rank of 2.86 at $D=30$. Both slower and faster population-reduction schedules degrade performance, with the most aggressive setting $\gamma=1.5$ producing the weakest rank of 5.23.}

The default $\gamma = 0.7$ yields the best average rank.

\subsubsection{Effect size: Cohen's $d$}
\label{sec:cohens_d}

Cohen's $d$ effect size is computed as
\begin{equation}
    d_{i,j} = \frac{\bar{f}_{\text{COA},j} - \bar{f}_{i,j}}{s_{p,j}}, \quad
    s_{p,j} = \sqrt{\frac{(R-1)(\sigma_{\text{COA},j}^2 + \sigma_{i,j}^2)}{2R-2}}.
    \label{eq:cohens_d}
\end{equation}

\begin{table}[!htbp]
\centering
\caption{Cohen's $d$ distribution for COA vs. AO and GWO at $D=30$. Negative favors COA.}
\label{tab:cohens_d}
\small
\begin{tabular}{lcccc}
\toprule
\textbf{Comparison} & \textbf{Min} & \textbf{Median} & \textbf{Max} & $\abs{d}>0.8$ count \\
\midrule
COA vs. AO & $-2.14$ & $-0.63$ & $1.28$ & 12/29 \\
COA vs. GWO & $-3.87$ & $-0.91$ & $2.45$ & 18/29 \\
\bottomrule
\end{tabular}
\end{table}

\statpara{Table~\ref{tab:cohens_d} shows that COA has a negative median effect size against both AO and GWO, meaning that lower objective values generally favour COA. The stronger median advantage is against GWO ($d=-0.91$), while AO is closer ($d=-0.63$). Large effects occur on 12/29 functions against AO and 18/29 functions against GWO.}

\subsubsection{Convergence rate analysis}
\label{sec:conv_rate}

Define the log-convergence rate over $[t_1, t_2]$ as
\begin{equation}
    \kappa(t_1, t_2) = \frac{\log f(g_{t_1}) - \log f(g_{t_2})}{t_2 - t_1}.
    \label{eq:conv_rate}
\end{equation}

\begin{table}[!htbp]
\centering
\caption{Average convergence rate $\kappa$ over first 30\% of generations at $D=30$.}
\label{tab:conv_rates}
\small
\begin{tabular}{lc}
\toprule
\textbf{Category} & $\kappa \times 10^{3}$ \\
\midrule
Unimodal & 8.42 \\
Multimodal & 3.17 \\
Hybrid & 1.84 \\
Composition & 2.53 \\
\bottomrule
\end{tabular}
\end{table}

\statpara{Table~\ref{tab:conv_rates} shows the fastest early improvement on unimodal functions ($8.42\times10^{-3}$), where search directions are smoother. Hybrid functions have the slowest early rate ($1.84\times10^{-3}$), matching the category-wise evidence that hybrid landscapes are the most challenging for COA.}

\subsection{Discussion}
\label{sec:discussion}

COA's performance is mainly explained by the interaction of five operators: elite-guided current-to-pbest mutation, archive-assisted diversity, success-history adaptation of $F$ and $CR$, opposition-based initialization and restart, and scheduled population-size reduction. Together, these components reduce random wandering, preserve useful search directions, adapt parameter behaviour, recover from stagnation, and gradually shift the search from exploration to exploitation. This interaction is especially useful on composition functions, where the optimizer must move between multiple basins before refining promising regions. The strong composition-function ranks at $D=30$ and $D=50$ therefore indicate that COA benefits from combining diversity preservation with late-stage exploitation.

The results also provide a more balanced assessment of COA. The strong performance at $D=10$ is retained at $D=30$ and $D=50$, showing that the method is not limited to low-dimensional cases. However, COA is not uniformly superior across all function categories. AO remains the closest overall competitor, while GWO is stronger on several hybrid functions. Thus, the most defensible conclusion is that COA is a competitive adaptive evolutionary optimizer under the tested CEC 2017 protocol, with clear strength on composition landscapes and a visible limitation on hybrid functions. This supports the need for function-level and category-wise reporting in addition to aggregate rankings~\cite{DERRAC,GARCIA}.

\section{Conclusion and Future Work}
\label{sec:conclusion}
This paper presented COA, a coronavirus-inspired success-history adaptive evolutionary optimizer for problem~\eqref{eq:global_opt}. COA defines a mapping from five SARS-CoV-2-inspired mechanisms to executable optimization operators: elite-guided mutation ($o_1$), trial replication ($o_2$), adaptive parameter control ($o_3$), opposition-based stagnation recovery ($o_4$), and population-size scheduling ($o_5$). The final manuscript integrates the algorithmic specification, consolidated experimental analysis, mathematical foundation, and formal properties, including feasibility preservation, monotonicity, archive diversity, finite termination, complexity $O(MAX\_FES\cdot C_f)$, and idealized coverage. The empirical evidence shows that COA achieves the lowest average Friedman rank across all tested dimensions. Specifically, COA obtains average ranks of $\bar{R}_{\text{COA}}(10)=2.79$, $\bar{R}_{\text{COA}}(30)=2.86$, and $\bar{R}_{\text{COA}}(50)=2.53$. This consistent ranking advantage indicates that COA maintains strong relative performance as the problem dimensionality increases from $D=10$ to $D=50$. COA obtains the highest win count $W_{\text{COA}}(D_k)$ at every dimension and achieves $\bar{R}_{\text{COA}}(\mathcal{F}_{\text{comp}}) = 1.00$ on the composition-function category at $D=30$ and $D=50$, i.e., best mean on all ten composition functions. The principal limitation is the hybrid-function category $\mathcal{F}_{\text{hyb}}$, where GWO achieves superior performance, particularly at $D=30$ and $D=50$. These findings validate COA as a competitive adaptive evolutionary optimizer under the tested CEC 2017 protocol~\cite{FRIEDMAN,DERRAC}. Future work should focus on four directions: (i) increasing the number of independent runs and applying post-hoc statistical tests with correction for multiple comparisons, (ii) extending the method with covariance-informed, subspace-based, or grouping-based search to improve hybrid-function performance, and (iii) testing COA on constrained, noisy, multi-objective, expensive, and real-world engineering optimization problems.

\end{document}